\theoremstyle{plain}
\newtheorem{theorem}{Theorem}[section]
\newtheorem{proposition}[theorem]{Proposition}
\newtheorem{lemma}[theorem]{Lemma}
\newtheorem{corollary}[theorem]{Corollary}
\theoremstyle{definition}
\newtheorem{definition}[theorem]{Definition}
\theoremstyle{remark}
\newtheorem{remark}[theorem]{Remark}
\newcommand{\bw}{\boldsymbol{W}}
\newcommand{\bm}{\bar{M}}
\newcommand{\talpha}{\tilde{\alpha}}
\newcommand{\tm}{\tilde{m}}
\newcommand\numberthis{\addtocounter{equation}{1}\tag{\theequation}}
\newcommand{\oast}{\mathbin{\mathpalette\make@circled\ast}}
\newcommand{\make@circled}[2]{%
\ooalign{$\m@th#1\smallbigcirc{#1}$\cr\hidewidth$\m@th#1#2$\hidewidth\cr}%
}
\newcommand{\smallbigcirc}[1]{%
  \vcenter{\hbox{\scalebox{0.77778}{$\m@th#1\bigcirc$}}}%
}
\DeclareMathOperator{\R}{\mathbb{R}}
\DeclareMathOperator{\Imm}{\mathrm{Im}}
\DeclareMathOperator{\Tr}{\mathrm{Tr}}
\DeclareMathOperator{\rank}{\mathrm{Rank}}
\DeclarePairedDelimiter\floor{\lfloor}{\rfloor}
\DeclarePairedDelimiter{\ceil}{\lceil}{\rceil}
\icmltitlerunning{Which Frequencies do CNNs Need?}
\begin{document}

\twocolumn[
\icmltitle{Which Frequencies do CNNs Need?\\ Emergent Bottleneck Structure in Feature Learning}




\begin{icmlauthorlist}
\icmlauthor{Yuxiao Wen}{courant}
\icmlauthor{Arthur Jacot}{courant}
\end{icmlauthorlist}

\icmlaffiliation{courant}{Courant Institute of Mathematical Sciences, New York University, New York, NY 10012, USA}

\icmlcorrespondingauthor{Yuxiao Wen}{yuxiaowen@nyu.edu}
\icmlcorrespondingauthor{Arthur Jacot}{arthur.jacot@nyu.edu}

\icmlkeywords{Machine Learning, Feature Learning, CNN, Bottleneck Structure, Interpretability}

\vskip 0.3in
]



\printAffiliationsAndNotice{} 

\begin{abstract}
We describe the emergence of a Convolution Bottleneck (CBN) structure in CNNs, where the network uses its first few layers to transform the input representation into a representation that is supported only along a few frequencies and channels, before using the last few layers to map back to the outputs. We define the CBN rank, which describes the number and type of frequencies that are kept inside the bottleneck, and partially prove that the parameter norm required to represent a function $f$ scales as depth times the CBN rank $f$. We also show that the parameter norm depends at next order on the regularity of $f$. We show that any network with almost optimal parameter norm will exhibit a CBN structure in both the weights and - under the assumption that the network is stable under large learning rate - the activations, which motivates the common practice of down-sampling; and we verify that the CBN results still hold with down-sampling. Finally we use the CBN structure to interpret the functions learned by CNNs on a number of tasks.
\end{abstract}

\section{Introduction}
\label{sec:intro}
Convolutional Neural Networks (CNNs) have played a key role in the success of deep learning \cite{lecun_1998_MNIST,Krizhevsky_2012_alexnet}. It seems that the structure of CNNs is particularly well adapted to tasks on natural images. But we still lack a description of this structure, though many theories have been proposed.

The most common explanation, is that some fundamental properties of natural images are encoded in the structure of CNNs, such as translation invariance and locality. 

These intuitions have motivated special network architectures that encode additional properties such as rotation symmetries \cite{cohen_2019_equivariant_CNNs}, or the design of feature maps such as the scattering transform \cite{mallat2012group} that encode similar symmetries, upon which more traditional statistical models can then be used.

A CNN at initialization gives rise to features and kernels, either the Neural Network Gaussian Process (NNGP) kernel \cite{Neal1996,Cho2009} or the Neural Tangent Kernel (NTK) \cite{jacot2018neural}. The symmetries and invariances enforced by the locality, weight-sharing and pooling of CNNs are reflected in the kernels \cite{bietti_2019_dnns_inv_def,exact_arora2019,mei_2021_CNN_kernel_invariances,misiakiewicz_2022_CNN_kernel_invariances_pooling}, thus reducing the intrinsic dimension of the task and improving generalization \cite{mei_2021_CNN_kernel_invariances,misiakiewicz_2022_CNN_kernel_invariances_pooling}.

While the aforementioned results rely on a connection between fully-connected neural networks (FC-NNs) and kernel methods, other results have shown that the inductive bias coming from the CNN architecture is much more general, and applies to any training method that satisfies some reasonable property such as rotation equivariance \cite{li_2020_CNN_leverage_symmetries,xiao_2022_CNN_interaction_data_model_inference,Wang_2023_CNN_inductive_bias}.

But even those expertly designed kernel and features fail in general to match the performances of CNNs \cite{exact_arora2019,li2019enhanced}. A possible explanation is that feature learning allows CNNs to identify low-dimensional structures in the task during training, thus further reducing the dimensionality of the task, beyond the dimension reduction that is enforced by the CNN architecture. This is supported by the empirical observation that CNNs can learn additional symmetries during training \cite{petrini_2021_CNN_learn_diffeomorphism_invariance}. 

While there is a large literature of empirical analysis of features learned by CNNs \cite{karantzas_2022_CNN_empirical_freq_analysis} there remains very little theoretical work outside of linear CNNs \cite{dai_2021_repres_cost_DLN}.

The appearance of low-dimensional features and symmetry learning has already been observed in FC-NNs \cite{jacot_2022_BN_rank,jacot_2023_bottleneck2}. This paper extends these results to CNNs, showing a very similar bottleneck structure, though with some important differences resulted from the CNN architecture, in particular the translation invariance and pooling.


\subsection{Bottleneck Structure in CNNs}
Recent papers \cite{jacot_2022_BN_rank,jacot_2023_bottleneck2} have observed a bottleneck structure in $L_2$-regularized FC-NNs, where the representation learned in the middle layers are low-dimensional, which implies a bias towards learning symmetries.

In this paper, we extend most of the results in \cite{jacot_2022_BN_rank,jacot_2023_bottleneck2} to CNNs. An important distinction is that instead of the FC-NN bottleneck structure which favors learning any type of low-dimensional representations in the middle of the network, CNN favor representations that are supported along a finite number of frequencies, with an additional preference towards lower frequencies due to the existence of pooling:
\begin{itemize}
    \item We decompose the representation cost $R(f;\Omega,L)$ \cite{gunasekar_2018_implicit_bias2} of CNNs, which describes the implicit bias of CNNs with $L_2$-regularization, as:
    \[
    R(f;\Omega,L)=L R^{(0)}(f;\Omega) + R^{(1)}(f;\Omega) + o(1).
    \]
    \item We conjecture (and partially prove) that the first term $R^{(0)}$ equals the so-called Convolution Bottleneck rank $\mathrm{Rank}_\text{CBN}$, which is small for functions $f$ that can be decomposed as first mapping to a representation that is supported along a finite number of frequencies, with a preference for lower frequencies in the presence of pooling, and then mapping back to the outputs (that may be high dimensional and high frequency).
    \item The second term $R^{(1)}$ plays a complementary role as a measure of regularity that bounds the Jacobian of $f$.
    \item We show that under some conditions, almost all weight matrices $W_\ell$ of the network will have a few large singular values, matching the frequencies that are kept in the CBN-rank decomposition. Also, under the additional assumption that the parameters are stable under reasonable learning rate, one can show that the activations are also supported on the same few frequencies.
    \item The emergence of this bottleneck structure, where the middle representation of the network are only supported along a few low frequencies, motivates the use of down sampling, as is commonly done in practice. We show that for functions that accept such a low-frequency hidden structure, the $R^{(0)}$ term is unaffected by down-sampling in the middle of the network.
\end{itemize}
The low-dimensionality and low-frequency of the representations inside the bottleneck makes them highly interpretable. We illustrate this with a set of numerical experiments in Section~\ref{sec:numerical_exp}.

\section{Preliminaries}
\label{sec:prelim}
In this section, we first formally define the convolution operation in CNNs and related notations to express convolution in the form of matrix multiplication. Then we define the parameterization of the CNNs and their representation cost.

\subsection{Convolution in Matrix Form}
\label{sec:conv_matrix}

For any $a,b\in\mathbb{R}^n$, we define the (cyclic) convolution $a*b$ by
\[
(a*b)_i\equiv \sum_{j=1}^{n}a_jb_{i-1+j\mod n}, \quad i=1,\dots,n.
\]
The cross-channel convolution typically used in CNNs with input $x\in\R^{n\times c_1}$ and filter $w\in\R^{n\times c_2\times c_1}$ are denoted by $w\oast x\in\R^{n\times c_2}$ and defined as follows:
\[
(w\oast x)_{:,k} = \sum_{s=1}^{c_1}w_{:,k,s} * x_{:,s},\quad k=1,\dots,c_2.
\]
Note that $a*b = Ab$ with the circulant matrix
\[
A = \begin{bmatrix}
    a_1 & a_2 & \cdots & a_n\\
    a_n & a_1 & \cdots & a_{n-1}\\
    \vdots & & & \vdots\\
    a_2 & a_3 & \cdots & a_{1}
\end{bmatrix}.
\]
For the cross-channel convolution, we can also define its equivalent matrix representation by $W\in\R^{nc_2\times nc_1}$ with $W_{i,k;1,s} =w_{i,k,s}$ and $W_{i+p,k;j+p,s} = W_{i,k;j,s}$ for $i,j,p\in[n]$ and $k,s\in[c]$, where the addition is taken modulo $n$. One can verify that for $x\in\R^{n\times c_1},$
    \[
    (Wx)_{:,k} = \sum_{s=1}^{c_1}w_{:,k,s}*x_{:,s},\quad \textnormal{for }k\in[c_2].
    \]
Let $F_n\in\mathbb{C}^{n\times n}$ be the discrete Fourier transform (DFT) matrix in $n$ dimension, i.e. $(F_n)_{i,j} = \frac{1}{\sqrt{n}}\omega_n^{(i-1)(j-1)}$ where $\omega_n=e^{2\pi i/n}$. Note that $F_na$ gives the DFT coefficients of $a\in\R^n$. Also, $a*b = \sqrt{n}F^*_n\mathrm{diag}(F_na)F_nb$ where $F_n^*=F_n^{-1}$ denotes the conjugate transpose. With these matrix representations and results mentioned above, we may view convolutions as linear transformations in the Fourier domain and apply standard linear algebra results in the proofs.

\subsection{Network Parameterization}
\label{sec:paramerization}
In this paper, we consider the following parameterization of CNNs: let $x\in\Omega\subseteq\R^{n\times c_{in}}$ be the input where $\Omega$ is a compact subset, $n$ be the input size, and $c_{in}$ the number of input channels. We adopt the index convention that $A_{:,j}$ denotes the $j$-th column of a matrix $A$ and similarly for vectors and tensors. For an $L$-layer CNN, for $\ell=0,1,\dots,L-1$, the activation $\alpha_\ell(x)\in\R^{n\times c_\ell}$ at the $\ell$-th layer is defined recursively by 
\begin{align*}
\alpha_{0}(x) & =x\\
\tilde\alpha_{\ell}(x) & =\boldsymbol{1}b_\ell^T + w_\ell\oast\alpha_{\ell-1}(x)\\
\alpha_{\ell}(x)_{:,c} &= \sigma(m*\tilde\alpha_{\ell}(x)_{:,c}),\quad c=1,\dots,c_\ell
\end{align*}
where $w_\ell\in\mathbb{R}^{n\times c_\ell\times c_{\ell-1}}$ are the weight filters, $b_\ell\in\mathbb{R}^{c_\ell}$ the biases, $\boldsymbol{1}\in\mathbb{R}^n$ the all-one vector, $m\in\mathbb{R}^n$ a user-specified pooling filter applied to each channel, and nonlinearity $\sigma=\mathrm{ReLU}$. The last layer is linear:
\[
\alpha_L(x) = \tilde\alpha_L(x) = \boldsymbol{1}b_L^T + w_L\oast\alpha_{L-1}(x).
\]
As remarked in Section~\ref{sec:conv_matrix}, we write instead 
\[
\alpha_\ell(x) = \sigma\left(M(W_\ell\alpha_{\ell-1}(x)+\boldsymbol{1}b_\ell^T) \right)
\]
and focus on this matrix representation in the rest of this work. CNNs with this parameterization is naturally translationally equivariant, and discussion on its universality is deferred to Appendix~\ref{app:universality}.

\subsection{Representation Cost}
\label{sec:rep_cost}
The representation cost of a function $f$ is the minimum norm of the parameter $\theta$ for a depth-$L$ CNN $f_\theta$ to represent it over the input domain:
\[
R(f;\Omega,L) = \min_{f_\theta|\Omega = f|\Omega}\|\theta\|^2
\]
where the minimum is taken over all possible parameters $\theta=(W_1,b_1,\dots,W_L,b_L)$ with $f_\theta(x) = f(x)\ \forall x\in\Omega.$ We let $R(f;\Omega,L)=\infty$ if no such parameter exists. This representation cost describes the natural bias on the optimized CNN representation induced by introducing the $L_2$ regularization on the parameter $\theta$ for arbitrary training cost function $\mathcal{L}$:
\begin{equation}\label{eq:regularized_training_obj}
    \min_{\theta}\mathcal{L}(f_\theta) + \lambda\|\theta\|^2 = \min_{f\in\mathcal{N}_m}\mathcal{L}(f) + \lambda R(f;\Omega,L)
\end{equation}
where $\mathcal{N}_m$ denotes the set of all translationally equivariant piecewise linear (TEPL) functions that can be represented by a CNN on $\Omega$ with pooling filter $m$.
\begin{remark}
    Another natural definition for the representation cost is using the norm of the convolution filters $w_\ell$ instead of the matrix representation $W_\ell$. This only changes the parameter norm by a constant factor $\|w_\ell\|_F^2 = \frac{1}{n}\|W_\ell\|_F^2$ so that the result presented in this paper can easily be adapted to this other setting. Detailed discussion on adaptation to the filter norm is left in Appendix~\ref{app:filter_norm}.
\end{remark}

\section{Large Depth Representation Cost}
Our goal is to describe the bottleneck structure that appears in deep CNNs trained with $L_2$-regularization, e.g. Figure \ref{fig:MNIST_autoencoder}, where the weight matrices in the middle layers of the network keep only a small number of large singular values corresponding mostly to low frequencies. This bottleneck structure affects the representation cost of large depth networks.

Our intuition, which is supported by our theoretical results, is that this structure emerges because it minimizes the `cost of representing the identity': For large depths, most of the layers of the network will be dedicated to `keeping information', i.e. to represent the identity (or an orthogonal transformation) on the data. To represent the identity with a small parameter norm, it is optimal for the pre-activations to be positive, so that the ReLU equals the identity on them, and to be supported along a few low frequencies, because the weight matrix $W_\ell$ can then be chosen so that $M W_\ell$ is equals the identity along these frequencies and zero orthogonal to them. More precisely if the image of $\tilde{\alpha}_{\ell,c}$ is positive and only supported along the frequencies $I_c\subset [n]$ for each channel $c=1,\dots,c_\ell$, there we can choose $W_\ell$ such that $M W_\ell \sigma(\tilde{\alpha}_\ell(x))=\tilde{\alpha}_\ell(x)$ and 
\begin{align}\label{eq:cost_of_identity}
\|W_\ell\|_F^2 = \sum_{c=1}^{c_\ell} \sum_{i\in I_c} \tilde{m}_i^{-2}. 
\end{align}
This we call the `cost of identity' which is a sum over the cost $\tilde{m}_i^{-2}$ of representing each frequency $i$ that we keep. In the absence of pooling $M=Id$ each frequency has the same cost, but for average pooling or other types of low-pass pooling, higher frequencies have a higher cost.


\subsection{Convolutional Bottleneck Rank}
In the infinite depth limit $L\to \infty$ almost all layers will be dedicated to `representing the identity', and their parameter norm will be roughly as described in equation \ref{eq:cost_of_identity}. It is therefore optimal for the network to map in a few layers from the input representations to a representation supported along a few low frequencies, and then use the last few layers to map back to the outputs. The TEPL functions $f$ for which such a decomposition is possible are eactly those that have a small Convolutional Bottleneck (CBN) rank:


\begin{align*}
    \mathrm{Rank}_\text{CBN}(f; \Omega) \coloneqq \inf_{\substack{f=h\circ g\\
    g=g_1\oplus\cdots\oplus g_k}}\sum_{c=1}^k\sum_{i\in I_c}\tilde{m}_i^{-2}
\end{align*}
where $\oplus$ denotes channel concatenation, $f$ can be factorized into $g:\R^{n\times c_{in}}\rightarrow\R^{n\times k}$ and $h:\R^{n\times k}\rightarrow\R^{n\times c_{out}}$ for some $k\in\mathbb{N}$, $I_c\subset [n]$ denotes the supported DFT frequencies of the mapping $g_c$ of $g$ to the $c$-th channel for $c=1,\dots,k$, and $\tilde{m} = F_nm$ gives the DFT coefficients of the pooling $m$.

TEPL functions $f$ with a small CBN rank can be represented by a deep CNN with a small parameter norm:
\begin{theorem}\label{thm:CBN_upper_bound}
    For any translationally equivariant function $f$ with finite CBN rank, there is a constant $c$ that depends only on the target function $f$ s.t.
    \[
    R(f;\Omega,L) \leq L \mathrm{Rank}_{\textnormal{CBN}}(f; \Omega) + c.
    \]
\end{theorem}
\begin{proof}
    We sketch the proof idea of this theorem here. Suppose $f=h\circ g$ attains the bottleneck rank. By Lemma~\ref{lemma:bounded_depth}, there is a CNN with depth $L_g= \floor{\log(nc_{in}+1)}+2$ ($L_h= \floor{\log(nk+1)}+2$ resp.) and parameter $\theta_g$ ($\theta_h$ resp.) that represents $g$ ($h$ resp.). For $L\geq L_g+L_h$, we can construct the following CNN that represents $f$:
    Let the first $L_g$ layers represent $g$ and the last $L_h$ layers represent $h$. Let the middle $L-L_g-L_h$ layers be identity layers on $g(\Omega)$. The overall parameter norm of this CNN is
    \[
    \|\theta\|^2 = \|\theta_g\|^2 + \|\theta_h\|^2 + (L-L_g-L_h)\sum_{c=1}^k\sum_{t\in I_c}\tm_t^{-2}.
    \]
\end{proof}

If we define the limiting representation cost $R^{(0)}(f;\Omega)$ as the limit $\lim_{L\to\infty} \frac{1}{L} R(f;\Omega,L)$, then this implies the upper bound $R^{(0)}(f;\Omega)\leq \mathrm{Rank}_\text{CBN}(f;\Omega)$, but we conjecture that the two are actually equal. This conjecture is inspired by the fact that in numerical experiments, there is a similar bottleneck structure as the one in the proof of Theorem \ref{thm:CBN_upper_bound}, suggesting that such a structure might be indeed optimal (up $o(L)$ terms).

We give the following theoretical support for our conjecture. First the $R^{(0)}$ shares a number of properties typical of a notion of rank with $\mathrm{Rank}_\text{CBN}$, such as 
\begin{align*}
    R^0(f_2\circ f_1; \Omega) &\leq \min\{R^0(f_2; f_1(\Omega)), R^0(f_1; \Omega)\};\\
    R^0(f_2+ f_1;\Omega) &\leq R^0(f_2; \Omega) +R^0(f_1; \Omega).
\end{align*}
These properties as well as others are proven in Appendix~\ref{app:properties}.

Second, and more importantly, we give a lower bound for $R^{(0)}(f;\Omega)$ in terms of the Jacobian $Jf(x)$ at a point $x$, which matches the upperbound for a large family of TEPL functions $f$.

This lower bound will be expressed in terms of the following pooling-dependent rank: for any translation equivariant matrix $A\in\R^{n\min\{c_{in},c_{out}\}\times n\min\{c_{in},c_{out}\}}$, define
\begin{align*}
    \mathrm{Rank}_{m}(A) = \sum_{c=1}^{\min\{c_{in},c_{out}\}}\sum_{t=1}^{n}\tm_t^{-2}\mathbbm{1}[s_{c,t}(A)\neq 0]
\end{align*}
where $s_{c,t}(A)$ denotes the $c$-th singular value of $A$ along the $t$-frequency for $c=1,\dots,\min\{c_{in},c_{out}\}$ and $t=1,\dots,n$. Note that in the absence of pooling (i.e. $m=id$), it reduces to the matrix rank $\mathrm{Rank}_{id}(A) = \mathrm{Rank}(A)$.

\begin{theorem}\label{thm:m_lower_bound}
    For any translationally equivariant function $f$, let $Jf(x)$ be the Jacobian of $f$ at $x$. The following pooling-dependent lower bounds hold:
    \begin{enumerate}
        \item $\frac{1}{\tm_{\max}^2}\max_{x\in\Omega}\mathrm{Rank}(Jf(x)) \leq R^{(0)}(f;\Omega)$
        
        In particular, when there is no pooling, $\max_{x\in\Omega}\mathrm{Rank}(Jf(x)) \leq R^{(0)}(f;\Omega)$.
        
        \item $\max_{x\in\Omega_-}\mathrm{Rank}_m(Jf(x)) \leq R^{(0)}(f;\Omega)$ 
        
        where the max is taken over the subset $\Omega_-\coloneqq\{x\in\Omega\,|\, x_{p,i}=x_{q,i}\ \forall i=1,\dots,c_{in}, \forall p,q=1,\dots,n \}$, i.e. all $x$ that are constant along each channel.
    \end{enumerate}
\end{theorem}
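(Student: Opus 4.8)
The plan is to lower-bound the parameter norm by factorizing the Jacobian of $f_\theta$ through the layers and controlling how products of singular values propagate. I would fix, for each $L$, an optimal $\theta=(W_1,b_1,\dots,W_L,b_L)$ realizing $R(f;\Omega,L)$, and work at a point $x$ where $f_\theta$ is differentiable and agrees with $f$ on a neighborhood, so that $Jf_\theta(x)=Jf(x)$ (for $x\in\mathrm{int}(\Omega)$ this holds at a.e.\ $x$). Writing $D_\ell$ for the diagonal $0/1$ matrix of ReLU derivatives at layer $\ell$, the chain rule gives
\[
Jf_\theta(x) = W_L\,(D_{L-1}MW_{L-1})\cdots(D_1MW_1),
\]
a product containing all $L$ weight matrices and $L-1$ copies of the pooling matrix $M$, whose Fourier multiplier at frequency $t$ is $\tm_t$. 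Throughout I would use the product of the top $r$ singular values $P_r(A)\coloneqq\prod_{i=1}^r\sigma_i(A)$, together with three elementary facts: submultiplicativity $P_r(AB)\le P_r(A)P_r(B)$ (from $\Lambda^r(AB)=(\Lambda^rA)(\Lambda^rB)$ and $\|\Lambda^rA\|_{\mathrm{op}}=P_r(A)$); that a $0/1$ projection cannot increase it, $P_r(D_\ell A)\le P_r(A)$; and the AM--GM bound $P_r(W)\le(\|W\|_F^2/r)^{r/2}$. Since $\|\theta\|^2\ge\sum_\ell\|W_\ell\|_F^2$, biases may be dropped from any lower bound.

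For Part 1, I would set $r=\rank(Jf(x))$ and let $\Pi\coloneqq P_r(Jf(x))>0$ be the product of its nonzero singular values, a quantity intrinsic to $f$ and hence independent of $L$. Applying the three facts to the factorization, using $\sigma_i(M\cdot)\le\|M\|_{\mathrm{op}}\,\sigma_i(\cdot)$ with $\|M\|_{\mathrm{op}}=\tm_{\max}$, would give
\[
\Pi \le \tm_{\max}^{\,r(L-1)}\prod_{\ell=1}^{L}P_r(W_\ell)\le \tm_{\max}^{\,r(L-1)}\prod_{\ell=1}^{L}\left(\frac{\|W_\ell\|_F^2}{r}\right)^{r/2}.
\]
Rearranging and applying AM--GM across the $L$ factors then yields
\[
\sum_{\ell}\|W_\ell\|_F^2 \ge L\left(\prod_\ell\|W_\ell\|_F^2\right)^{1/L}\ge L\,r\,\tm_{\max}^{-2(L-1)/L}\,\Pi^{2/(rL)}.
\]
Dividing by $L$ and letting $L\to\infty$, the last two factors tend to $1$, so $R^{(0)}(f;\Omega)\ge r/\tm_{\max}^2$; maximizing over $x$ gives the first claim, and the no-pooling case follows since there $\tm_{\max}=1$.

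For Part 2 the key observation I would establish is that on $\Omega_-$ the whole Jacobian becomes translation equivariant, so its singular values split by frequency. For $x\in\Omega_-$ every activation is spatially constant (convolution, pooling, and ReLU all preserve spatial constancy), so each $D_\ell=I_n\otimes\mathrm{diag}(d_\ell)$ is itself translation equivariant; moreover equivariance of $f$ together with $T_px=x$ (translations fix constant signals) forces $Jf(x)$ to commute with all translations. Hence $Jf(x)$ and each of $W_\ell,M,D_\ell$ are block-diagonalized by $F_n$, the $t$-th block of $M$ acting as multiplication by $\tm_t$. Writing $\hat W_{\ell,t}$ for the Fourier blocks and $r_t$ for the rank of the $t$-th block of $Jf(x)$ (so $\sum_c\mathbbm{1}[s_{c,t}\ne0]=r_t$), the same submultiplicativity/AM--GM chain run inside block $t$ would give
\[
\sum_\ell\|\hat W_{\ell,t}\|_F^2\ge L\,r_t\,\tm_t^{-2(L-1)/L}\,\Pi_t^{2/(r_tL)},\qquad \Pi_t\coloneqq\prod_{c:\,s_{c,t}\ne0}s_{c,t}(Jf(x)).
\]
Summing over $t$, using $\|W_\ell\|_F^2=\sum_t\|\hat W_{\ell,t}\|_F^2$ (unitary invariance of $F_n$), dividing by $L$ and sending $L\to\infty$ would yield $R^{(0)}(f;\Omega)\ge\sum_{c,t}\tm_t^{-2}\mathbbm{1}[s_{c,t}(Jf(x))\ne0]=\rank_m(Jf(x))$, after which I maximize over $x\in\Omega_-$.

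The routine parts here are the singular-value inequalities and the two AM--GM limits, which are identical in the global and per-frequency arguments. The two places needing care are: (i) justifying $Jf_\theta(x)=Jf(x)$ at a point realizing the maximal (frequency-wise) rank, a measure-theoretic/interior-point technicality for ReLU networks handled by working at a.e.\ differentiability point of $\mathrm{int}(\Omega)$; and (ii) the Fourier-block decomposition of the Jacobian, which is precisely why Part 2 is confined to $\Omega_-$: only there are the ReLU-derivative diagonals $D_\ell$ forced to be translation equivariant, so that ``singular value along frequency $t$'' is even meaningful for $Jf(x)$. I expect (ii) --- pinning down the per-frequency factorization and tracking the pooling gain $\tm_t$ cleanly through the product --- to be the main conceptual obstacle, while the rest reduces to the same scaling argument as in the global bound.
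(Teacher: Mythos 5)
Your proposal is correct and follows the same overall route as the paper's proof: factorize $Jf_\theta(x)=W_L D_{L-1}MW_{L-1}\cdots D_1MW_1$, observe that on $\Omega_-$ the ReLU-derivative diagonals $D_\ell$ are constant per channel and hence commute with $M$ (so the whole product block-diagonalizes in the Fourier basis and the singular values of $Jf(x)$ split by frequency), convert the product structure into the average $\frac{1}{L}\sum_\ell\|W_\ell\|_F^2$, and send $L\to\infty$. The one substantive difference is the inequality performing that conversion: the paper invokes the Schatten-$\nicefrac{2}{L}$ quasinorm bound $\|A_L\cdots A_1\|_{\nicefrac{2}{L}}^{\nicefrac{2}{L}}\le\frac{1}{L}\sum_\ell\|A_\ell\|_F^2$ from \cite{jacot_2022_BN_rank,dai_2021_repres_cost_DLN}, applying it in Part 2 to $M^{1-L}Jf_\theta(x)$ whose singular values are $\tm_t^{1-L}s_{t,c}$, whereas you rederive the needed estimate from scratch via the product of the top $r$ singular values (compound-matrix submultiplicativity) followed by two AM--GM steps, run once globally for Part 1 and once per Fourier block for Part 2. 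The two are equivalent in the limit; your version is more self-contained, and your careful count of exactly $L-1$ copies of $M$ gives the constant $\tm_{\max}^{-2}$ of the main-text statement directly (the appendix proof obtains $\max\{\tm_{\max}^2,1\}^{-1}$ because it also bounds the $M$-free last layer by the same constant). Two cosmetic points: in Part 1 the factor $\tm_{\max}^{-2(L-1)/L}$ tends to $\tm_{\max}^{-2}$, not to $1$ (only $\Pi^{2/(rL)}\to 1$), which is in fact what your stated conclusion $R^{(0)}\ge r/\tm_{\max}^2$ already uses; and, as in the paper, one should work with near-minimizers of $\|\theta\|^2$ rather than assuming the minimum in the definition of $R(f;\Omega,L)$ is attained.
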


If there is a point $x\in\Omega_-$ (or $x\in\Omega$ when there is no pooling) that matches the lower bound in Theorem~\ref{thm:m_lower_bound} and the upper bound Theorem~\ref{thm:CBN_upper_bound}, we prove the conjecture that $R^{(0)}=\mathrm{Rank}_{\text{CBN}}$. For example, if the target function $f$ is a linear one-layer CNN and $\exists x\in\Omega_-$ is an interior point in $\Omega$, by matching the upper and the lower bounds, we have
\[
R^{(0)}(f;\Omega) = \mathrm{Rank}_{\text{CBN}}(f;\Omega) = \sum_{c=1}^{\min\{c_{in}, c_{out}\}}\sum_{t\in I_c}\tilde{m}_t^{-2}
\]
where $I_c$ is the DFT frequencies supported by the weight filter $W$ at the $c$-th output channel (see proof in Appendix~\ref{app:properties}).

\subsection{Finite Depth Correction}
There are many functions with the same CBN rank, some more complex than others, depending on the complexity of the functions $g$ and $h$. The $R^{(0)}$-term fails to capture the complexity of $g$ and $h$ as can be seen in the sketch of proof of Theorem~\ref{thm:CBN_upper_bound}, where the corresponding parameter norms $\|\theta_g\|^2 $ and $\|\theta_h\|^2$ have negligible contribution to the parameter norm in contrast to the middle identity layers and do not affect $R^{(0)}$. To capture these subdominant terms, we consider the following correction term:

\begin{definition}\label{def:R1}
    Define the finite depth correction term by
    \[
    R^{(1)}(f;\Omega) \coloneqq \lim_{L\rightarrow\infty}R(f;\Omega,L) - LR^0(f;\Omega).
    \]
\end{definition}

This correction term $R^{(1)}$ serves as a ``regularity control" on the learned CNNs:

\begin{proposition}\label{prop:R1_properties}
\begin{enumerate}
    \item For any $x\in \Omega_-$, $R^{(1)}(f;\Omega)\geq 2\sum_{s_{c,t}\neq 0}\tilde{m}_t^{-2}\log (s_{c,t}\tm_t)$ with $s_{c,t}$ being the $(c,t)$-th singular values of $Jf(x)$ for $c=1,\dots,\min\{c_{in},c_{out}\}$ and $t=1,\dots,n$.
    \item For all $x\in \Omega$, if there is no pooling, then $R^{(1)}(f;\Omega)\geq 2\log |Jf(x)|_+$.
    \item If $R^{(0)}(f\circ g;\Omega) = R^{(0)}(f;g(\Omega)) = R^{(0)}(g;\Omega)$, then $R^{(1)}(f\circ g;\Omega) \leq R^{(1)}(f;g(\Omega))+ R^{(1)}(g;\Omega)$.
    \item If $R^{(0)}(f+g;\Omega) = R^{(0)}(f;\Omega) + R^{(0)}(g;\Omega)$, then $R^{(1)}(f+ g;\Omega) \leq R^{(1)}(f;\Omega)+ R^{(1)}(g;\Omega)$.
\end{enumerate}
\end{proposition}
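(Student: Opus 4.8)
The plan is to treat the two lower bounds (items 1 and 2) and the two subadditivity bounds (items 3 and 4) by different mechanisms. The lower bounds come from converting the parameter norm of \emph{any} depth-$L$ representation into a statement about the singular values of the Jacobian, whereas the subadditivity bounds come from explicit constructions that concatenate or run in parallel near-optimal representations of the pieces. The common technical engine for the lower bounds is an arithmetic--geometric-mean estimate for products of matrices: if a matrix $B$ with nonzero singular values $\sigma_1,\dots,\sigma_r$ is written as a product $B = B_L D_{L-1} B_{L-1}\cdots D_1 B_1$ with each $D_\ell$ a contraction ($\|D_\ell\|_{\mathrm{op}}\le 1$), then submultiplicativity of the top-$k$ singular-value products together with AM--GM on each factor yields $\sum_{\ell=1}^L \|B_\ell\|_F^2 \ge \sum_{i=1}^r L\,\sigma_i^{2/L}$. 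I would isolate this as a lemma, since it is exactly the place where the leading $L$-term and the $2\log$ correction are produced.

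For item 2 I fix any differentiability point $x\in\Omega$ and, since there is no pooling ($M=\mathrm{id}$), write the Jacobian of a representing network by the chain rule as $Jf(x) = W_L D_{L-1}W_{L-1}\cdots D_1 W_1$, where each $D_\ell=\mathrm{diag}(\sigma'(\cdot))$ is a $0$--$1$ diagonal contraction. Applying the lemma with $B_\ell = W_\ell$ and expanding $L\sigma_i^{2/L} = L + 2\log\sigma_i + O(1/L)$ gives $\|\theta\|^2\ge \sum_\ell\|W_\ell\|_F^2 \ge L\,r + 2\sum_i\log\sigma_i + O(1/L)$ with $r=\rank Jf(x)$ and $\sigma_i$ its nonzero singular values; minimizing over $\theta$ and using $R(f;\Omega,L)=LR^{(0)}+R^{(1)}+o(1)$ together with $R^{(0)}\ge r$ (Theorem~\ref{thm:m_lower_bound}) yields $R^{(1)}\ge 2\sum_i\log\sigma_i = 2\log|Jf(x)|_+$ (the bound is non-vacuous exactly at points where $\rank Jf(x)=R^{(0)}$). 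For item 1 the extra ingredient is the restriction to $x\in\Omega_-$: when the input is constant along space it has only the DC frequency, and since convolutions, biases, pooling and a pointwise ReLU all map spatially-constant signals to spatially-constant signals, \emph{every} activation $\alpha_\ell(x)$ is DC. Consequently each ReLU mask $D_\ell$ is constant along space, i.e. of the form $I_n\otimes\mathrm{diag}(d^{(\ell)})$, which is simultaneously Fourier-block-diagonal; together with the block-circulant $W_\ell$ and the pooling $M=F_n^*\,\mathrm{diag}(\tm)\,F_n$, the whole Jacobian block-diagonalizes over frequencies $t$. On frequency $t$ the block equals $\tm_t^{\,L-1}$ times a product of $W$-blocks and ReLU masks, so applying the lemma \emph{per frequency} with $\sigma = s_{c,t}\,\tm_t^{-(L-1)}$ and then expanding $L\big(s_{c,t}\tm_t^{-(L-1)}\big)^{2/L} = L\tm_t^{-2} + 2\tm_t^{-2}\log(s_{c,t}\tm_t) + O(1/L)$ (keeping the $\tm_t^{L-1}$ inside until after differentiating the exponent, since it is $L$-dependent) and summing over $c,t$ reproduces $L\,\mathrm{Rank}_m(Jf(x)) + 2\sum_{s_{c,t}\ne 0}\tm_t^{-2}\log(s_{c,t}\tm_t)$, giving the claim.

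For items 3 and 4 I would argue by near-optimal constructions and pass the $O(1)$ correction through the limit. For composition, concatenating a depth-$L_g$ near-optimal network for $g$ with a depth-$L_f$ near-optimal network for $f$ (identifying the linear output layer of the first with the input of the second) gives $R(f\circ g;\Omega,L_f+L_g)\le R(g;\Omega,L_g)+R(f;g(\Omega),L_f)$; subtracting $(L_f+L_g)R^{(0)}(f\circ g)$ and using the hypothesis $R^{(0)}(f\circ g)=R^{(0)}(f;g(\Omega))=R^{(0)}(g;\Omega)$ makes the leading $L$-terms cancel termwise, so the right-hand side converges to $R^{(1)}(f;g(\Omega))+R^{(1)}(g;\Omega)$ while the left-hand side converges to $R^{(1)}(f\circ g;\Omega)$. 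For the sum, I run near-optimal depth-$L$ networks for $f$ and $g$ on disjoint channels and add their outputs in the final linear layer, giving $R(f+g;\Omega,L)\le R(f;\Omega,L)+R(g;\Omega,L)$; subtracting $L\,R^{(0)}(f+g)$ and using $R^{(0)}(f+g)=R^{(0)}(f)+R^{(0)}(g)$ again cancels the leading terms and yields $R^{(1)}(f+g)\le R^{(1)}(f)+R^{(1)}(g)$. In both cases the equalities of the $R^{(0)}$-terms are precisely what keeps the $O(L)$ parts from overwhelming the $O(1)$ correction.

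The main obstacle is the lower-bound side, specifically item 1. The AM--GM lemma itself is standard, but two points need care: the $L$-dependence of the effective singular value $s_{c,t}\tm_t^{-(L-1)}$ must be retained through the asymptotic expansion (otherwise the pooling weights $\tm_t^{-2}$ do not appear), and one must justify the clean simultaneous Fourier block-diagonalization of the Jacobian, which is exactly why the hypothesis $x\in\Omega_-$ is imposed --- it forces all ReLU masks to be spatially constant so that they cannot mix frequencies. A secondary technical issue is differentiability: $f_\theta$ is only piecewise smooth, so one should either choose $x$ to be a point where the ReLU pattern is locally constant or work with a one-sided/Clarke Jacobian and pass to the limit by continuity; and the whole argument requires the decomposition $R(f;\Omega,L)=LR^{(0)}+R^{(1)}+o(1)$ whose existence is taken from the preceding development.
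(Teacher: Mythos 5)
Your proposal follows essentially the same route as the paper: for items 1 and 2 it uses the Schatten-$\nicefrac{2}{L}$ (AM--GM) lower bound on the Jacobian product together with the commutation of the pooling matrix past the spatially-constant ReLU masks on $\Omega_-$ (equivalently, the per-frequency block-diagonalization), followed by the expansion $Ls^{\nicefrac{2}{L}} = L + 2\log s + O(\nicefrac{1}{L})$, and for items 3 and 4 it uses the same composition and parallel-stacking constructions with cancellation of the $O(L)$ terms under the stated $R^{(0)}$ equalities. The one caveat --- shared by the paper's own proof --- is that the cancellation of the leading $L$-terms in items 1 and 2 actually requires $\mathrm{Rank}_m(Jf(x)) = R^{(0)}(f;\Omega)$ (resp.\ $\rank(Jf(x)) = R^{(0)}(f;\Omega)$); invoking only $R^{(0)}\geq \rank(Jf(x))$ as you do pushes the inequality the wrong way and yields a vacuous $-\infty$ bound at points where the rank is not maximal, a limitation you partially acknowledge.
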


As shown by the first and second point of Proposition~\ref{prop:R1_properties}, the finite depth correction $R^{(1)}(f;\Omega)$ controls the regularity of the learned function $f$ by upper bounding the (weighted) sum of the log singular values of the Jacobian. The third statement in Proposition~\ref{prop:R1_properties} indicates that among functions with the same $R^{(0)}$ cost, their ``regularity control" satisfies subadditivity. 

We can rewrite the $L_2$-regulartized training objective in Equation~\ref{eq:regularized_training_obj} approximately in terms of $R^{(0)}$ and $R^{(1)}$:
\begin{equation}\label{eq:regularized_obj_R0R1}
    \min_{f\in\mathcal{N}_m}\mathcal{L}(f) + \lambda LR^{(0)}(f;\Omega) + \lambda R^{(1)}(f;\Omega)
\end{equation}
where the depth $L$ now plays a role of balancing the rank estimation and the regularity control. If our conjecture $R^{(0)}(f;\Omega) = \mathrm{Rank}_\text{CBN}(f;\Omega)$ holds, then we may classify the functions $f\in\mathcal{N}_m$ into subsets according to their BN-rank $R^{(0)}(f;\Omega)$, i.e. for each possible combination $I_k\in\mathcal{P}([\min\{c_{in},c_{out}\}]\times[n])$ we can define 
\[
\mathcal{N}_{m,k}\coloneqq \left\{f\in\mathcal{N}_m\,:\, R^{(0)}(f;\Omega) = \sum_{(c,t)\in I_k}\tm_t^{-2}\right\}.
\]
For fixed depth $L$ and within each $\mathcal{N}_k$, the objective \ref{eq:regularized_obj_R0R1} minimizes the loss and the $R^{(1)}$ term that controls the regularity via $\min_{f\in\mathcal{N}_{m,k}}\mathcal{L}(f) + \lambda R^{(1)}(f;\Omega)$ and hence the objective itself becomes
\[
\min_{k\in[K]}\left\{\lambda L\sum_{(c,t)\in I_k}\tm_t^{-2} + \min_{f\in\mathcal{N}_k}\mathcal{L}(f) + \lambda R^{(1)}(f;\Omega)\right\}.
\]
This reformulated objective suggests that for each possible bottleneck rank, indexed by $k\in[K]$, there is a regular minimizer $f_k\in\mathcal{N}_{m,k}$, and the depth $L$ only decides which $f_k$ is the global minimizer by trading off the bottleneck rank term and the inner minimization term (which controls the regularity of $f_k$). This reformulated objective suggests that as $L\rightarrow\infty$, regularized training is biased toward low-rank CNNs whose inner representations concentrate to frequencies where most information is kept (with large $\tm_t$).

\section{Bottleneck Structure in Weights and Pre-Activations}
Although we cannot prove the conjecture in its entirety, we are indeed able to show a bottleneck structure in the weights and the pre-activations of CNNs with sufficiently small parameter norms. 

\noindent \titlecap{\scshape bottleneck structure in weights}\normalfont

In the proof of Theorem~\ref{thm:CBN_upper_bound}, we construct a CNN where the weights in most layers support only a few frequencies. This bottleneck structure in the weights is also observed in the numerical experiments in Section~\ref{sec:numerical_exp}. We show that when the parameter norm is sufficiently small, this bottleneck structure is common in the weights:
\begin{theorem}\label{thm:bottleneck_weights}
    Suppose $\exists k>0$ such that the parameter norm $\|\theta\|^2\leq kL+c$ is small enough for $k= \max_{x\in\Omega_-}\mathrm{Rank}_{m}(Jf_\theta(x))$. Let $x_0\in \mathrm{argmax}_{x\in \Omega_-}\mathrm{Rank}_mJf_\theta(x)$. Then there are $V_\ell^T\in\R^{\kappa\times nc_{\ell-1}}$ and $U_\ell\in\R^{nc_\ell\times\kappa}$ being submatrices of the DFT block matrices $F_{\ell-1}\in\R^{nc_{\ell-1}\times nc_{\ell-1}}$ and $F_\ell^T\in\R^{nc_{\ell}\times nc_{\ell}}$ respectively, where $\kappa = \mathrm{Rank}Jf_\theta(x_0)$, such that 
    \begin{equation*}
        \resizebox{.95\hsize}{!}{$\displaystyle \sum_{\ell=1}^L\|W_\ell-U_{\ell} S_\ell V_\ell^T\|^2_F + \|b_\ell\|^2_F \leq c - 2\sum_{s_{t,c}\neq 0}\tm_t^{-2}\log (s_{t,c}\tm_t)$}
    \end{equation*}
    and thus for any $p\in (0,1)$, there are at least $(1-p)L$ layers $\ell$ with
    \begin{equation*}
        \resizebox{.95\hsize}{!}{$\|W_\ell-U_{\ell} S_\ell V_\ell^T\|^2_F+ \|b_\ell\|^2_F \leq \frac{c - 2\sum_{s_{t,c}\neq 0}\tm_t^{-2}\log (s_{t,c}\tm_t)}{pL}$}
    \end{equation*}
    where $s_{t,c}$ is the $(t,c)$-th singular value of $Jf_\theta(x_0)$ and $S_\ell\in\R^{\kappa\times\kappa}$ is a diagonal matrix with entries $\in\{\tm_t^{-1}\}_{t=1}^{n}$.
\end{theorem}
The assumptions on the norm $\|\theta\|^2$ and the Jacobian $Jf_\theta(x)$ in Theorem \ref{thm:bottleneck_weights} are there to guarantee that we are in the setting where the upper and lower bounds on $R^{(0)}$ of Theorems \ref{thm:CBN_upper_bound} and \ref{thm:m_lower_bound} match up to a constant, and that the network represents $f_\theta$ with almost minimal parameter norm. The proof leverages the small gap between the lower and upper bound to prove the bottleneck structure (using the fact that an inequality can only be satisfied with almost equality under certain conditions). We hope that proving the conjecture that $R^{(0)} = \mathrm{Rank}_\text{CBN}$ would make it possible to alleviate these assumptions, as there would be a lower bound that matches the upper bound for all functions instead of only some functions. 

While at the minimal norm parameters $\theta$, we know the residual term $c_1$ approaches and is upper bounded by $R^{(1)}(f_\theta;\Omega)$, this result also generalizes to all approximately minimal norm parameters where $c_1$ is still close to $R^{(1)}(f_\theta;\Omega)$. The fact that it generalizes implies that this bottleneck structure in the weights manifests in an "almost optimal" region around the optimal parameters into which the regularized objective eventually falls.

\noindent \titlecap{\scshape bottleneck structure in pre-activations without pooling}\normalfont

The fact that almost all weight matrices $W_\ell$ are supported along only a finite number of frequencies suggests that the corresponding pre-activations $\tilde{\alpha}_\ell(X) = W_\ell \alpha_{\ell-1}(X) + b_\ell$ for any training set $X$ should also be supported along the same frequencies (and possibly also along an additional constant frequency because of the bias term). 

This is trivial if the activations remain bounded as the depth $L$ grows, but \cite{jacot_2023_bottleneck2} has shown a counterexample: a simple function whose optimal intermediate representations explode in the infinite depth limit. This couterexample can easily be translated to the CNN setup (by applying the same function in parallel to all pixels of a constant input). This implies that to guarantee bounded representations in general, we need another source of bias, in addition to the small parameter norm bias. Following \cite{jacot_2023_bottleneck2}, we turn to the implicit bias of large learning rates in GD.

We know that GD with a learning rate of $\eta$ can only converge to a minima $ \hat{\theta}$ where the top eigenvalue of the Hessian $\lambda_1(\mathcal{H}\mathcal{L}_\lambda (\hat{\theta}))$ is upper bounded by $\nicefrac{2}{\eta}$. Other results suggest that SGD is biased towards minima where the trace of the Hessian is small \cite{damian2021_label_noise_trace_Hessian,li2021_SGD_zero_loss}. The top eigenvalue and trace both are measures of the narrowness of the minimum.

For the MSE loss, the Hessian at a local minimum $\hat{\theta}$ that fits the data (in the sense that $\mathcal{L}_\lambda(\hat{\theta})=O(\lambda)$) takes the form
\[
\mathcal{HL}_\lambda(\hat{\theta}) = \frac{2}{N}\sum_{i=1}^N J_\theta f_\theta(x_i)^T J_\theta f_\theta(x_i) + O(\lambda).
\]
The trace of Hessian is then approximately equal to $\frac{2}{N}\sum \|J_\theta f_\theta(x_i)\|_F^2$ and the largest eigenvalue is lower bounded by $\frac{2}{d_{out} N^2 n}\sum \|J_\theta f_\theta(x_i)\|_F^2 - O(\lambda)$ since the first term has rank $N n d_{out}$.

The term $\|J_\theta f_\theta(x)\|_F^2$ (which also equals $\mathrm{Tr}[\Theta(x,x)]$ for $\Theta$ the NTK \cite{jacot2018neural}) typically scales linearly in depth since it equals the sum over the $L$ terms $\|J_{(W_\ell,b_\ell)}f_\theta(x)\|_F^2$, so a choice of learning rate $\eta=O(L^{-1})$ is natural. This forces convergence to a minimum with $\|J_\theta f_\theta(x)\|_F^2 \leq cL$ which in turns implies that almost all activations are bounded:


\begin{theorem}\label{thm:bounded_activations}
    Given a depth $L$ network without pooling, balanced parameters $\theta$ with $\|\theta\|^2\leq Lk + c_1$ for $k= \max_{z\in\Omega_-}\rank_m(Jf_\theta(z))$, and a point $x_0$ such that $\rank Jf_\theta(x_0)=\max_{z\in\Omega_-}\rank_m(Jf_\theta(z))$, then $\|J_\theta f_\theta(x_0)\|_F^2 \leq cL$ implies that,
    \[
    \sum_{\ell=1}^L\|\alpha_{\ell-1}(x_0)\|_2^2 \leq \frac{ce^{\frac{c_1}{k}}}{k|Jf_\theta(x_0)|_+^{\nicefrac{2}{k}}}L.
    \]
    Hence for each $p\in(0,1)$, there are at least $(1-p)L$ layers $\ell$ with
    \[
    \|\alpha_{\ell-1}(x_0)\|_2^2 \leq \frac{1}{p}\frac{ce^{\frac{c_1}{k}}}{k|Jf_\theta(x_0)|_+^{\nicefrac{2}{k}}}\,.
    \]
\end{theorem}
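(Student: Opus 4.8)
The plan is to reduce everything to a layerwise decomposition of the parameter-gradient norm $\|J_\theta f_\theta(x_0)\|_F^2$ and to trade the forward activation norms against the backward sensitivities. Writing $\tilde\alpha_\ell = W_\ell\alpha_{\ell-1}+b_\ell$ and letting $B_\ell \coloneqq \partial f_\theta(x_0)/\partial\tilde\alpha_\ell$ denote the Jacobian of the output with respect to the $\ell$-th pre-activation (the backward sensitivity), the chain rule gives the outer-product form $\partial f_\theta(x_0)/\partial[W_\ell\mid b_\ell] = B_\ell\,[\alpha_{\ell-1}(x_0)\mid 1]^\top$, so that
\[
\|J_\theta f_\theta(x_0)\|_F^2 = \sum_{\ell=1}^L\|B_\ell\|_F^2\bigl(\|\alpha_{\ell-1}(x_0)\|_2^2+1\bigr)\ \geq\ \sum_{\ell=1}^L\|B_\ell\|_F^2\,\|\alpha_{\ell-1}(x_0)\|_2^2 .
\]
First I would establish this identity; the hypothesis $\|J_\theta f_\theta(x_0)\|_F^2\le cL$ then reads $\sum_\ell\|B_\ell\|_F^2\|\alpha_{\ell-1}(x_0)\|_2^2\le cL$. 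Hence, if every backward sensitivity admits a common lower bound $\beta\le\min_\ell\|B_\ell\|_F^2$, we immediately get $\sum_\ell\|\alpha_{\ell-1}(x_0)\|_2^2\le cL/\beta$, and the whole theorem follows upon showing $\beta=k\,|Jf_\theta(x_0)|_+^{2/k}e^{-c_1/k}$ is admissible.

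The heart of the argument is therefore the uniform lower bound $\|B_\ell\|_F^2\ge k\,|Jf_\theta(x_0)|_+^{2/k}e^{-c_1/k}$, which I would obtain in three moves. (i) Factor the input--output Jacobian through layer $\ell$: since the ReLU masks equal $1$ on the active coordinates at $x_0$, one has $Jf_\theta(x_0)=B_\ell\,G_\ell$ with $G_\ell$ the input-to-$\tilde\alpha_\ell$ Jacobian; because $\rank Jf_\theta(x_0)=k$ and the balancedness hypothesis keeps the active rank-$k$ subspace intact through every layer, the nonzero singular values multiply and $\log|Jf_\theta(x_0)|_+=\log|B_\ell|_+ + \log|G_\ell|_+$. (ii) Apply AM--GM to the $k$ nonzero singular values of $B_\ell$, giving $\|B_\ell\|_F^2=\sum_c s_c(B_\ell)^2\ge k\,|B_\ell|_+^{2/k}$. (iii) Lower bound $|B_\ell|_+$: using that the clean weights $U_jV_j^\top$ of Theorem~\ref{thm:bottleneck_weights} are partial isometries (here $\tm_t=1$, so $S_\ell=I$) and that the total deviation $\sum_j\|W_j-U_jV_j^\top\|_F^2$ is budgeted by $c_1-2\log|Jf_\theta(x_0)|_+$, I would establish $\log|B_\ell|_+\ge\log|Jf_\theta(x_0)|_+-c_1/2$, whence $|B_\ell|_+^{2/k}\ge|Jf_\theta(x_0)|_+^{2/k}e^{-c_1/k}$. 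Combining (ii) and (iii) yields the desired $\beta$. I expect step (iii)---quantifying how much the forward pass, through the accumulated weight deviations and the ReLU masks, can inflate the log-pseudo-determinant $\log|G_\ell|_+$ (equivalently, shrink $\log|B_\ell|_+$) while the rank stays pinned at $k$---to be the main obstacle; this is precisely where balancedness and the budget of Theorem~\ref{thm:bottleneck_weights} must be spent carefully, and where the interaction with the nonlinearity is most delicate.

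With the uniform bound in hand, the two conclusions follow routinely. For the aggregate estimate,
\[
cL\ \ge\ \sum_{\ell=1}^L\|B_\ell\|_F^2\,\|\alpha_{\ell-1}(x_0)\|_2^2\ \ge\ \beta\sum_{\ell=1}^L\|\alpha_{\ell-1}(x_0)\|_2^2,
\]
so that $\sum_\ell\|\alpha_{\ell-1}(x_0)\|_2^2\le cL/\beta=\dfrac{c\,e^{c_1/k}}{k\,|Jf_\theta(x_0)|_+^{2/k}}\,L$, which is the first displayed inequality. For the per-layer statement I would invoke a Markov/counting argument: if strictly more than $pL$ layers satisfied $\|\alpha_{\ell-1}(x_0)\|_2^2>\tfrac1p\,\tfrac{c\,e^{c_1/k}}{k\,|Jf_\theta(x_0)|_+^{2/k}}$, their contributions alone would already exceed the aggregate bound $cL/\beta$; hence at least $(1-p)L$ layers obey $\|\alpha_{\ell-1}(x_0)\|_2^2\le\tfrac1p\,\tfrac{c\,e^{c_1/k}}{k\,|Jf_\theta(x_0)|_+^{2/k}}$, as claimed.
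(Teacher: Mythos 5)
Your skeleton coincides with the paper's: the same layerwise identity $\|J_\theta f_\theta(x_0)\|_F^2=\sum_{\ell}(\|\alpha_{\ell-1}(x_0)\|_2^2+1)\|J(\talpha_\ell\to\alpha_L)(x_0)\|_F^2$, the same AM--GM reduction $\|B_\ell\|_F^2\ge k|B_\ell|_+^{2/k}$, the same multiplicative splitting $|Jf_\theta(x_0)|_+=|B_\ell|_+\cdot|J\talpha_\ell(x_0)|_+$ after inserting projections onto the rank-$k$ chain of images, and the same Markov count at the end. The target of your step (iii), namely $\log|J\talpha_\ell(x_0)|_+\le c_1/2$ uniformly in $\ell$ (equivalently $\log|B_\ell|_+\ge\log|Jf_\theta(x_0)|_+-c_1/2$), is also exactly the inequality the paper needs.

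However, the route you propose for (iii) --- spending the deviation budget $\sum_j\|W_j-U_jS_jV_j^T\|_F^2\le c_1-2\log|Jf_\theta(x_0)|_+$ from Theorem~\ref{thm:bottleneck_weights} --- does not close the gap, and this is the heart of the proof. Writing $W_j=U_jV_j^T+E_j$, the log-pseudo-determinant of the product of the first $\ell$ restricted factors is only controlled by $k\sum_{j\le\ell}\log(1+\|E_j\|_2)\lesssim k\sum_{j\le\ell}\|E_j\|_F$, and Cauchy--Schwarz gives $\sum_j\|E_j\|_F\le\sqrt{L\sum_j\|E_j\|_F^2}=O(\sqrt{c_1L})$, which diverges with depth: a squared-Frobenius budget of order $c_1$ spread as $\|E_j\|_F\sim\sqrt{c_1/L}$ per layer can a priori inflate the product by $e^{\Theta(\sqrt{c_1L})}$. (The configuration realizing this is excluded not by the deviation budget but by the \emph{norm} budget $\|\theta\|^2\le Lk+c_1$ itself, since $\|U_jV_j^T+E_j\|_F^2\approx k+2\sqrt{k}\|E_j\|_F$ already costs an extra $2\sqrt{kc_1L}$ in total.) What works, and is what the paper does, is to use balancedness first: $\|W_j\|_F^2$ is nondecreasing in $j$, so the prefix average satisfies $\frac{1}{\ell}\sum_{j\le\ell}\|W_j\|_F^2\le\frac{\|\theta\|^2}{L}\le k+\frac{c_1}{L}$; then AM--GM applied directly to the product of the first $\ell$ weight matrices (restricted to the rank-$k$ subspaces) gives $|J\talpha_\ell(x_0)|_+^{2/(k\ell)}\le\frac{1}{k\ell}\sum_{j\le\ell}\|W_j\|_F^2\le1+\frac{c_1}{kL}$, hence $|J\talpha_\ell(x_0)|_+^{2/k}\le\bigl(1+\frac{c_1}{kL}\bigr)^{\ell}\le e^{c_1/k}$. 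If you want to keep your framing, the usable per-layer fact is $2\log|\overline{W}_j|_+\le\|\overline{W}_j\|_F^2-k\le\|W_j\|_F^2-k$ (from $\log x\le x-1$ applied to the $k$ squared singular values), not the deviation bound; summed over $j\le\ell$ and combined with the balancedness prefix bound this yields the same $2\log|J\talpha_\ell(x_0)|_+\le\ell c_1/L\le c_1$.
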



Theorem~\ref{thm:bounded_activations} gives the conditions under which the activations remain bounded, and thereby the pre-activations $\talpha_\ell(X)$ are supported along the same frequencies as $W_\ell$ (in which Theorem~\ref{thm:bottleneck_weights} proves a bottleneck structure) and possibly also the constant frequency. Note that the results we present in this section do not require the CNNs to be well-trained to (approximately) global minimums.

\section{CNNs with up and down-sampling}
Given the bottleneck structure in the near-optimal parameters, it is natural to consider implementing down-sampling and up-sampling in CNNs which explicitly enforce a bottleneck structure and save computational cost, as commonly used in practice. In this section, we study CNNs with down-sampling and up-sampling layers.

\begin{figure*}
\vspace{-0.4cm}
\begin{minipage}{.33\textwidth}
    \includegraphics[scale=0.36,valign=t]{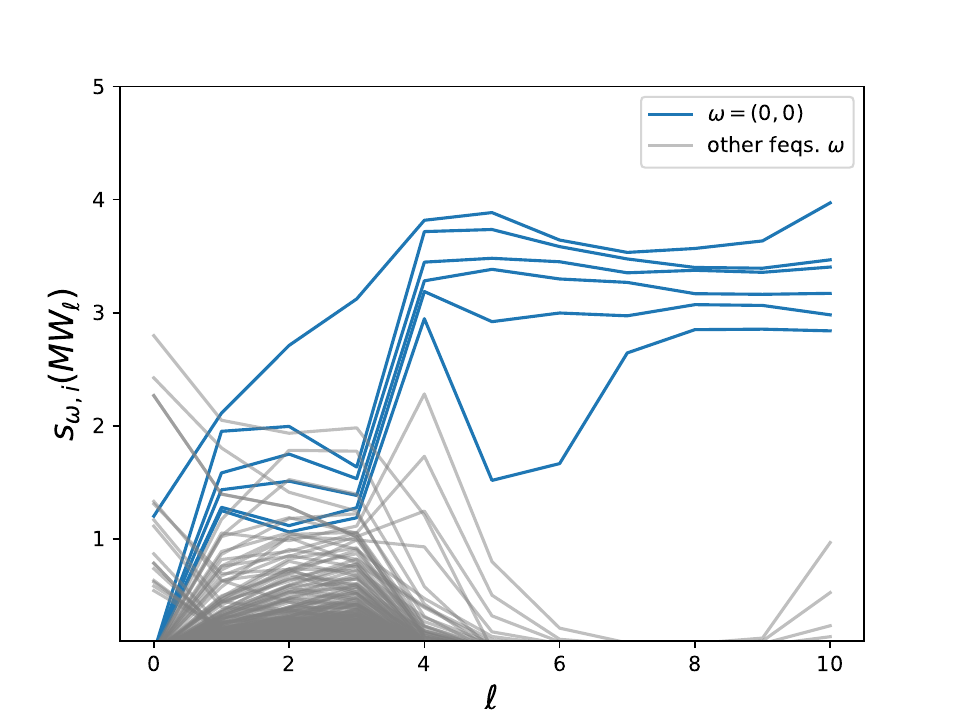}
    
\caption{\label{fig:MNIST_classifier} We train a CNN ($L=11,c_\ell=60,\lambda=0.005,\beta=0.5$) on MNIST. The inputs are $28\times28$, and scaled down by 2 on the 2nd and 4th layers, with global average pooling and a fully connected layer at the end.  We see that for classification, six constant frequencies are kept.}
\end{minipage}\;\;\;\;%
\begin{minipage}{.66\textwidth}
\centering
\subfloat[Sing. vals. of $M W_\ell$]{\includegraphics[scale=0.33]{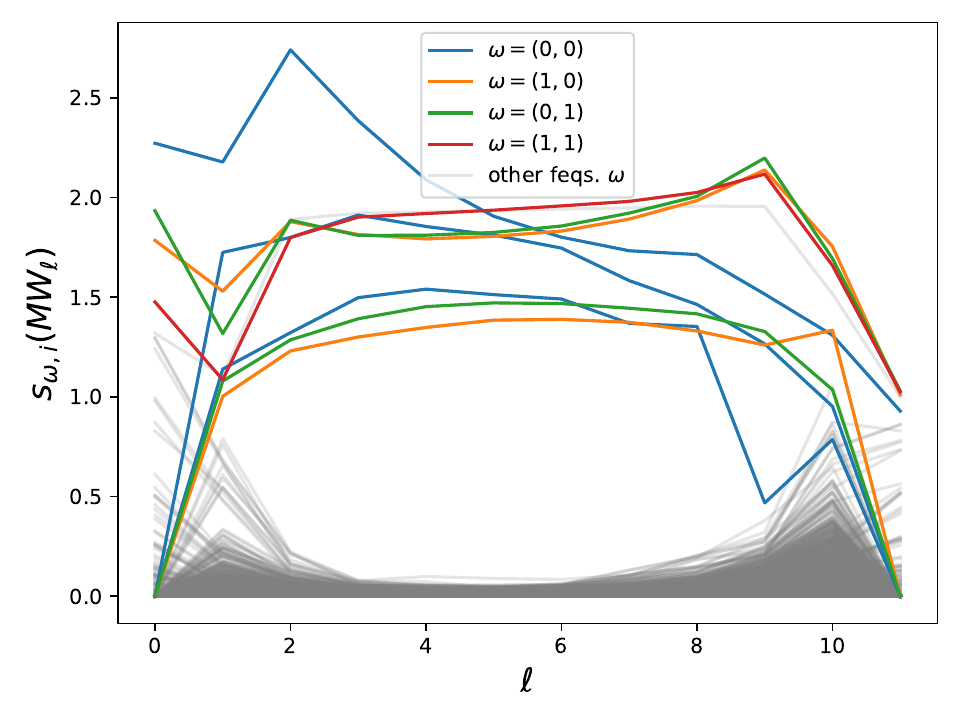}
}
\subfloat[Latent space interpretation]{\includegraphics[scale=0.36]{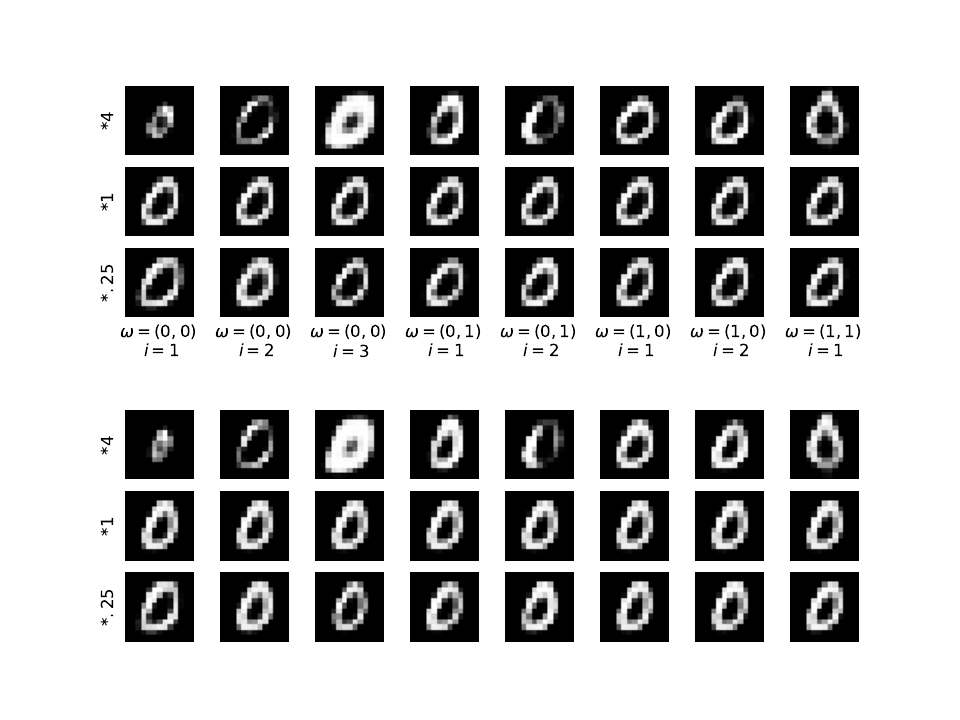}}

\hspace{-0.5cm}

\caption{\label{fig:MNIST_autoencoder} We train an autoencoder ($L=12,c_\ell=50,\lambda=0.04,\beta=1.0$) on the $0$-digits of MNIST downscaled to the size $13\times 13$. (a) The singular values of $MW_\ell$ for every layer $\ell$, colored by their frequency $\omega$. (b) Along each of the singular values in the $5$-th layer, we plot the effect of multiplying the hidden representation along the sing. vector by 2 or 0.5 (for non-constant frequencies we also consider multiplication by complex $i$ and $-i$). We see how each singular value correspond to a (nonlinear) direction of variation of the zeros. For non-constant frequencies the argument encodes the $x$ and $y$ position of the digit.}

\end{minipage}
\end{figure*}

We only consider CNNs with one down-sampling layer and one up-sampling layer, but the results can be extended to having multiple such layers. To be specific, consider the set of CNNs with the parametrization in Section~\ref{sec:paramerization} but with one down-sampling layer and one up-sampling layer inserted: Let $\mathcal{N}_{n;m}$ be all possible CNNs with any depth, input dimension $n$, and pooling $m$. For each stride $s\in\mathbb{N}$, define the set of stride-CNNs with inner pooling $m'$:

\begin{align}
    \begin{split}\label{eq:stride-CNNs}
    \mathcal{N}^{(s)}_{n;m,m'} = &\{f_2\circ \mathrm{Up}_s\circ \hat{f}\circ \mathrm{Down}_s\circ f_1:\\
    &f_1, f_2\in\mathcal{N}_{n;m}, \hat{f}\in\mathcal{N}_{\floor{\nicefrac{n}{s}};m'}\}
    \end{split}
\end{align}

Because of the down-sampling layer, our networks are no longer translationally equivariant; instead, they only represent $s$-translationally equivariant functions (i.e. invariant under translations by multiples of $s$). The formal mathematical definitions for down-sampling and up-sampling in (\ref{eq:stride-CNNs}) are as follows:

\begin{definition}
    Define the \textbf{down-sampling} operator $\mathrm{Down}_s: \R^{n\times c}\rightarrow \R^{\floor{n/s}\times c}$ by mapping $(\mathrm{Down}_s(x))_{i,k} = x_{si,k}$, i.e. subsampling every $s$ pixel along each channel. 

    Define the \textbf{up-sampling} operator $\mathrm{Up}_s: \R^{n'\times c}\rightarrow \R^{n's\times c}$ by 
    \[
    \mathrm{Up}_s(x) = F_{n's}^*[sI_{n'},  0]^TF_{n'}x
    \]
    i.e. mapping the $n'$ Fourier coefficients of input $x$ to the first $n'$ Fourier coefficients of $\mathrm{Up}_s(x)$ and zeros otherwise. $F_N$ denotes the DFT matrix of dimension $N\times N$.
\end{definition}

\begin{remark}
    By the Nyquist-Shannon sampling theorem \cite{shannon1949}, we have that for $y=\mathrm{Down}_s(x)$, the $i$-th DFT coefficient is $\tilde{y}_i = \frac{1}{s}\sum_{j=0}^{s-1}\tilde{x}_{i+\nicefrac{jn}{s} \mod n}$. Hence exact reconstruction of $x$ is possible when $x$ is low-frequency i.e. $\tilde{x}_i=0$ for $i\geq \frac{n}{s}$ (in which case the set of coefficients $\{\tilde{x}_{i+\nicefrac{jn}{s}}\}_{j=0}^{s-1}$ has cardinality $\leq 1$ and gives a one-to-one mapping between the coefficients of $x$ and $y$).
\end{remark}

\begin{remark}
    The set of all $s$-stride-CNNs $\mathcal{N}^{(s)}_{n;m,m'}$ is the set of all functions $f=h\circ g$ with bottleneck support only on the first $\frac{n}{s}$ DFT frequencies, cf. Proposition~\ref{prop:Ns_decomposition}.
\end{remark}

In other words, if a full-size CNN can be decomposed into two TEPL functions with only low frequencies, then it can be represented by a CNN with down and up-sampling. We thereby have the natural extension of the CBN rank for the stride-CNNs:
\[
\mathrm{Rank}_\text{CBN}^{(s)}(f;\Omega)\equiv \min_{\substack{f=h^{(s)}\circ g^{(s)}\\g^{(s)}=g^{(s)}_1\oplus\cdots\oplus g^{(s)}_k}} \sum_{c=1}^k\sum_{i\in I_c}\tm_i'^{-2}
\]

where in the decomposition $f=h^{(s)}\circ g^{(s)}$, $g_c^{(s)}$ on each channel $c$ only supports low frequencies $I_c\subseteq [\frac{n}{s}]$. Note that any $f\in\mathcal{N}^{(s)}_{n;m,m'}$ has finite stride-bottleneck rank $\mathrm{Rank}_\text{CBN}^{(s)}(f;\Omega)<\infty$. If the inner pooling $m'$ (of size $\frac{n}{s}$) is the same as $m$ (of size $n$) truncated to the first $\frac{n}{s}$ frequencies, then it is straightforward that $\mathrm{Rank}_\text{CBN}(f;\Omega) \leq \mathrm{Rank}_\text{CBN}^{(s)}(f;\Omega)$.

\begin{remark}
    The reason for having the first $\frac{n}{s}$ frequencies here is due to the choice we made in the up-sampling operator. One can slightly generalize to exact reconstruction of $x$ consisting of another set of $\frac{n}{s}$ frequencies by having a different mapping between the low-dimensional and the high-dimensional Fourier coefficients, as long as the input satisfies $|\{\tilde{x}_{i+\nicefrac{jn}{s}}\}_{j=0}^{s-1}|\leq 1$ for each $0\leq i <\frac{n}{s}$.
\end{remark}

Furthermore, we may recover the upper bound theorem as in Theorem~\ref{thm:CBN_upper_bound}.

\begin{theorem}
    Let $R^{(0)}_s(f;\Omega)$ denote the rescaled representation cost under the architecture with stride $s$. Then for any $f\in\mathcal{N}^{(s)}_{n;m,m'},$ 
    \[
    R^{(0)}_s(f;\Omega) \leq \mathrm{Rank}_\text{CBN}^{(s)}(f;\Omega).
    \]
\end{theorem}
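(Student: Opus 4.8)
The plan is to mirror the construction in the proof of Theorem~\ref{thm:CBN_upper_bound}, inserting the down- and up-sampling operators so that the depth-accumulating identity block now lives at the reduced scale $\floor{\nicefrac{n}{s}}$ with the inner pooling $m'$. First I would fix a decomposition $f=h^{(s)}\circ g^{(s)}$ that attains $\mathrm{Rank}_{CBN}^{(s)}(f;\Omega)$, where $g^{(s)}=g_1^{(s)}\oplus\cdots\oplus g_k^{(s)}$ and each channel $g_c^{(s)}$ is supported only on the low frequencies $I_c\subseteq[\floor{\nicefrac{n}{s}}]$. By Lemma~\ref{lemma:bounded_depth}, there are bounded-depth full-size CNNs $f_1,f_2\in\mathcal{N}_{n;m}$ with parameters $\theta_g$ and $\theta_h$ representing $g^{(s)}$ and $h^{(s)}$ on the relevant domains.

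The key observation is that down- then up-sampling acts as the identity on the low-frequency range of $g^{(s)}$. Since the output of $g^{(s)}$ satisfies $\tilde{x}_i=0$ for $i\geq\nicefrac{n}{s}$, the Nyquist--Shannon remark gives that the $i$-th DFT coefficient of $\mathrm{Down}_s x$ equals $\frac{1}{s}\tilde{x}_i$ for $0\leq i<\nicefrac{n}{s}$, and the definition of $\mathrm{Up}_s$ rescales these coefficients by $s$ and zero-pads, so $\mathrm{Up}_s\circ\mathrm{Down}_s=\mathrm{id}$ on $g^{(s)}(\Omega)$. In particular, downsampling relabels frequency $i\in I_c$ of $g^{(s)}$ to frequency $i$ of the size-$\floor{\nicefrac{n}{s}}$ signal. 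I would then take $\hat{f}\in\mathcal{N}_{\floor{\nicefrac{n}{s}};m'}$ to be a stack of identity layers at the reduced scale, built exactly as in Theorem~\ref{thm:CBN_upper_bound} but using the pooling $m'$; each such layer need only preserve the representation along the frequencies $\{I_c\}$, so its squared parameter norm is $\sum_{c=1}^k\sum_{i\in I_c}\tm_i'^{-2}$, the per-layer cost of inverting $m'$ along those frequencies.

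Composing and using that $\hat f$ is the identity on the downsampled range gives $f_2\circ\mathrm{Up}_s\circ\hat{f}\circ\mathrm{Down}_s\circ f_1 = h^{(s)}\circ\mathrm{Up}_s\circ\mathrm{Down}_s\circ g^{(s)} = h^{(s)}\circ g^{(s)}=f$, so this network lies in $\mathcal{N}^{(s)}_{n;m,m'}$ and represents $f$. Writing $L$ for the total parametrized depth and $L_0$ for the bounded depth absorbed by $f_1$, $f_2$ and any finite overhead in $\hat f$ (the sampling operators carry no parameters), the total parameter norm is
\[
\|\theta\|^2 = \|\theta_g\|^2+\|\theta_h\|^2 + (L-L_0)\sum_{c=1}^k\sum_{i\in I_c}\tm_i'^{-2}.
\]
Dividing by $L$ and sending $L\to\infty$ kills the $O(1)$ contributions of $\theta_g,\theta_h$ and of $L_0$, leaving $R^{(0)}_s(f;\Omega)\leq\sum_{c=1}^k\sum_{i\in I_c}\tm_i'^{-2}=\mathrm{Rank}_{CBN}^{(s)}(f;\Omega)$.

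The main obstacle is the frequency bookkeeping across the sampling operators: I must verify carefully that downsampling a low-frequency signal relabels frequency $i\in I_c\subseteq[\floor{\nicefrac{n}{s}}]$ to the same index $i$ at the reduced scale, so that the coefficient $\tm_i'$ acting inside $\hat{f}$ is the DFT coefficient of $m'$ at precisely the frequency carrying that channel's information, and that the identity-layer construction of Theorem~\ref{thm:CBN_upper_bound} transfers verbatim to scale $\floor{\nicefrac{n}{s}}$ with pooling $m'$ and yields exactly the per-layer cost $\sum_{c}\sum_{i\in I_c}\tm_i'^{-2}$. A secondary point to confirm is that $g^{(s)}$ and $h^{(s)}$ are genuinely representable as bounded-depth full-size CNNs in $\mathcal{N}_{n;m}$, so that Lemma~\ref{lemma:bounded_depth} applies; this parallels the no-stride case and the $O(1)$ cost it incurs is immaterial in the depth limit.
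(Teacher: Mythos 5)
Your proposal is correct and follows essentially the same route as the paper: decompose $f=h^{(s)}\circ g^{(s)}$ at the optimal CBN rank, insert $\mathrm{Up}_s\circ\hat{f}\circ\mathrm{Down}_s$ as the identity on $\Imm g^{(s)}$ with $\hat{f}$ a growing stack of identity layers at the reduced scale, and let the depth of $\hat{f}$ tend to infinity. The extra frequency-bookkeeping checks you flag are exactly the content of the paper's Nyquist--Shannon remark and its identity-layer construction, so nothing further is needed.
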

\begin{proof}
    The proof idea follows from that of Theorem~\ref{thm:CBN_upper_bound}. Suppose $f = h^{(s)}\circ g^{(s)}$ realizes $\mathrm{Rank}_\text{CBN}^{(s)}(f;\Omega)$. Observe that $f = h^{(s)}\circ \mathrm{Up}_s \circ \hat{f}\circ\mathrm{Down}_s\circ g^{(s)}$ where $\hat{f}$ consists of $\hat{L}$ identity layers and hence $\mathrm{Up}_s \circ \hat{f}\circ\mathrm{Down}_s=id|_{\Imm g^{(s)}}$. The bound follows by taking $\hat{L}$ to infinity.
\end{proof}

\begin{remark}
    One may also generalize the properties of $R^{(0)}$ to $R^{(0)}_s$ following the same proof ideas. 
\end{remark}

If the target function possesses a good low-frequency bottleneck structure in the sense that $\mathrm{Rank}_\text{CBN}\approx\mathrm{Rank}_\text{CBN}^{(s)}$, under the conjecture $R^{(0)}=\mathrm{Rank}_\text{CBN}$ and $R^{(0)}_s = \mathrm{Rank}_\text{CBN}^{(s)}$, we can see that $R^{(0)}\approx R^{(0)}_s$ (meaning their optimal representation costs are close). Hence one is justified to learn the target with CNNs with enforced down-sampling and up-sampling layers for reduced computation cost and lower-dimensional latent representations in the Euclidean space.

\noindent \titlecap{\scshape low frequency representation}\normalfont

Although we show exact recovery is possible with appropriate stride $s$, there remains the question of how to choose the stride for down-sampling in our CNNs a priori. To partially answer this question, under some realistic assumptions on the input domain $\Omega$, we can show that the target function $f:\Omega\rightarrow\R^{n\times c_{out}}$ has a low-frequency decomposition $f=h\circ g^{(2)}$ with stride $s=\frac{n}{2}$ (i.e. inner representations only have input size $2$ and hence only support 2 frequencies). Yet we remark that having a $2$-frequency decomposition does not imply that the optimal stride is of size $2$, because low-frequency decomposition may require too many channels for exact recovery, whereas retaining a few more frequencies may be more informative and efficient.

\begin{definition}
The input domain $\Omega$ is \textit{translationally unique} if $\forall x,y\in\Omega, x= T_py\implies x=y$ and $p=0$, where $T_p$ denotes the translation by $p$ along each channel for $p=0,\dots,n-1$.
\end{definition}

In particular, for this kind of domain, $\Omega_-=\emptyset$. Though it is difficult to check or guarantee that all natural images are translationally unique, it seems to hold for the vast majority of images.

\begin{theorem}\label{thm:translationally_unique_domain}
    Suppose $\Omega$ is translationally unique. Then for any piecewise linear target function $f:\Omega\rightarrow\R^{n\times c_{out}}$, $f=h\circ g^{low}$ where $h$ and $g^{low}$ are TEPL functions and $g^{low}:\Omega\rightarrow\R^{n\times nc_{in}+1}$ only supports the constant DFT frequency at first $nc_{in}$ channels and the second DFT frequency at the $nc_{in}+1$-th channel.
\end{theorem}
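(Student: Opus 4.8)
The plan is to exploit translational uniqueness in order to recover the input \emph{exactly} inside the constant channels while using the single second-frequency channel purely as a fixed phase reference. The first observation is that if $\Omega$ is compact and translationally unique, then the shifted copies $T_0\Omega,\dots,T_{n-1}\Omega$ are pairwise disjoint (if $T_px=T_{p'}y$ with $x,y\in\Omega$ then $x=T_{p'-p}y$ forces $x=y$ and $p=p'$) and compact, hence separated by a positive distance. Two consequences drive the whole argument: (i) the equivariance constraint never relates two distinct points of $\Omega$, so the restriction of any TEPL function to $\Omega$ may be an arbitrary PL map, and in particular $h\circ g^{low}$ can match an arbitrary (not necessarily equivariant) PL target $f$ on $\Omega$; and (ii) separatedness lets us extend PL maps defined on the orbit of $\Omega$ to global PL maps without continuity problems.

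First I would build the $nc_{in}$ constant channels so that they literally read off the coordinates of $x$ on $\Omega$ while remaining globally translation invariant. Fix linear coordinates $\ell_1,\dots,\ell_{nc_{in}}$ on $\R^{n\times c_{in}}$ and a PL bump $\psi$ equal to $1$ on $\Omega$ and supported in a neighborhood $U\supseteq\Omega$ chosen so small that $U\cap T_pU=\emptyset$ for $p\neq0$ (possible by separatedness). Setting $a_j(x)=\sum_{p=0}^{n-1}\psi(T_{-p}x)\,\ell_j(T_{-p}x)$ gives a PL function that is translation invariant by reindexing the group sum, and for $x\in\Omega$ only the $p=0$ term survives, so $a_j(x)=\ell_j(x)=x_j$. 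Placing $\boldsymbol{1}a_j(x)$ in channel $j$ yields channels supported on the constant frequency only, and on $\Omega$ the map $x\mapsto(a_1(x),\dots,a_{nc_{in}}(x))$ is the identity, hence injective.

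For the last channel I would \emph{not} try to encode position from $x$; instead I set the second-frequency channel to a fixed nonzero pure second-frequency signal $v$ for every $x\in\Omega$, and extend it equivariantly to $T_qv$ on each separated piece $T_q\Omega$. This single covariant channel is what breaks the translation invariance of the constant channels, so that the globally equivariant composition $h\circ g^{low}$ is able to represent the non-invariant $f$; yet because it is \emph{constant} on $\Omega$, the set on which $h$ must satisfy its constraints, $\bigcup_q T_q g^{low}(\Omega)$, has its second-frequency coordinate taking only the $n$ discrete, separated values $\{T_qv\}_q$. I then define $h$ on $g^{low}(\Omega)$ by averaging each constant channel back to the scalar $x_j$ (a linear step) and outputting $f(x)$, and extend it equivariantly by $h=T_q\circ f\circ(\text{read-off})$ on the $q$-th piece. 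On each separated piece this is a fixed PL map, so the equivariant extension is PL and then extends to a global TEPL $h$; composing gives $h\circ g^{low}|_\Omega=f$.

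The main obstacle, and the reason the second-frequency channel is held constant, is the tension between piecewise-linearity and equivariance. A ``natural'' construction that reads a genuine position/phase from $x$ and has $h$ undo it would require applying a circular shift $T_{q(x)}$ by an input-dependent integer $q(x)$, which is neither continuous nor PL. Keeping the phase channel constant on $\Omega$ dissolves this difficulty, since $h$ is only ever constrained on a discrete union of well-separated pieces, on each of which a single fixed shift is applied; separatedness, a direct consequence of translational uniqueness and compactness, is exactly what makes every PL extension step legitimate. The remaining routine checks are that $\psi$ and the extensions can be chosen PL and that each constructed map is genuinely equivariant, both of which follow from the disjointness of the shifted copies of $\Omega$.
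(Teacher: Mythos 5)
Your construction is essentially the paper's. The paper defines $G(T_px)_{i,0:nc_{in}-1}=\mathrm{vec}(x)$ (translation-invariant constant channels carrying the canonical representative of the orbit) and $G(T_px)_{i,nc_{in}}=\cos(2\pi(p-i)/n)$, which on $\Omega$ itself (the $p=0$ piece) is exactly your fixed, $x$-independent phase-reference signal $v$; and $H(G(T_px))=T_pf(x)$ is your ``read off $x$ from the constant channels, apply $f$, apply the shift dictated by the phase channel.'' So the decomposition and the role of each channel coincide; well-definedness of $G$ on the orbit is where translational uniqueness enters in both arguments. Two differences are worth noting. First, a genuine flaw in your write-up: $a_j(x)=\sum_p\psi(T_{-p}x)\,\ell_j(T_{-p}x)$ is a product of a piecewise linear bump with a linear functional, hence piecewise \emph{quadratic}, not piecewise linear. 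This is repairable --- e.g. replace the product by $\max\bigl(\min(\ell_j(T_{-p}x),\,C\,\psi(T_{-p}x)),\,0\bigr)$ after arranging $0\le\ell_j\le C$ on $U$, or define the map piecewise on the separated compact translates of $U$ and extend, which is in effect what the paper does by defining $G$ directly on $\overline{\Omega}$ --- but as written your constant channels are not PL. Second, where you appeal abstractly to ``a fixed PL map on each separated compact piece extends to a global TEPL map,'' the paper is more constructive: it exhibits a three-layer ReLU ConvNet computing $G^{-1}$ by thresholding the cosine channel at $\epsilon=\max_{i\ne p}\cos(2\pi(p-i)/n)$ to manufacture the spatial indicator $\delta_{i=p}$, then gating and re-shifting to recover $T_px$; this makes $H=f\circ G^{-1}$ manifestly TEPL without invoking a PL extension result for non-polyhedral compact sets. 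Your separation-based argument is sound in spirit (and your remark that an input-dependent integer shift $T_{q(x)}$ cannot be PL is exactly the obstruction the fixed phase channel circumvents), but the explicit network is the cleaner way to certify piecewise linearity here.
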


\begin{remark}\label{rmk:trans_unique_domain}
Theorem~\ref{thm:translationally_unique_domain} implies that the identity map on translationally unique domains can be represented using $nc_{in}$ constant frequencies and one 1-periodic frequency. In particular, it gives an upper bound on the bottleneck rank of any TEPL function $f$ on such domain $\Omega$, including $id$, that
    \[
    \mathrm{Rank}_\text{CBN}(f;\Omega) \leq \tm_2^{-2} + nc_{in}\tm_1^{-2}.
    \]
\end{remark}

\section{Numerical Experiments}
\label{sec:numerical_exp}
For our numerical experiments, we train networks on 4 different tasks, with different depths and ridge parameters. We use filters with full size and cyclic boundaries. The pooling operator is $M_\beta=(1-\beta)I + \beta A_3$, where $A_3$ is the $3\times3$ average filter. We use a few different values of $\beta$. For the MNIST classification task, we also implement downsampling in the 2nd and 4th layers. The experiments are done for 2D convolution instead of 1D convolution as in the theoretical analysis, but everything translates directly, with the difference that frequencies are indexed by pairs $\omega$.

The emergent bottleneck structure that appears in all the tasks we consider makes these networks highly interpretable. We plot the singular values $s_{\omega,i}(M W_\ell)$ accross the layers $\ell=1,\dots,L$. We emphasize the singular values that are kept in the bottleneck by coloring them according to their frequency.

\textbf{MNIST classification:} For MNIST classification the CNN features a global pooling layer at the end, followed by a final fully-connected layer. This explains why only constant frequencies are kept in the bottleneck, since any non-constant frequencies in the outputs are killed by the global pooling. Only 6 dimensions are kept, which is sufficient to embed all 10 classes in a linearly separable manner.

Also note that this experiments illustrates a `half-bottleneck', where the representations go from high-dim/high-freq inputs to a low-dim/low-freq bottleneck and remain there until the outputs. This is in contrast to the full bottlenecks that we observe in our other experiments where the representations go back to high-dim/high-freq in the last layers. Note that this half-bottleneck structure (which is common in classification tasks since the outputs of the network are low-dim/low-freq) could explain some aspects of the neural collapse phenomenon \cite{papyan_2020_neural_collapse} as well as other numerical observations \cite{kornblith_2019_similarity}.

\textbf{MNIST digit 0 autoencoder:} When training an autoencoder the networks keeps3 constant freq. along with 4 degree 1 freq. and 1 degree two freq. Since the signal inside the bottleneck is almost only supported along low frequencies, the middle layers could have been downsampled before upsampling again (as is usually done with autoencoders), but the $L_2$-regularization alone recreates the same effect. We believe allowing the network to choose the frequencies it wants to keep and the number of channels is better than forcing it. Of course there are computational advantages to downsampling in the middle of the network.

To understand what each of the kept frequencies capture, we plot the effect of multiplying by $4$ or $0.25$ the signal along each singular value of $W_5$ and plotting the resulting modified output. The effect along some singular values can be interpreted as capturing e.g. size, boldness, narrowness, angle and more.

\textbf{Autoencoder on synthetic data:} We train an autoencoder on data obtained as the pixelwise multiplication of a low-freq shape with a high-freq repreating pattern (a single freq.-$(5,5)$ Fourier function with random phase). We see that the network disentangles the shape from the pattern in the bottleneck, the shape is encoded in the $\|\omega \|_1 \leq 2$-freqs and the pattern in the single $(5,5)$-freq. This is only possible with non-linear transformations at the beginning and end of the network.

\textbf{Learning Newtonian Mechanics:} We train a network to predict the trajectory of a ball: the inputs to the network are four frames of a ball under gravity (with different frames encoded in different channels) with a random initial position and velocity, from which the network has to predict the next 4 frames. The network keeps two pairs of degree one frequencies (and one constant frequency, which seems to only be there to ensure that the signal remains in $R_+$ inside the bottleneck; one can check that no information is kept in this constant frequency). The phases of the largest pair of degree one frequencies $\theta_1$ and $\vartheta_1$ encode the $(x,y)$-position of the ball two frames before the end, and the difference in phases between the largest pair and the smaller pair encodes the $(x,y)$-velocity at the same frame. Thus the network recognizes that the evolution of the ball is uniquely determined by its position and velocity.


%


\begin{figure}[h]\centering
\subfloat[Singular values of $M W_\ell$]{\includegraphics[scale=0.35,valign=t]{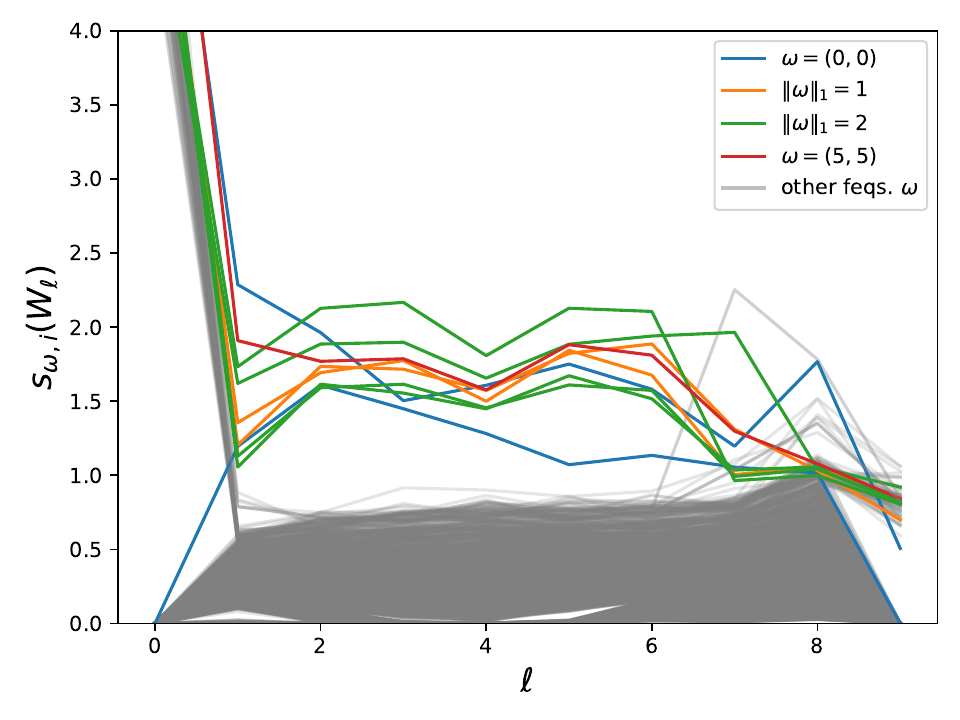}
}
\subfloat[Training data]{\includegraphics[scale=0.38,valign=t]{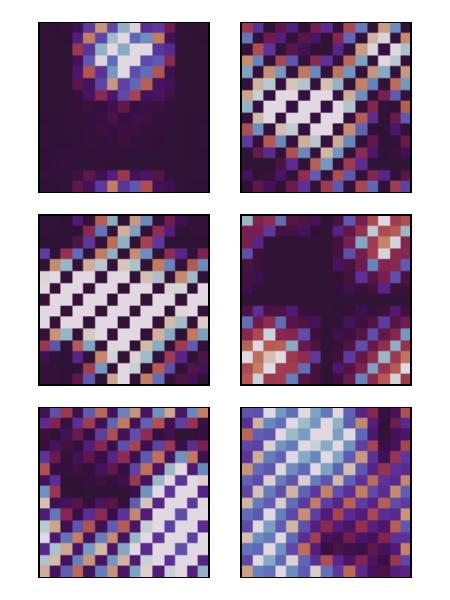}

}\hspace{-0.5cm}

\caption{\label{fig:high_freq_autoencoder} CNN ($L=10,c_\ell=60,\lambda=0.0005,\beta=0.25$) trained on images that are made up of random low-freq. shapes multiplied with a high frequency ($\omega=(5,5)$) pattern. In the bottleneck the network keeps track of the shapes in low frequencies ($\| \omega \|_1 \leq 2$) and the pattern in one $\omega=(5,5)$ frequency. Note that the original images only has signal in high frequencies around $(5,5)$.}
\end{figure}

\begin{figure}[h]\centering
\hspace{-0.2cm}\subfloat[Sing. vals. of $M W_\ell$]{\includegraphics[scale=0.33,valign=t]{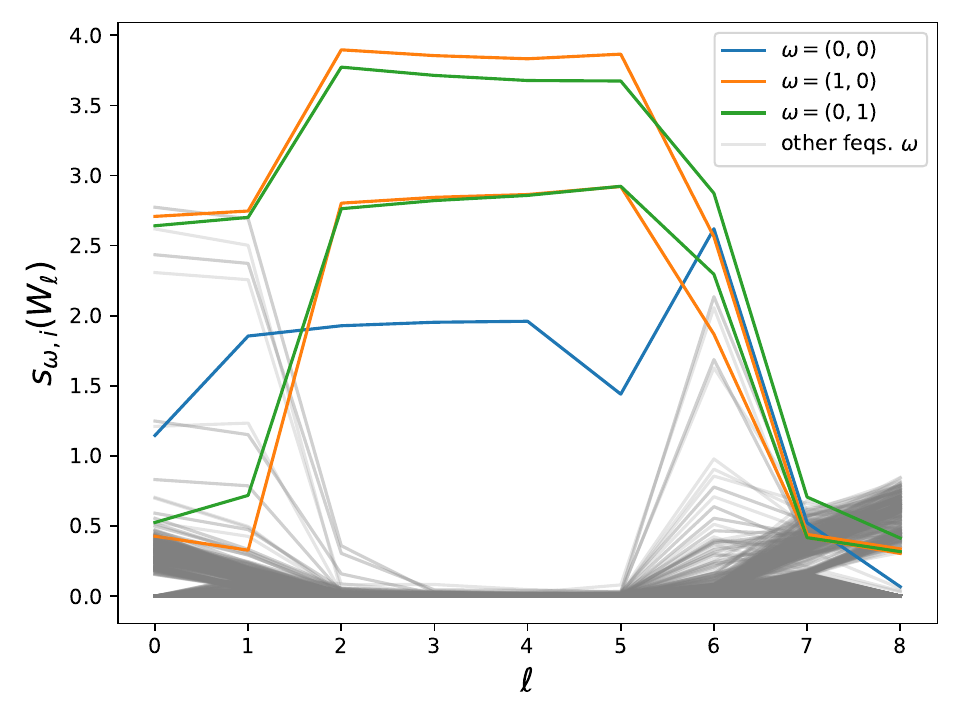}
}\;%
\subfloat[Interpretation]{\includegraphics[scale=0.35,valign=t]{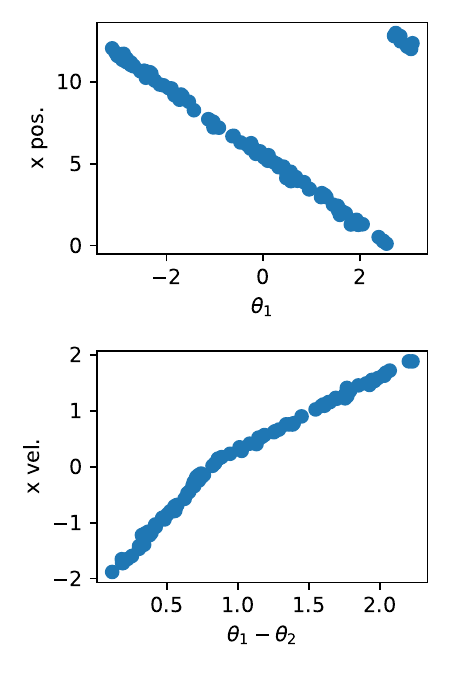}

}\hspace{-0.5cm}

\caption{\label{fig:pred_ball} CNN ($L=9,c_\ell=60,\lambda=0.0001,\beta=0.25$) learns to predict the trajectory of a ball under gravity: the inputs are 4 frames of a ball represented as a dot on a black background, and the outputs are the next four frames. The position appears to be encoded by the phase of the first pair, while the velocity is encoded in the difference between the phases of the two pairs, as confirmed in (b) along the $x$-axis.}
\end{figure}

\section{Limitations and Discussion}
In this paper we focused on describing a bottleneck structure in CNNs with small parameter norm. It still remains to be shown that GD converges under reasonable assumption to such a small parameter solution. The training dynamics of deep nonlinear networks are very difficult to study (outside of the NTK regime \cite{jacot2018neural}), but knowing what kind of structure we expect to appear will probably be helpful.

Our analysis is centered around $L_2$-regularized networks, but we expect a similar picture to appear in other settings. It has been observed that training with GD on a cross-entropy loss leads to an implicit $L_2$ regularization \cite{gunasekar_2018_implicit_bias}, thus leading to a BN structure. Similarly a small initialization should bias GD towards solutions with small parameter norm, similar to the dynamics observed in linear nets \cite{li2020towards, jacot-2021-DLN-Saddle}.

A final limitation is our use of full-size filters and cyclic boundaries. This choices has the obvious advantage of allowing for the use of Fourier analysis, but we expect a similar structure to still appear, though possibly with a different type of sparsity than the sparsity in Fourier basis that we observe. Our analysis only captures the bias induced by the translation invariance of the weights, but the locality of the connections is generally believed to also play an important role.

\section{Conclusion}
This paper describes a bottleneck structure in CNNs: the network learns functions of the form $f=h\circ g$ where the inner representation is only supported along a few Fourier frequencies, inspired by the appearance of a similar structure in fully-connected networks \cite{jacot_2022_BN_rank,jacot_2023_bottleneck2}. Our results provide motivation and justification for the common use of down-sampling in CNNs. This bottleneck structure makes the learned latent features of CNNs highly interpretable, as confirmed by a number of numerical experiments.

\section*{Impact Statement}
This work is theoretical in nature and the goal is to advance our understanding in machine learning. It has no direct societal impact. 


\bibliography{ICML2024/ConvNet_BN}
\bibliographystyle{icml2024}

\newpage
\appendix
\onecolumn
\section{CNNs as Universal Approximators}
\label{app:universality}
Since (fully connected) neural networks are mostly known as universal approximators, one may wonder if the CNNs given by the parameterization in Section~\ref{sec:paramerization} are universal approximators for translationally equivariant functions. Indeed, when the user-chosen filter $m$ is invertible, universality is guaranteed \cite{Yarotsky18}. 

Note that the filter being invertible does not prevent it from ``shrinking" the high frequencies, since it can have arbitrarily small (but nonzero) singular values at high frequencies. Thereby one may consider it as a smoothened low-pass filter.

\section{Properties of $R^{(0)}$ and $R^{(1)}$}
\label{app:properties}
We present several interesting properties of the $R^{(0)}$ cost and their proofs here.
\begin{proposition}[$R^{(0)}$ properties]\label{prop:R0_rank_properties}
    Write $\bm\coloneqq \sum_{t=1}^{n}\tilde{m}_t^{-2}$ for simplicity. For any translationally equivariant functions $f_1, f_2$, we have the following properties:
    \begin{enumerate}
        \item $R^0(f_2\circ f_1; \Omega) \leq \min\{R^0(f_2; f_1(\Omega)), R^0(f_1; \Omega)\}$;
        \item $R^0(f_2+ f_1;\Omega) \leq R^0(f_2; \Omega) +R^0(f_1; \Omega)$;
        \item $R^0(f_1; \Omega)\leq \min\{c_{in}, c_{out}\}\bm$ if the filter $m$ is nonnegative and invertible;
        \item $R^0(id; \Omega)= \min\{c_{in}, c_{out}\}\bm$ if there is an interior point of $\Omega$ in $\Omega_-$;
        \item $R^0(f_1; \Omega)= \min\{c_{in}, c_{out}\}\bm$ if $f_1$ is a bijection and there is an interior point of $\Omega$ in $\Omega_-$;
        \item If $f: \R^{n\times c_{in}}\rightarrow\R^{n\times c_{out}}$ is an affine convolution, namely $f$ is a linear one-layer CNN with weight $w\in\R^{n\times c_{out}\times c_{in}}$ and bias $b\in\R^{c_{out}}$, and there is an interior point of $\Omega$ in $\Omega_-$, then 
        \[
        R^0(f;\Omega) = \sum_{c=1}^{\min\{c_{in}, c_{out}\}}\sum_{t\in I_c}\tilde{m}_t^{-2}
        \]
        where $I_c$ is the Fourier frequencies supported by $\{w_{:,c,k}\}_{k=1}^{c_{in}}$, i.e. the indices of nonzero entries of $\{Fw_{:,c,s}\}_{k=1}^{c_{in}}$.
    \end{enumerate}
\end{proposition}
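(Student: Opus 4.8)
The plan is to split the six claims into two families. Items (1) and (2) are the ``rank-like'' inequalities, which I would prove by explicit network constructions using the bounded-depth representations of Lemma~\ref{lemma:bounded_depth}. Items (3)--(6) are exact-value statements, all obtained by sandwiching the convolutional upper bound of Theorem~\ref{thm:CBN_upper_bound} against the Jacobian lower bound of Theorem~\ref{thm:m_lower_bound}, evaluated at a well-chosen point of $\Omega_-$.

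For (1), I would represent $f_2\circ f_1$ at depth $L$ by devoting the first $L-L_{f_2}$ layers to a near-optimal representation of $f_1$ on $\Omega$ and the last $L_{f_2}$ layers to a fixed bounded-depth representation of $f_2$ on $f_1(\Omega)$, where $L_{f_2}$ and its cost $C_{f_2}$ come from Lemma~\ref{lemma:bounded_depth}. This gives $R(f_2\circ f_1;\Omega,L)\le R(f_1;\Omega,L-L_{f_2})+C_{f_2}$; dividing by $L$ and letting $L\to\infty$ kills the $O(1)$ term and yields $R^{(0)}(f_2\circ f_1;\Omega)\le R^{(0)}(f_1;\Omega)$. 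The symmetric construction (bounded-depth $f_1$ first, near-optimal $f_2$ last) gives the bound by $R^{(0)}(f_2;f_1(\Omega))$, and taking the minimum proves (1). For (2), I would run near-optimal depth-$L$ representations of $f_1$ and $f_2$ in parallel via channel concatenation; since the two sub-networks share no weights, the weight matrices are block-diagonal and the squared parameter norm is exactly additive, while the final linear layer can be chosen to output the sum $f_1+f_2$ at no leading-order cost. Dividing by $L$ gives $R^{(0)}(f_1+f_2;\Omega)\le R^{(0)}(f_1;\Omega)+R^{(0)}(f_2;\Omega)$.

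For the exact values, the upper bound (3) follows from Theorem~\ref{thm:CBN_upper_bound} together with the estimate $\mathrm{Rank}_{CBN}(f;\Omega)\le\min\{c_{in},c_{out}\}\bm$, obtained from the trivial factorizations $f=f\circ\mathrm{id}$ and $f=\mathrm{id}\circ f$, whose inner map carries $\min\{c_{in},c_{out}\}$ channels each supporting all $n$ frequencies (cost $\bm$ per channel); invertibility of $m$ provides representability and nonnegativity is what lets the ReLU identity-layer construction realise an all-frequency channel at cost $\bm$. For (4) I would match this with the lower bound of Theorem~\ref{thm:m_lower_bound}(2) at an interior point $x_0\in\Omega_-$, where $Jf(x_0)=I$ has every singular value nonzero, so $\mathrm{Rank}_m(I)=\min\{c_{in},c_{out}\}\bm$ and the two bounds coincide. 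Statement (5) then follows from (1) and (4) without a separate Jacobian computation: a translationally equivariant bijection has translationally equivariant inverse, so $R^{(0)}(\mathrm{id};\Omega)=R^{(0)}(f^{-1}\circ f;\Omega)\le R^{(0)}(f;\Omega)=R^{(0)}(f\circ\mathrm{id};\Omega)\le R^{(0)}(\mathrm{id};\Omega)$, squeezing $R^{(0)}(f;\Omega)=\min\{c_{in},c_{out}\}\bm$.

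Finally (6), the showcased case, exploits that an affine convolution $f(x)=Wx+b$ has constant Jacobian $Jf\equiv W$, which block-diagonalizes in the Fourier basis into symbols $\hat W_t\in\mathbb{C}^{c_{out}\times c_{in}}$. Evaluating Theorem~\ref{thm:m_lower_bound}(2) at an interior $x_0\in\Omega_-$ gives $R^{(0)}(f;\Omega)\ge\mathrm{Rank}_m(W)=\sum_t\tm_t^{-2}\,\mathrm{rank}(\hat W_t)$. For the matching upper bound I would SVD each symbol and build the CBN factorization channel by channel, letting channel $c$ of the inner map carry exactly the frequencies at which the $c$-th singular value $s_{c,t}(W)$ is nonzero, i.e.\ $I_c=\{t:s_{c,t}(W)\neq 0\}$ (this is the reading of $I_c$ under which the displayed formula is exact). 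This realises $\sum_c\sum_{t\in I_c}\tm_t^{-2}=\sum_t\tm_t^{-2}\,\mathrm{rank}(\hat W_t)$, so $\mathrm{Rank}_{CBN}(f;\Omega)\le\mathrm{Rank}_m(W)$ and Theorem~\ref{thm:CBN_upper_bound} closes the sandwich. The main obstacle I anticipate is precisely this bookkeeping in the upper bound: the number of inner channels carrying a given frequency must equal the local rank $\mathrm{rank}(\hat W_t)$, yet channels are global objects shared across all frequencies, so one must verify that the nested assignment $I_c=\{t:s_{c,t}(W)\neq 0\}$ is simultaneously feasible (a single CNN factorization realising all per-frequency SVDs) and cost-optimal. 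A secondary delicate point, recurring in (4)--(6), is securing a matching point $x_0$ that is at once interior to $\Omega$, contained in $\Omega_-$, and (for the general bijection) inside a single linear region so that $Jf(x_0)$ is well defined and attains the lower bound.
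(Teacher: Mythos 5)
Your proposal is correct and follows essentially the same route as the paper: items (1) and (2) by the fixed-bounded-depth-plus-near-optimal composition and the block-diagonal parallel stacking, respectively, and items (3)--(6) by squeezing Theorem~\ref{thm:CBN_upper_bound} against Theorem~\ref{thm:m_lower_bound} at an interior point of $\Omega_-$ (the paper proves (3) by an explicit $m^{-1}$ identity-layer construction rather than by citing the CBN bound, but that construction is exactly what underlies Theorem~\ref{thm:CBN_upper_bound}, so the difference is cosmetic). The one place you go beyond the paper is item (6): the paper's proof is the single remark ``for the upper bound decomposition we have $f=id\circ f$'', which yields $\sum_c\sum_{t\in I_c}\tm_t^{-2}$ with $I_c$ the nonzero rows of the Fourier symbols --- a quantity that can strictly exceed $\mathrm{Rank}_m(W)=\sum_t\tm_t^{-2}\,\mathrm{Rank}(\hat W_t)$ when a symbol has more nonzero rows than its rank. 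Your per-frequency SVD factorization $f=(U\,\cdot)\circ(\Sigma V^*\cdot\,+\,b')$, with channel $c$ of the inner map assigned $I_c=\{t:s_{c,t}(W)\neq 0\}$, is the construction actually needed to close the sandwich in that degenerate case (both factors are translationally equivariant affine maps, so the decomposition is admissible), and your worry about the two readings of $I_c$ is a legitimate imprecision in the paper's statement rather than a gap in your argument.
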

\begin{proof}
    $\boldsymbol{1.}$ Write $f_1(\Omega)\subseteq \R^{n\times c_{mid}}$. Without loss of generality, we may assume $R^0(f_2; f_1(\Omega))\leq R^0(f_1; \Omega)$. By Lemma~\ref{lemma:bounded_depth}, we can fix a CNN with depth $L_1=\ceil{nc_{in}+1}+2$ and parameter $\boldsymbol{W}_1$ representing $f_1$. For any sufficiently large $L > L_1 + \ceil{nc_{mid}+1}+2$, we have a CNN with depth $L-L_1$ and parameter $\boldsymbol{W}_2$ that represents $f_2$ with minimal representation cost, i.e. $\|\boldsymbol{W}_2\|^2 = R(f_2; L-L_1, f_1(\Omega))$. Then the norm of the composed CNN is 
    \[
    R(f_2\circ f_1; L, \Omega) \leq \|\boldsymbol{W}_1\|^2 + R(f_2;L-L_1, f_1(\Omega)).
    \]
    Dividing by $L-L_1$ and taking $L\rightarrow\infty$ gives the inequality 
    \[
    R^0(f_2\circ f_1; \Omega) \leq \min\{R^0(f_2; f_1(\Omega)), R^0(f_1; \Omega)\}.
    \]

    $\boldsymbol{2.}$ Let $f_1$ and $f_2$ be represented by CNNs with some depth $L$ and parameters $\boldsymbol{W}_1$ and $\boldsymbol{W}_2$, respectively, with minimal parameter norms, i.e. $\|\boldsymbol{W}_1\|^2=R(f_1; L, \Omega)$ and $\|\boldsymbol{W}_2\|^2=R(f_2;L,\Omega).$ We can construct a network with depth $L$ and parameters $\boldsymbol{W}$ that represents $f_1+f_2$ by stacking them "in parallel":

    At each layer $\ell$, we let the number of channels be the sum of the other two networks, i.e. $c_\ell = c^{(1)}_\ell + c^{(2)}_\ell$. For layers $1\leq \ell<L$, we set the weights and biases as follows: 
    \begin{align*}
        (w_\ell)_{:,c,k} &= \mathbbm{1}\left[c\leq c_\ell^{(1)} \land k \leq c_{\ell-1}^{(1)} \right](w^{(1)}_\ell)_{:,c,k} + \mathbbm{1}\left[c> c_\ell^{(1)} \land k > c_{\ell-1}^{(1)} \right](w^{(2)}_\ell)_{:,c-c_\ell^{(1)},k-c_{\ell-1}^{(1)}}\\
        (b_\ell)_{c} &= \mathbbm{1}\left[c\leq c_\ell^{(1)} \right](b^{(1)}_\ell)_{c} + \mathbbm{1}\left[c> c_\ell^{(1)}\right](b^{(2)}_\ell)_{c-c_\ell^{(1)}}
    \end{align*}
    for $c=1,\dots,c_\ell$ and $k=1,\dots,c_{\ell-1}$. We incorporate the sum in the last layer by having
    \begin{align*}
        (w_L)_{:,c,k} &= \mathbbm{1}\left[k\leq c_{L-1}^{(1)} \right](w^{(1)}_L)_{:,c,k} + \mathbbm{1}\left[k > c_{L-1}^{(1)} \right](w^{(2)}_L)_{:,c,k-c_{L-1}^{(1)}}\\
        (b_L)_{c} &= (b^{(1)}_L)_{c} + (b^{(2)}_L)_{c}\\
    \end{align*}
    for $c=1,\dots,c_{out}$, $k=1,\dots,c_{L-1}$.
    
    This CNN represents $f_1+f_2$ and has parameter norm $\|\boldsymbol{W}\|^2 = \|\boldsymbol{W}_1\|^2 + \|\boldsymbol{W}_2\|^2$. Hence we have
    \[
    R(f_2+f_1;L,\Omega) \leq R(f_2;L,\Omega) + R(f_1;L,\Omega).
    \]
    Dividing by $L$ and taking $L\rightarrow\infty$, we obtain the desired inequality
    \[
    R^0(f_2+ f_1;\Omega) \leq R^0(f_2; \Omega) +R^0(f_1; \Omega).
    \]

    $\boldsymbol{3.}$ Let $\phi$ be represented by a depth $L_{\phi}$ network with parameter $\bw_{\phi}$. For $L\geq L_{\phi}$, we can let the first $L-L_\phi$ layers of the network be the identity layers: Since $\Omega$ is bounded, let $K>0$ upper bounds $x\in\Omega$ coordinate-wise. For $\ell=1,\dots,L-L_\phi$, let
    \begin{align*}
        (w_{\ell})_{:,c,k} &= m^{-1}\mathbbm{1}[c=k]\\
        (b_\ell)_{c} &= K\mathbbm{1}[\ell=1]\\
        (b_{L-L_\phi+1})_c &= (b^{(\phi)}_1)_c - K(m^{-1}*\mathbf{1}).
    \end{align*}
    Note this CNN represents $\phi\circ id=\phi$. By construction, the parameter norm is $\|\boldsymbol{W}\|^2 = (L-L_{\phi})c_{in}\sum_{t=1}^{n}\tilde{m}_t^{-2} + \|\boldsymbol{W}_{\phi}\|^2 + nK(m^{-1}*\mathbf{1}+1)$. Dividing by $L$ and taking $L\rightarrow\infty$ yields
    \[
    R^0(\phi; \Omega)\leq c_{in}\sum_{t=0}^{n-1}\tilde{m}_t^{-2}.
    \]
    Similarly, appending the identity layers after $\boldsymbol{W}_\phi$ yields
    \[
    R^0(\phi; \Omega)\leq c_{out}\sum_{t=0}^{n-1}\tilde{m}_t^{-2}.
    \]

    $\boldsymbol{4.}$ Follows from the squeezing bounds Theorem~\ref{thm:CBN_upper_bound} and Theorem~\ref{thm:m_lower_bound}.

    $\boldsymbol{5.}$ Follows from the observation that 
    \[
    \min(c_{in}, c_{out})\bm=R^0(id;\Omega) \leq \min\{R^0(\psi;\Omega),R^0(\psi^{-1};\Omega)\}\leq \min(c_{in}, c_{out})\bm.
    \]

    $\boldsymbol{6.}$ Follows from the squeezing bounds Theorem~\ref{thm:CBN_upper_bound} and Theorem~\ref{thm:m_lower_bound} (note for the upper bound decomposition we have $f=id\circ f$).
\end{proof}

\begin{corollary}
    Let $f$ be any translationally equivariant function. For any translationally equivariant bijections $\phi$ and $\psi$ on $\mathbb{R}^{n\times c_{in}}$ and $\mathbb{R}^{n\times c_{out}}$ respectively, we have $R^0(\psi\circ f\circ\phi;\Omega)=R^0(f;\Omega).$
\end{corollary}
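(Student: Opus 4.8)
The plan is to sandwich $R^0(\psi\circ f\circ\phi;\Omega)$ between two copies of $R^0(f;\cdot)$ using only the composition inequality, i.e. the first item of Proposition~\ref{prop:R0_rank_properties}, and to exploit the invertibility of $\phi$ and $\psi$ so that the inequality can be run in both directions. First I would establish the upper bound. Writing $g\coloneqq\psi\circ f\circ\phi$ and applying the composition bound twice — peeling $\psi$ off on the left and then $\phi$ off on the right, keeping only the relevant term of each $\min\{\cdot,\cdot\}$ — gives
\[
R^0(g;\Omega)\le R^0(f\circ\phi;\Omega)\le R^0(f;\phi(\Omega)).
\]

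For the reverse inequality I would use bijectivity. Since $\phi$ and $\psi$ are translationally equivariant bijections, their inverses are again translationally equivariant: from $\phi(T_p x)=T_p\phi(x)$ one deduces $T_p\phi^{-1}(y)=\phi^{-1}(T_p y)$ by substituting $y=\phi(x)$, and similarly for $\psi$. Hence $f=\psi^{-1}\circ g\circ\phi^{-1}$ is a composition of translationally equivariant maps, and applying the composition bound twice more yields
\[
R^0(f;\phi(\Omega))=R^0(\psi^{-1}\circ g\circ\phi^{-1};\phi(\Omega))\le R^0(g\circ\phi^{-1};\phi(\Omega))\le R^0(g;\Omega),
\]
where the last step uses $\phi^{-1}(\phi(\Omega))=\Omega$. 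Combining the two displays gives $R^0(g;\Omega)=R^0(f;\phi(\Omega))$.

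The algebraic content is therefore just a direct double application of Proposition~\ref{prop:R0_rank_properties}(1) in each direction, with the only preliminary step being the (routine) verification that $\phi^{-1},\psi^{-1}$ are translationally equivariant. I expect the one genuinely subtle point to be the bookkeeping of the domain on which the inner $f$ is evaluated: the composition inequality naturally produces $R^0(f;\phi(\Omega))$ rather than $R^0(f;\Omega)$. Since $\phi$ is a bijection of the full space $\R^{n\times c_{in}}$, it carries $\Omega$ onto $\phi(\Omega)$, so the two costs coincide under the identification of $\Omega$ with its image; equivalently one may simply read $\phi(\Omega)$ as the input domain on which the cost is measured. Tracking this domain consistently through the four applications of the composition bound is the main thing to get right, the rest being immediate.
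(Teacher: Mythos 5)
Your argument is correct and is essentially the intended one: the paper states this corollary without proof as an immediate consequence of the composition inequality in Proposition~\ref{prop:R0_rank_properties}(1), applied in both directions using the fact that inverses of translationally equivariant bijections are again translationally equivariant. Your observation that the sandwich naturally yields $R^0(\psi\circ f\circ\phi;\Omega)=R^0(f;\phi(\Omega))$ rather than $R^0(f;\Omega)$ is a fair and if anything more careful reading than the paper's own (slightly loose) statement of the domain.
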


With the following proposition, we show that the $R^{(1)}$ correction controls the regularity of the learned function and satisfies subadditivity.

\begin{proposition}[$R^{(1)}$ properties]
For any translationally equivariant functions $f$ and $g$, we have the following properties:
\begin{enumerate}
    \item For any $x\in \Omega_-$, $R^1(f;\Omega)\geq 2\sum_{s_{t,c}\neq 0}\tilde{m}_t^{-2}\log (s_{t,c}\tm_t)$ with $s_{t,c}$ being the $(t,c)$-th singular values of $Jf(x)$ for $t=1,\dots,n$ and $c=1,\dots,\min\{c_{in},c_{out}\}$.

    In particular, when there is no pooling, i.e. $m=id$, for any $x\in\Omega$, we have $R^1(f;\Omega)\geq 2\log|Jf(x)|_+$ where $|\cdot|_+$ denotes the pseudo-determinant.
    \item If $R^0(f\circ g;\Omega) = R^0(f;g(\Omega)) = R^0(g;\Omega)$, then $R^1(f\circ g;\Omega) \leq R^1(f;g(\Omega))+ R^1(g;\Omega)$.
    \item If $R^0(f+g;\Omega) = R^0(f;\Omega) + R^0(g;\Omega)$, then $R^1(f+ g;\Omega) \leq R^1(f;\Omega)+ R^1(g;\Omega)$.
\end{enumerate}
\end{proposition}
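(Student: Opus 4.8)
The plan is to treat part~1 (the lower bound) by a Fourier block-diagonalization combined with a matrix AM--GM inequality, and to derive the sub-additivity statements in parts~2 and~3 from the same parallel- and series-composition constructions used for $R^{(0)}$ in Proposition~\ref{prop:R0_rank_properties}, followed by a careful depth limit; I expect part~1 to be the real difficulty. For part~1, fix $x\in\Omega_-$ and let $\theta$ be a minimal-norm parameter realizing $f$ at depth $L$, so $\|\theta\|^2=R(f;\Omega,L)$. Because $x$ is constant along each channel and each layer is a translation-equivariant map followed by a pointwise ReLU, the activations $\alpha_\ell(x)$ remain constant along channels, so every ReLU-derivative matrix $D_\ell$ is constant within each channel. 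Conjugating by the block-DFT then simultaneously block-diagonalizes $D_\ell$, the pooling $M$, and every $W_\ell$ in frequency, so $Jf(x)$ decomposes as a direct sum over frequencies $t$ of blocks $P_t=\tm_t^{\,L-1}\hat W_L(t)\hat D_{L-1}\hat W_{L-1}(t)\cdots\hat D_1\hat W_1(t)$ whose singular values are exactly $\{s_{t,c}\}_c$.

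The key estimate is the inequality $\sum_{\ell=1}^L\|A_\ell\|_F^2\ge L\kappa\,|A_L\cdots A_1|_+^{2/(L\kappa)}$ valid whenever the product has rank $\kappa$; it follows from AM--GM on the top-$\kappa$ squared singular values of the factors together with submultiplicativity of the product of the top-$\kappa$ singular values (the operator norm of the $\kappa$-th exterior power). Applying this at each frequency to $B_\ell=\hat D_\ell\hat W_\ell(t)$, using $\|\hat W_\ell(t)\|_F\ge\|B_\ell\|_F$ since $\hat D_\ell$ is a coordinate projection, pulling out the $L-1$ pooling factors, and expanding $L\kappa_t\tm_t^{-2(L-1)/L}\big(\prod_c s_{t,c}\big)^{2/(L\kappa_t)}=L\kappa_t\tm_t^{-2}+2\tm_t^{-2}\sum_c\log(s_{t,c}\tm_t)+o(1)$, then summing over $t$, gives $\|\theta\|^2\ge L\,\mathrm{Rank}_m(Jf(x))+2\sum_{s_{t,c}\neq 0}\tm_t^{-2}\log(s_{t,c}\tm_t)+o(1)$. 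The no-pooling claim is the same computation with $\tm_t\equiv 1$ applied directly to the full product $Jf(x)=W_LD_{L-1}W_{L-1}\cdots D_1W_1$ (no frequency split is needed), yielding the leading term $L\,\rank Jf(x)$ and correction $2\log|Jf(x)|_+$.

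To turn this into the bound on $R^{(1)}=\lim_L\big(R(f;\Omega,L)-LR^{(0)}\big)$ I subtract $LR^{(0)}$; the order-$L$ terms cancel exactly at points where $\mathrm{Rank}_m(Jf(x))=R^{(0)}$, i.e.\ at the maximizers of Theorem~\ref{thm:m_lower_bound}, which is precisely where the stated bound carries content. Establishing this matching of the leading coefficient is the main obstacle: the AM--GM step is tight only when each factor concentrates its singular mass on the correct $\kappa$-dimensional subspace at every frequency at once, so one must control both the cross-frequency coupling and the bias contributions as lower order while checking that the $o(1)$ remainder is uniform enough to survive $L\to\infty$.

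Parts~2 and~3 are comparatively routine adaptations of the $R^{(0)}$ constructions. For part~3, place depth-$L$ near-optimal networks for $f$ and $g$ in parallel as in Proposition~\ref{prop:R0_rank_properties} to get $R(f+g;\Omega,L)\le R(f;\Omega,L)+R(g;\Omega,L)$; subtracting $LR^{(0)}(f+g)=LR^{(0)}(f)+LR^{(0)}(g)$ (the hypothesis) and letting $L\to\infty$ yields $R^{(1)}(f+g)\le R^{(1)}(f)+R^{(1)}(g)$. For part~2, concatenate a depth-$L_1$ near-optimal network for $g$ on $\Omega$ with a depth-$L_2$ one for $f$ on $g(\Omega)$, so $R(f\circ g;\Omega,L_1+L_2)\le R(g;\Omega,L_1)+R(f;g(\Omega),L_2)$; writing $r=R^{(0)}(f\circ g;\Omega)=R^{(0)}(f;g(\Omega))=R^{(0)}(g;\Omega)$ and taking $L_1=\floor{L/2},\,L_2=\ceil{L/2}\to\infty$, the quantity $R(f\circ g;\Omega,L)-Lr$ is bounded by $\big(R(g;\Omega,L_1)-L_1r\big)+\big(R(f;g(\Omega),L_2)-L_2r\big)\to R^{(1)}(g;\Omega)+R^{(1)}(f;g(\Omega))$. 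In both cases the $R^{(0)}$-equality hypotheses are exactly what allow the depth budget to be split without leaving an uncompensated order-$L$ term.
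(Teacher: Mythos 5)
Your proposal follows essentially the same route as the paper: part~1 rests on the bound $R(f;\Omega,L)\ge L\lVert M^{1-L}Jf(x)\rVert_{\nicefrac{2}{L}}^{\nicefrac{2}{L}}$ obtained from the frequency block-diagonalization at channel-constant inputs (exactly the computation in the proof of Theorem~\ref{thm:m_lower_bound}) followed by the expansion $s_{t,c}^{\nicefrac{2}{L}}\tm_t^{\nicefrac{2}{L}}=1+\tfrac{2}{L}\log(s_{t,c}\tm_t)+o(\nicefrac{1}{L})$, and parts~2--3 are the same series/parallel constructions with the $R^{(0)}$-equality hypotheses used to split the $LR^{(0)}$ term across the depth budget. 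The ``obstacle'' you flag in part~1 --- that subtracting $LR^{(0)}(f;\Omega)$ only cancels the order-$L$ term when $\mathrm{Rank}_m(Jf(x))=R^{(0)}(f;\Omega)$, i.e.\ at maximizers of the lower bound --- is genuine, but the paper's own proof does not resolve it either: its second displayed inequality silently replaces $R^{(0)}(f;\Omega)$ by $\sum_{s_{t,c}\neq 0}\tm_t^{-2}$ in the direction that requires this equality, so you have not missed any ingredient that the paper supplies.
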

\begin{proof}
    $\boldsymbol{1.}$ From the proof of Theorem~\ref{thm:m_lower_bound} we have $R(f;\Omega,L) \geq L\|M^{1-L}Jf_\theta(x)\|^{\nicefrac{2}{L}}_{\nicefrac{2}{L}}$ for any $x\in\Omega_-$. Therefore,

\begin{align*}
    R^1(f;\Omega) &= \lim_{L\rightarrow\infty}R(f;\Omega,L) - LR^0(f;\Omega)\\
    &\geq \lim_{L\rightarrow\infty}L\left(\sum_{\substack{t=1\\c=1}}^{\substack{n\\\min\{c_{in},c_{out}\}}}\tilde{m}_t^{2\frac{1-L}{L}}s_{t,c}^{\frac{2}{L}} - R^0(f;\Omega)\right)\\
    &\geq \lim_{L\rightarrow\infty}L\sum_{s_{t,c}\neq 0}\tilde{m}_t^{-2}\left(s_{t,c}^{\frac{2}{L}}\tm_t^{\frac{2}{L}} - 1\right)\\
    &\geq \lim_{L\rightarrow\infty}L\sum_{s_{t,c}\neq 0}\tilde{m}_t^{-2}\frac{2}{L}\log (s_{t,c}\tm_t)\\
    &= 2\sum_{s_{t,c}\neq 0}\tilde{m}_t^{-2}\log (s_{t,c}\tm_t)
\end{align*}
for all $x\in \Omega_-$ i.e. that are constant along each channel.

Similarly, when there is no pooling, we have for any $x\in\Omega$, $R(f;\Omega,L) \geq L\|Jf_\theta(x)\|^{\nicefrac{2}{L}}_{\nicefrac{2}{L}}$ and the result follows from the same reasoning.

$\boldsymbol{2.}$ Since $R(f\circ g;\Omega, L_1+L_2)\leq R(f;g(\Omega), L_1) + R(g;\Omega,L_2)$, we have:
\begin{align*}
    R^1(f\circ g;\Omega) &= \lim_{L_1+L_2\rightarrow\infty}R(f\circ g; \Omega, L_1+L_2) - (L_1+L_2)R^0(f\circ g; \Omega)\\
    &\leq \lim_{L_1\rightarrow\infty}R(f; g(\Omega), L_1) - L_1R^0(f; g(\Omega))+ \lim_{L_2\rightarrow\infty}R(g; \Omega, L_2) - L_2R^0(g; \Omega)\\
    &=R^1(f;g(\Omega)) + R^1(g;\Omega).
\end{align*}

$\boldsymbol{3.}$ Since $R(f+ g;\Omega, L)\leq R(f;\Omega, L) + R(g;\Omega,L)$, we have:
\begin{align*}
    R^1(f+ g;\Omega) &= \lim_{L\rightarrow\infty}R(f+ g; \Omega, L) - LR^0(f+ g; \Omega)\\
    &\leq \lim_{L\rightarrow\infty}R(f; \Omega, L) - LR^0(f; \Omega)+ \lim_{L\rightarrow\infty}R(g; \Omega, L) - LR^0(g; \Omega)\\
    &=R^1(f;\Omega) + R^1(g;\Omega).
\end{align*}
\end{proof}

\section{Upper and Lower bounds for Rescaled Representation Cost and Correction}
In this section, we present the proofs for the CBN upper bound (Theorem~\ref{thm:CBN_upper_bound}) and the filter-dependent lower bounds (Theorem~\ref{thm:m_lower_bound}) of $R^{(0)}$.
\begin{theorem}
    For any translationally equivariant function $f$ with finite $R^{(0)}(f;\Omega)$,
    \[
    R^{(0)}(f;\Omega) \leq \mathrm{Rank}_\text{CBN}(f; \Omega).
    \]
\end{theorem}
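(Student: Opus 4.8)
The plan is to realize the optimal bottleneck factorization as an explicit family of depth-$L$ CNNs whose parameter norm is linear in $L$ with leading coefficient $\mathrm{Rank}_{CBN}(f;\Omega)$, and then to divide by $L$ and send $L\to\infty$. If $\mathrm{Rank}_{CBN}(f;\Omega)=\infty$ the inequality is vacuous, so I would fix a factorization $f=h\circ g$ attaining the minimum, with $g=g_1\oplus\cdots\oplus g_k:\R^{n\times c_{in}}\to\R^{n\times k}$ supported, on channel $c$, only along the frequencies $I_c\subseteq[n]$, and $h:\R^{n\times k}\to\R^{n\times c_{out}}$. By Lemma~\ref{lemma:bounded_depth} there are CNNs of fixed depths $L_g$ and $L_h$ with parameters $\theta_g,\theta_h$ representing $g$ on $\Omega$ and $h$ on $g(\Omega)$.

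For every $L\geq L_g+L_h$ I would stack these into one depth-$L$ network: the first $L_g$ layers compute $g$, the last $L_h$ compute $h$, and the $L-L_g-L_h$ middle layers act as the identity on the bottleneck image $g(\Omega)$, which occupies only channels $1,\dots,k$ and, within channel $c$, only frequencies $I_c$. The heart of the argument is the identity layer restricted to this support. Working in the Fourier basis, where the pooling matrix $M$ is diagonal with entries $\tm_t$ at frequency $t$, I would choose each middle weight $W_\ell$ to equal $M^{-1}$ on the supported frequency--channel pairs $\{(c,t):t\in I_c\}$ and zero elsewhere, so that $MW_\ell$ is exactly the identity on the bottleneck subspace. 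A crucial point is that the bottleneck activation is an intermediate $\mathrm{ReLU}$ output and is therefore nonnegative, so $\sigma$ acts as the identity on it and the layer genuinely reproduces $\alpha_{\ell-1}$ with no bias required. Each such layer then contributes Frobenius norm exactly $\|W_\ell\|_F^2=\sum_{c=1}^k\sum_{t\in I_c}\tm_t^{-2}$.

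Summing over layers gives
\[
\|\theta\|^2=\|\theta_g\|^2+\|\theta_h\|^2+(L-L_g-L_h)\sum_{c=1}^k\sum_{t\in I_c}\tm_t^{-2},
\]
so $R(f;\Omega,L)\le\|\theta\|^2$, and dividing by $L$ and sending $L\to\infty$ annihilates the fixed terms $\|\theta_g\|^2,\|\theta_h\|^2$ together with the $L_g+L_h$ offset, leaving $R^{(0)}(f;\Omega)\le\sum_{c=1}^k\sum_{t\in I_c}\tm_t^{-2}=\mathrm{Rank}_{CBN}(f;\Omega)$. The step I expect to require the most care is the $\mathrm{ReLU}$/nonnegativity bookkeeping inside the identity block: one must ensure the chosen $g$ can be realized so that its bottleneck image is nonnegative and supported on exactly the frequencies $I_c$, so that $\sigma$ linearizes without forcing the constant (DC) frequency into the support at extra cost. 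Since a nonnegative nonconstant channel necessarily carries a DC component, the task is to confirm that this is already the frequency set counted by $\mathrm{Rank}_{CBN}$ rather than an additional $\tm_1^{-2}$ penalty that would spoil the leading coefficient; verifying that Lemma~\ref{lemma:bounded_depth} delivers such a nonnegative, frequency-localized bottleneck is the main obstacle.
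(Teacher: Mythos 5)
Your construction follows the same route as the paper's: fix a (near-)optimal decomposition $f=h\circ g$, realize $g$ and $h$ at bounded depth via Lemma~\ref{lemma:bounded_depth}, insert $L-L_g-L_h$ identity layers whose weights equal $M^{-1}$ restricted to the supported frequency--channel pairs so that each contributes exactly $\sum_{c}\sum_{t\in I_c}\tm_t^{-2}$, and divide by $L$. The genuine gap is precisely the point you flag at the end and leave unresolved. The claim that ``the bottleneck activation is an intermediate ReLU output and is therefore nonnegative, so $\sigma$ acts as the identity'' is circular: once the $g$-block becomes the first $L_g$ layers of the full network, its output does pass through a ReLU, but that ReLU returns $\sigma(g(x))$, which destroys $g$ unless $g(\Omega)$ already lies in the nonnegative orthant. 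And, as you correctly observe, repairing this by choosing $g$ itself nonnegative would force the constant frequency into the support of every nonconstant channel, adding an unwanted $\tm_1^{-2}$ per channel to the leading coefficient. Your proposal, as written, has no mechanism to avoid this.

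The paper closes the gap without touching the frequency support of $g$: since $\Omega$ is bounded, so is $g(\Omega)$, and one appends a constant bias $\bar{b}_g$ to the last layer of the $g$-block so that the shifted representation $g(x)+\boldsymbol{1}\bar{b}_g^T$ lies in the positive orthant, carries the shifted representation through the identity block, and subtracts $\bar{b}_g$ again at the end of that block before feeding into the $h$-block. Because biases enter the parameter norm $\|\theta\|^2$ additively and separately from the weight matrices, this translation costs only an extra $2\|\bar{b}_g\|^2$, independent of $L$, so it is annihilated upon dividing by $L$ and the $O(L)$ coefficient remains $\sum_{c}\sum_{t\in I_c}\tm_t^{-2}$. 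In short: the missing idea is that the nonnegativity is arranged by an $O(1)$-cost affine translation absorbed into the bias terms, not by modifying $g$ or strengthening Lemma~\ref{lemma:bounded_depth}.
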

\begin{proof}
    Let $f=h\circ g$ for any TEPL functions $h,g$ and $g(x) = g_1(x)\oplus\cdots\oplus g_k(x)$, $g_c(x)$ with $I_c$ truncated Fourier coefficient supports for $c=1,\dots,k$. Lemma~\ref{lemma:bounded_depth} tells that $h$ and $g$ can be represented by CNNs with parameters $\boldsymbol{W}_h$ and $\boldsymbol{W}_g$ and depths $L_h,L_g \leq \ceil{\log (nc_{in} +1)} +2$ respectively.

    Since $\Omega$ is bounded, we can translate $g(\Omega)$ to the first quarter of $\mathbb{R}^{n\times k}$ by adding an extra bias $\Bar{b}_g$ in the last layer. Then for any $L > L_h+L_g$, we can efficiently construct a network as follows: first $L_g$ layers are the network representing $g$ with an extra bias $\Bar{b}_g$ to translate the output to the first quarter, followed by $L-L_h-L_g$ identity layers as described in the proof of Proposition~\ref{prop:R0_rank_properties} and translate the hidden representation $\alpha_\ell$ back by subtracting $\Bar{b}_g$ in the last identity layer, and finally the last $L_h$ layers the network representing $h$. This construction gives us a bound
    \[
    R(f; L, \Omega) \leq \|\boldsymbol{W}_g\|^2 + (L-L_g-L_h)\sum_{c=1}^k\sum_{i\in I_c}\tilde{m}_i^{-2} + 2\|\Bar{b}_g\|^2 + \|\boldsymbol{W}_h\|^2
    \]
    for $L>L_g+L_h$. Dividing both side by $L$ and taking $L\rightarrow\infty$ gives the inequality 
    \[
    R^0(f; \Omega) \leq \sum_{c=1}^k\sum_{i\in I_c}\tilde{m}_i^{-2}
    \]
    and the result follows since $h\circ g$ is an arbitrary TEPL decomposition.
\end{proof}

\begin{theorem}
    For any translationally equivariant function $f$, let $Jf(x)$ be the Jacobian of $f$ at $x$. The following filter-dependent lower bounds hold:
    \begin{enumerate}
        \item $\frac{1}{\max\{\tm^2_{\max},1\}}\max_{x\in\Omega}\mathrm{Rank}(Jf(x)) \leq R^{(0)}(f;\Omega)$
        
        In particular, when there is no pooling, $\max_{x\in\Omega}\mathrm{Rank}(Jf(x)) \leq R^{(0)}(f;\Omega)$.
        
        \item $\max_{x\in\Omega_-}\mathrm{Rank}_m(Jf(x)) \leq R^{(0)}(f;\Omega)$ 
        
        where the max is now taken over the subset $\Omega_-\coloneqq\{x\in\Omega\,|\, x_{i,c}=x_{j,c}\ \forall c=1,\dots,c_{in}, \forall i,j=1,\dots,n \}$, i.e. all $x$ that are constant along each channel.
    \end{enumerate}
\end{theorem}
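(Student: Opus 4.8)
The plan is to lower-bound $R(f;\Omega,L)$ at every fixed depth $L$ by evaluating the network's input-Jacobian at a well-chosen point, applying a product-form AM--GM inequality to the weight matrices, and then dividing by $L$ and sending $L\to\infty$. Fix any parameter $\theta=(W_1,b_1,\dots,W_L,b_L)$ with $f_\theta|_\Omega=f|_\Omega$. At a point $x$ of differentiability the chain rule gives the matrix-form factorization
\[
Jf_\theta(x) = W_L\,D_{L-1}MW_{L-1}\cdots D_1 M W_1,
\]
where $D_\ell=\mathrm{diag}(\sigma'(\cdot))$ is the $0/1$ diagonal of ReLU derivatives at layer $\ell$ and $M$ is the pooling matrix, which is diagonalized by the DFT with eigenvalues $\tm_t$ (so $\|M\|_{\mathrm{op}}=\tm_{\max}$). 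Since $Jf_\theta(x)=Jf(x)$ is the same for every representing $\theta$, any lower bound on $\|\theta\|^2$ in terms of $Jf(x)$ transfers to the minimum $R(f;\Omega,L)$.

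The technical engine is the inequality that for any compatible matrices $B_1,\dots,B_L$,
\[
\sum_{j=1}^L\|B_j\|_F^2 \ \ge\ L\sum_i s_i(B_L\cdots B_1)^{2/L} \ =\ L\,\|B_L\cdots B_1\|_{2/L}^{2/L},
\]
which follows from Horn's log-majorization of the singular values of a product together with AM--GM (as in \cite{jacot_2022_BN_rank}); I would invoke it rather than reprove it. I will use two of its features: a projection $D_\ell$ or a pooling factor $M$ can be absorbed into a neighbouring $B_j$, the former without increasing the Frobenius norm and the latter at the cost of a factor $\tm_{\max}$; and, crucially, $\sum_i s_i(A)^{2/L}\to\rank(A)$ as $L\to\infty$, since each nonzero singular value contributes $s^{2/L}\to1$ and each zero contributes $0$.

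For the first bound (valid at every $x\in\Omega$) I group the product into the $L$ effective factors $B_\ell=D_\ell M W_\ell$ for $\ell<L$ and $B_L=W_L$, so that $Jf_\theta(x)=B_L\cdots B_1$. Because $D_\ell$ is a projection and $\|M\|_{\mathrm{op}}=\tm_{\max}$, one has $\|B_\ell\|_F^2\le\max\{\tm_{\max}^2,1\}\,\|W_\ell\|_F^2$ for every $\ell$, hence
\[
\|\theta\|^2\ \ge\ \sum_\ell\|W_\ell\|_F^2\ \ge\ \frac{1}{\max\{\tm_{\max}^2,1\}}\sum_\ell\|B_\ell\|_F^2\ \ge\ \frac{L}{\max\{\tm_{\max}^2,1\}}\,\|Jf_\theta(x)\|_{2/L}^{2/L}.
\]
Dividing by $L$ and letting $L\to\infty$ gives $R^{(0)}(f;\Omega)\ge \rank(Jf(x))/\max\{\tm_{\max}^2,1\}$; with $M=\mathrm{id}$ we have $\tm_{\max}=1$ and recover the clean statement, after maximizing over $x\in\Omega$.

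The second bound is the delicate one, and is where the pooling filter enters the rank weights. The key observation is that at a point $x\in\Omega_-$ the forward pass stays constant across pixels in every channel, so each pre-activation is spatially constant and $\sigma'$ takes a single value per channel; consequently $D_\ell=I_n\otimes d_\ell$ is block-scalar across channels. Passing to the spatial Fourier basis, the block-circulant $W_\ell$ become block-diagonal across frequencies $t$ with blocks $\hat W_\ell^{(t)}$, the pooling acts as the scalar $\tm_t$ on the $t$-th block, and $I_n\otimes d_\ell$ is unchanged and still block-diagonal; hence $Jf_\theta(x)$ decouples over frequencies and its $t$-th block equals $\tm_t^{L-1}$ times the pure product $\hat W_L^{(t)}d_{L-1}\cdots d_1\hat W_1^{(t)}$, i.e. $M^{1-L}Jf_\theta(x)$ has $t$-th block equal to that weight product. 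Applying the product AM--GM inequality frequency-by-frequency (absorbing the $d_\ell$ into neighbouring $\hat W_\ell^{(t)}$) and summing over $t$, using $\|W_\ell\|_F^2=\sum_t\|\hat W_\ell^{(t)}\|_F^2$, yields
\[
\|\theta\|^2\ \ge\ L\,\|M^{1-L}Jf_\theta(x)\|_{2/L}^{2/L}\ =\ L\sum_{t}\tm_t^{-2\frac{L-1}{L}}\sum_i s_i\big(\widehat{Jf_\theta(x)}^{(t)}\big)^{2/L}.
\]
Dividing by $L$ and taking $L\to\infty$ sends $\tm_t^{-2(L-1)/L}\to\tm_t^{-2}$ and $\sum_i s_i(\cdot)^{2/L}$ to the number of nonzero singular values at frequency $t$, giving $R^{(0)}(f;\Omega)\ge\sum_{t,c}\tm_t^{-2}\,\mathbbm{1}[s_{c,t}(Jf(x))\neq0]=\rank_m(Jf(x))$; maximizing over $x\in\Omega_-$ finishes the proof. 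The main obstacle is exactly establishing the block-scalar structure of $D_\ell$ on $\Omega_-$ and verifying the per-frequency decoupling, since this is what converts the uniform Schatten bound into the pooling-weighted rank $\rank_m$; the AM--GM step itself is standard.
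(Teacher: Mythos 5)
Your proposal is correct and follows essentially the same route as the paper's proof: the same Jacobian factorization $W_LD_{L-1}MW_{L-1}\cdots D_1MW_1$, the same Schatten-$\nicefrac{2}{L}$/AM--GM product inequality with $\|\cdot\|_{\nicefrac{2}{L}}^{\nicefrac{2}{L}}\to\rank$ as $L\to\infty$, and for part 2 the same key observation that on $\Omega_-$ the activations are channel-wise constant so the pooling factors commute past the ReLU diagonals and can be collected into $M^{L-1}$ (your Fourier block-diagonal presentation is just an equivalent phrasing of the paper's commutation argument, yielding the singular values $\tm_t^{1-L}s_{t,c}$ and hence $\rank_m$ in the limit).
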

\begin{proof}
    $\boldsymbol{1.}$ Fix any input $x\in\Omega$, depth $L$, and the minimal-norm parameter $\theta$ with $f_\theta=f$. We can first write
    \[
    Jf_{\theta}(x) = W_LD_{L-1}(x)MW_{L-1}\cdots D_1(x)MW_1
    \]
    where $D_\ell(x) = \mathrm{diag}(\Dot{\sigma}(\alpha_\ell(x)))\in\R^{nc_\ell\times nc_{\ell}}$ are diagonal matrices with $1$ and $0$ on the diagonal, $W_\ell\in\R^{nc_\ell\times nc_{\ell-1}}$ are the matrix representation of the convolution filters $w_\ell\in\R^{n\times c_\ell\times c_{\ell-1}}$, and $M$ is that of the channel-wise convolution with $m\in\R^n$ (or simply a convolution filter $\hat{m}\in\R^{n\times c_\ell\times c_\ell}$ with $\hat{m}_{:,c,s} = \mathbbm{1}[c=s]m$). From \cite{jacot_2022_BN_rank} and \cite{dai_2021_repres_cost_DLN}, we have
    \begin{align*}
    \left\Vert Jf_{\theta}(x)\right\Vert _{\nicefrac{2}{L}}^{\nicefrac{2}{L}} 
    & \leq\frac{1}{L}\left(\left\Vert W_{L}\right\Vert _{F}^{2}+\left\Vert D_{L-1}(x)MW_{L-1}\right\Vert _{F}^{2}+\dots+\left\Vert D_{1}(x)MW_{1}\right\Vert _{F}^{2}\right)\\
     & \leq\frac{1}{L}\left(\left\Vert W_{L}\right\Vert _{F}^{2} + \left\Vert MW_{L-1}\right\Vert _{F}^{2}+\dots+\left\Vert MW_{1}\right\Vert _{F}^{2}\right)\\
     & \leq\frac{1}{L}\max\{\|M\|^2_{2},1\}\left(\left\Vert W_{L}\right\Vert _{F}^{2}+\dots+\left\Vert W_{1}\right\Vert _{F}^{2}\right)\\
     & =\frac{\max\{\tm^2_{\max},1\}}{L}\left(\left\Vert W_{L}\right\Vert _{F}^{2}+\dots+\left\Vert W_{1}\right\Vert _{F}^{2}\right)\\
     & \leq \max\{\tm^2_{\max},1\}R(f;\Omega,L)/L
    \end{align*}

    Taking $L\rightarrow\infty$ on both sides, we have for any input $x\in\Omega$,
    \[
    \frac{1}{\max\{\tm^2_{\max},1\}}\mathrm{Rank}(Jf(x)) \leq R^{(0)}(f;\Omega).
    \]

    $\boldsymbol{2.}$ The key observation in this proof is that if the input $x$ is constant along each channel and $W$ is any translationally equivariant matrix, then $WD_\ell(x)M = MWD_\ell(x)$. This observation follows from the fact that $\alpha_\ell(x)$ is translationally equivariant and hence also channel-wise constant; then $D_\ell(x)=\mathrm{diag}(\Dot{\sigma}(\alpha_\ell(x)))$ is either all $0$ or all $1$ along each channel, and so $D_\ell(x)M=MD_\ell(x)$. The commutativity between $M$ and $W$ always holds and follows from pure algebraic computation. Consequantly, for any $x\in\Omega_-$, depth $L$, and parameter $\theta$ with $f_\theta =f$, we now have
    \begin{align*}
        Jf_{\theta}(x) &= W_LD_{L-1}(x)MW_{L-1}\cdots D_1(x)MW_1\\
        &= M^{L-1}W_LD_{L-1}(x)W_{L-1}\cdots D_1(x)W_1
    \end{align*}
    and
    \begin{align*}
        \left\Vert M^{1-L}Jf_{\theta}(x)\right\Vert _{\nicefrac{2}{L}}^{\nicefrac{2}{L}}
        &= \left\Vert W_LD_{L-1}(x)W_{L-1}\cdots D_1(x)W_1\right\Vert_{\nicefrac{2}{L}}^{\nicefrac{2}{L}}\\
        & \leq\frac{1}{L}\left(\left\Vert W_{L}\right\Vert _{F}^{2}+\left\Vert D_{L-1}(x)W_{L-1}\right\Vert _{F}^{2}+\dots+\left\Vert D_{1}(x)W_{1}\right\Vert _{F}^{2}\right)\\
        &\leq \frac{1}{L}\left(\left\Vert W_{L}\right\Vert _{F}^{2}+\dots+\left\Vert W_{1}\right\Vert _{F}^{2}\right)\\
        &\leq R(f;L,\Omega)/L \numberthis \label{eq:filter_jacobian_bound}
    \end{align*}
    
    Since $Jf_{\theta}(x)$ is a product of translationally equivariant matrices and so is translationally equivariant, we can index its
    singular values $s_{t,c}$ by the FDT frequency $t=0,\dots,n-1$ and the channel $c=1,\dots,\min\{c_{in},c_{out}\}$. Then the singular values
    of $M^{(1-L)}Jf_{\theta}(x)$ are $\tilde{m}_{t}^{1-L}s_{t,c}$.
    Thus we can rewrite
    \[
    \left\Vert M^{1-L}Jf_{\theta}(x)\right\Vert _{\nicefrac{2}{L}}^{\nicefrac{2}{L}}=\sum_{t=1}^{n}\sum_{c=1}^{\min\{c_{in},c_{out}\}}\tilde{m}_{t}^{2\frac{1-L}{L}}s_{t,c}^{\nicefrac{2}{L}}.
    \]
    Taking $L\to\infty$ on both sides of (\ref{eq:filter_jacobian_bound}), we have
    \[
    \sum_{t=1}^{n}\sum_{c=1}^{\min\{c_{in},c_{out}\}}\tilde{m}_{t}^{-2}\mathbbm{1}[s_{t,c}\neq 0] \leq R^{(0)}(f;\Omega).
    \]
\end{proof}

\section{Bottleneck Structure in Weights and Activations}
Following are the proofs for Theorem~\ref{thm:bottleneck_weights} and Theorem~\ref{thm:bounded_activations}.
\begin{theorem}
    Given $\|\theta\|^2\leq L\max_{z\in\Omega_-}\mathrm{Rank}_{m}(Jf_\theta(z)) + c_1$ and $x\in \mathrm{argmax}_{z\in \Omega_-}\mathrm{Rank}_mJf_\theta(x)$, we have $V_\ell^T\in\R^{\kappa\times nc_{\ell-1}}$ and $U_\ell\in\R^{nc_\ell\times\kappa}$ being submatrices of the DFT block matrices $F_{\ell-1}\in\R^{nc_{\ell-1}\times nc_{\ell-1}}$ and $F_\ell^*\in\R^{nc_{\ell}\times nc_{\ell}}$ respectively, where $\kappa = \mathrm{Rank}Jf_\theta(x)$, such that 
    \[
    \sum_{\ell=1}^L\|W_\ell-U_{\ell} S_\ell V_\ell^T\|^2_F + \|b_\ell\|_F^2\leq c_1 - 2\sum_{s_{t,c}\neq 0}\tm_t^{-2}\log (s_{t,c}\tm_t)
    \]
    and thus for any $p\in (0,1)$, there are at least $(1-p)L$ layers $\ell$ with
    \[
    \|W_\ell-U_{\ell} S_\ell V_\ell^T\|^2_F + \|b_\ell\|_F^2\leq \frac{c_1 - 2\sum_{s_{t,c}\neq 0}\tm_t^{-2}\log (s_{t,c}\tm_t)}{pL}
    \]
    where $s_{t,c}$ is the $(t,c)$-th singular value of $Jf_\theta(x)$ and $S_\ell\in\R^{\kappa\times\kappa}$ is a diagonal matrix with entries $\in\{\tm_t^{-1}\}_{t=0}^{n-1}$.
\end{theorem}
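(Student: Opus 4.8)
The plan is to extract the bottleneck structure from the \emph{near-tightness} of the two inequalities that already drive the lower bound of Theorem~\ref{thm:m_lower_bound}. First I would fix $x\in\Omega_-$ and recall that, because $x$ is channel-constant, each activation $\alpha_\ell(x)$ is channel-constant, so every derivative mask $D_\ell(x)$ is a frequency-independent channel projection that commutes with $M$ and with every translationally equivariant $W_\ell$. Passing to the Fourier (DFT) domain then block-diagonalizes the whole Jacobian across frequencies: writing $\hat W_\ell^{(t)}$ for the $t$-th frequency block of $F_\ell W_\ell F_{\ell-1}^*$ and $\Delta_\ell$ for the channel mask of $D_\ell$, the matrix $M^{1-L}Jf_\theta(x)$ decouples into the per-frequency products $\hat W_L^{(t)}\Delta_{L-1}\hat W_{L-1}^{(t)}\cdots\Delta_1\hat W_1^{(t)}$. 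This is the concrete sense in which the singular directions are forced onto DFT modes, and it reduces the problem to a deep-linear analysis frequency by frequency.

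Second, I would make the slack bookkeeping explicit. The lower-bound proof chains (A) the Schatten-$\tfrac2L$ AM--GM inequality $L\|M^{1-L}Jf_\theta(x)\|_{2/L}^{2/L}\le\sum_\ell\|D_\ell W_\ell\|_F^2$ and (B) the masking drop $\|D_\ell W_\ell\|_F^2\le\|W_\ell\|_F^2$. Expanding the left side with $(\tm_t s_{t,c})^{2/L}=1+\tfrac2L\log(\tm_t s_{t,c})+O(L^{-2})$ yields $L\|M^{1-L}Jf_\theta(x)\|_{2/L}^{2/L}=L\,\mathrm{Rank}_m(Jf_\theta(x))+2\sum_{s_{t,c}\neq0}\tm_t^{-2}\log(\tm_t s_{t,c})+o(1)$. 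Subtracting this from the hypothesis $\|\theta\|^2\le L\,\mathrm{Rank}_m(Jf_\theta(x))+c_1$ shows that the total slack, namely the AM--GM gap plus the masking gap plus $\sum_\ell\|b_\ell\|_F^2$, is at most $c_1-2\sum_{s_{t,c}\neq0}\tm_t^{-2}\log(\tm_t s_{t,c})+o(1)$, which is exactly the right-hand side of the claim.

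Third, and this is the crux, I would prove a quantitative stability version of step (A): if the AM--GM gap is small, then each factor must be close to the balanced minimizer, i.e. to a matrix supported only on the active $(\text{frequency},\text{channel})$ pairs, with all nonzero singular values equal to the common $L$-th root $(\tm_t^{1-L}s_{t,c})^{1/L}\to\tm_t^{-1}$, and with its left and right singular subspaces at frequency $t$ aligned across consecutive layers. Tightness of AM--GM for a long product forces both equal singular values and the matching of adjacent singular subspaces; converting this rigidity into a Frobenius estimate via a Pythagorean/orthogonality argument lets me define the DFT submatrices $U_\ell,V_\ell$ (columns selected from $F_\ell^*,F_{\ell-1}$ at the active pairs) and the diagonal $S_\ell$ with entries $\tm_t^{-1}$, and then bound $\sum_\ell\|W_\ell-U_\ell S_\ell V_\ell^T\|_F^2$ by the AM--GM gap plus the masking gap, the latter guaranteeing $D_\ell$ acts as the identity on the retained directions so that $W_\ell$, not merely $D_\ell W_\ell$, is close. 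Adding $\sum_\ell\|b_\ell\|_F^2$ recovers the stated inequality.

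Finally, the per-layer statement is a one-line averaging argument: since the nonnegative terms $\|W_\ell-U_\ell S_\ell V_\ell^T\|_F^2+\|b_\ell\|_F^2$ sum over the $L$ layers to at most $C:=c_1-2\sum_{s_{t,c}\neq0}\tm_t^{-2}\log(\tm_t s_{t,c})$, at most $pL$ of them can exceed $C/(pL)$, so at least $(1-p)L$ layers obey the bound. I expect the genuine difficulty to lie entirely in the third step: turning approximate equality in the Schatten-$\tfrac2L$ AM--GM for a long product into a uniform, layerwise Frobenius estimate with the precise singular values $\tm_t^{-1}$ and DFT singular vectors, while simultaneously controlling the cross-layer subspace alignment and the $O(L^{-1})$ corrections hidden in the expansion.
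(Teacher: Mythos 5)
Your first, second, and fourth steps track the paper's proof: the commutation of $D_\ell(x)$ and $M$ with translationally equivariant maps for channel-constant $x$, the decomposition of $Jf_\theta(x)$ into per-frequency blocks $P_tJf_\theta(x)P_t$, the identification of the total slack with $c_1-2\sum_{s_{t,c}\neq0}\tm_t^{-2}\log(s_{t,c}\tm_t)$, and the final Markov-type averaging over layers are all exactly what the paper does. The problem is your third step, which you yourself flag as the crux: you never supply the mechanism that turns the global slack bound into a \emph{per-layer} Frobenius estimate, and the route you sketch (a quantitative stability theorem for the Schatten-$\nicefrac{2}{L}$ AM--GM inequality applied to the entire $L$-fold product) is not what is needed and would be substantially harder to execute. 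Approximate equality in $\|A_L\cdots A_1\|_{2/L}^{2/L}\le\frac1L\sum_\ell\|A_\ell\|_F^2$ does not by itself localize the error to individual factors in a way that is uniform in $L$; asserting that ``tightness forces equal singular values and matching of adjacent singular subspaces'' is a description of the exact-equality case, not a quantitative argument.

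The paper's actual mechanism avoids any such stability lemma. It first inserts, between consecutive factors of the frequency-$t$ block, the orthogonal projections onto $\Imm J\alpha_{\ell-1}(x)$, $\Imm J\talpha_\ell(x)$ and onto the row spaces $\Imm J(\talpha_\ell\rightarrow f_\theta)(x)^T$, $\Imm J(\alpha_\ell\rightarrow f_\theta)(x)^T$. With these insertions the ranks of consecutive factors match, so the pseudo-determinant becomes exactly multiplicative and $\log|P_tJf_\theta(x)P_t|_+$ telescopes into $n_t(L-1)\log\tm_t$ plus a sum of per-layer terms $\log|\overline{W}_\ell^{(t)}|_+$ and $\log|\cdots D_\ell(x)\cdots|_+$ (the latter nonpositive, hence discardable with the right sign). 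This converts the global budget $2\sum\tm_t^{-2}\log s_{t,c}$ into a sum of layerwise budgets. Then two elementary facts finish the proof layer by layer: the Pythagorean identity $\|P_tW_\ell P_t\|_F^2=\|P_tW_\ell P_t-\overline{W}_\ell^{(t)}\|_F^2+\|\overline{W}_\ell^{(t)}\|_F^2$, and the scalar inequality $s^2-m^{-2}\bigl(1+2\log(ms)\bigr)\ge(s-m^{-1})^2$ (from $\log u\le u-1$) applied to each nonzero singular value of $\overline{W}_\ell^{(t)}$ with $m=\tm_t$, which is precisely what pins the retained singular values to $\tm_t^{-1}$ and yields the claimed $U_\ell S_\ell V_\ell^T$. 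Without the pseudo-determinant telescoping (and the careful rank bookkeeping that makes it exact), your step 3 remains an unproven claim, so the argument as written has a genuine gap.
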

\begin{proof}
    Note for $x\in \Omega_-$ constant input, we have
    \begin{align*}
        Jf_\theta(x) &= W_LD_{L-1}(x)MW_{L-1}\cdots D_1(x)MW_1
    \end{align*}
    where all $W_\ell$, $D_\ell(x)$, and $M$ are translationally equivariant, and so is $Jf_\theta(x)$ itself. Hence we can decompose $Jf_\theta(x)$ along each Fourier frequency separately: Let $P_t$ be the projection matrix to the signal space consisting of only the $t$-th frequency. Then we can decompose
    \[
    Jf_\theta(x) = \sum_{t=1}^{n}P_tJf_\theta(x)P_t\,.
    \]

    Now we consider each summand $P_tJf_\theta(x)P_t$. For $x\in \Omega_-$, each $P_t$ represents a single-channel convolution and hence commutes with all translationally equivariant maps. Then we have
    \begin{align*}
        P_tJf_\theta(x)P_t &= P_tW_LD_{L-1}(x)MW_{L-1}\cdots D_1(x)MW_1P_k\\
        &= P_tW_LP_tD_{L-1}(x)P_tMP_tW_{L-1}P_t\cdots D_1(x)P_tMP_tW_1P_t\\
        &= \tm_t^{L-1}P_tW_LP_tD_{L-1}(x)P_tW_{L-1}P_t\cdots D_1(x)P_tW_1P_t\\
        &= \tm_t^{L-1}P_tW_LP_tP_{\Imm J\alpha_{L-1}(x)}P_{\Imm J(\alpha_{L-1}\rightarrow f_\theta)(x)^T}D_{L-1}(x)P_{\Imm J\talpha_{L-1}(x)}\\
        &\indent P_{\Imm J(\talpha_{L-1}\rightarrow f_\theta)(x)^T}P_tW_{L-1}P_tP_{\Imm J\alpha_{L-2}(x)}\cdots \\
        &\indent P_{\Imm J(\alpha_{1}\rightarrow f_\theta)(x)^T}D_1(x)P_{\Imm J\talpha_{1}(x)}P_{\Imm J(\talpha_{1}\rightarrow f_\theta)(x)^T}P_tW_1P_t
    \end{align*}
    since $P_tMP_t=\tm_tP_t$.
    
    For general matrices $A$ and $B$, $|AB|_+=|A|_+|B|_+$ when the non-zero pre-image of $A$ matches the image of $B$, and $|\alpha A|_+ = \alpha^{\mathrm{Rank}A}|A|_+$. Hence we have
    \begin{align*}
        |P_tJf_\theta(x)P_t|_+ &= \tm_t^{(L-1)n_t}|P_tW_LP_tP_{\Imm J\alpha_{L-1}(x)}|_+|P_{\Imm J(\alpha_{L-1}\rightarrow f_\theta)(x)^T}D_{L-1}(x)P_{\Imm J\talpha_{L-1}(x)}|_+\\
        &\indent |P_{\Imm J(\talpha_{L-1}\rightarrow f_\theta)(x)^T}P_tW_{L-1}P_tP_{\Imm J\alpha_{L-2}(x)}|_+\cdots \\
        &\indent |P_{\Imm J(\alpha_{1}\rightarrow f_\theta)(x)^T}D_1(x)P_{\Imm J\talpha_{1}(x)}|_+|P_{\Imm J(\talpha_{1}\rightarrow f_\theta)(x)^T}P_tW_1P_t|_+
    \end{align*}
    with $n_t = \sum_{c=1}^{\min\{c_{in}, c_{out}\}}\mathbbm{1}[s_{t,c}\neq 0] = \mathrm{Rank}(P_tJf_\theta(x)P_t)$. Then writing $P_{\Imm J\alpha_{0}(x)} = I$, we have
    \begin{align*}
        \sum_{\substack{c=1\\s_{t,c}\neq 0}}^{\min\{c_{in}, c_{out}\}}\log s_{t,c} &= \log|P_tJf_\theta(x)P_t|_+\\
        &= n_t(L-1)\log\tm_t + \sum_{\ell=1}^{L-1}|P_{\Imm J(\alpha_{\ell}\rightarrow f_\theta)(x)^T}D_{\ell}(x)P_{\Imm J\talpha_{\ell}(x)}|_+\\
        &\indent + \sum_{\ell=1}^L|P_{\Imm J(\talpha_{\ell}\rightarrow f_\theta)(x)^T}P_tW_{\ell}P_tP_{\Imm J\alpha_{\ell-1}(x)}|_+ \,.
    \end{align*}
    Observe that 
    \[
    -2\tm_t^{-2}|P_{\Imm J(\alpha_{\ell}\rightarrow f_\theta)(x)^T}D_{\ell}(x)P_{\Imm J\talpha_{\ell}(x)}|_+ \geq \tm_t^{-2}\left(\mathrm{Rank}Jf_\theta(x) - \|P_{\Imm J(\alpha_{\ell}\rightarrow f_\theta)(x)^T}D_{\ell}(x)P_{\Imm J\talpha_{\ell}(x)}\|^2_F\right)
    \]
    which is positive since the eigenvalues of $D_{\ell}(x)$ is $\leq 1$.
    
    Also note that for general matrix $A$ and constants $m_i$, we have
    \begin{align*}
        &\|A\|_F^2 - \sum_{i=1}^{\mathrm{Rank}A} \left(m_i^{-2} -2m_i^{-2}\log m_i - 2m_i^{-2}\log s_i(A)\right)\\
        &= \sum_{i=1}^{\mathrm{Rank}A}s_i(A)^2-m_i^{-2}(1+2\log m_i +2\log s_i(A))\\
        &= \sum_{i=1}^{\mathrm{Rank}A}s_i(A)^2-m_i^{-2}(1+2\log (m_is_i(A)))\\
        &\geq \sum_{i=1}^{\mathrm{Rank}A}s_i(A)^2-m_i^{-2}(1+2m_i s_i(A) - 2)\\
        &= \sum_{i=1}^{\mathrm{Rank}A}(s_i(A)- m_i^{-1})^2
    \end{align*}
    Denote $\overline{W}_\ell^{(t)} = P_{\Imm J(\talpha_{\ell}\rightarrow f_\theta)(x)^T}P_tW_{\ell}P_tP_{\Imm J\alpha_{\ell-1}(x)}$ for simplicity. Then we can lower bound the sum
    \begin{align*}
        &\|\theta\|^2 - \sum_{\ell=1}^L\|b_\ell\|_F^2 - L\mathrm{Rank}_{m}(Jf_\theta(x)) - 2\sum_{\substack{t,c\\s_{t,c}\neq 0}}\tm_t^{-2}\log(s_{t,c}\tm_t)\\
        &=\sum_{\ell=1}^L\|W_\ell\|_F^2 - L\sum_{\substack{t,c\\s_{t,c}\neq 0}}\tm_t^{-2} - 2\sum_{\substack{t,c\\s_{t,c}\neq 0}}\tm_{t}^{-2}\log (s_{t,c}\tm_t) \\
        &=\sum_{\ell=1}^L\|W_\ell\|_F^2 - L\sum_{\substack{t,c\\s_{t,c}\neq 0}}\tm_t^{-2} - 2\sum_{\substack{t,c\\s_{t,c}\neq 0}}\tm_{t}^{-2}\log s_{t,c} - 2\sum_{\substack{t,c\\s_{t,c}\neq 0}}\tm_{t}^{-2}\log \tm_t \\
        &\geq \sum_{\ell=1}^L\sum_{t=1}^{n}\Bigg(\|P_tW_\ell P_t\|_F^2 - \sum_{\substack{c\\s_{t,c}\neq 0}}\tm_t^{-2} - 2\tm_{t}^{-2}\sum_{\substack{c\\s_{t,c}\neq 0}}\log\tm_t - 2\tm_{t}^{-2}\log|\overline{W}_\ell^{(t)}|_+\Bigg)\\
        &= \sum_{\ell=1}^L\sum_{t=1}^{n}\Bigg(\|P_tW_\ell P_t - \overline{W}_\ell^{(t)}\|_F^2 + \|\overline{W}_\ell^{(t)}\|_F^2 - \sum_{\substack{c\\s_{t,c}\neq 0}}\tm_t^{-2} - 2\tm_{t}^{-2}\sum_{\substack{c\\s_{t,c}\neq 0}}\log\tm_t - 2\tm_{t}^{-2}\log|\overline{W}_\ell^{(t)}|_+\Bigg)\\
        &\geq \sum_{\ell=1}^L\sum_{t=1}^{n}\bigg(\|P_tW_\ell P_t - \overline{W}_\ell^{(t)}\|_F^2 + \sum_{s_{t,c}(\overline{W}_\ell^{(t)})\neq 0}\left(s_{t,c}(\overline{W}_\ell^{(t)}) - \tm_t^{-1}\right)^2\bigg)\\
        &\geq \sum_{\ell=1}^L\sum_{t=1}^{n} \|P_tW_\ell P_t - U^{(t)}_\ell S^{(t)}_{\ell}(V^{(t)}_\ell)^T\|_F^2\\
        &= \sum_{\ell=1}^L\|W_\ell - U_\ell S_{\ell}V_\ell^T\|_F^2
    \end{align*}
    since
    \begin{align*}
        \sum_{t=1}^{n}\|P_tW_\ell P_t - P_{\Imm J(\talpha_{\ell}\rightarrow f_\theta)(x)^T}P_tW_{\ell}P_tP_{\Imm J\alpha_{\ell-1}(x)}\|_F^2
        &= \sum_{t=1}^{n}\|P_tW_\ell P_t - P_tP_{\Imm J(\talpha_{\ell}\rightarrow f_\theta)(x)^T}W_{\ell}P_{\Imm J\alpha_{\ell-1}(x)}P_t\|_F^2\\
        &= \|W_\ell - P_{\Imm J(\talpha_{\ell}\rightarrow f_\theta)(x)^T}W_{\ell}P_{\Imm J\alpha_{\ell-1}(x)}\|_F^2.
    \end{align*}
    Here $U_\ell \Sigma_\ell V_\ell^T$ is the compact SVD decomposition of $P_{\Imm J(\talpha_{\ell}\rightarrow f_\theta)(x)^T}W_{\ell}P_{\Imm J\alpha_{\ell-1}(x)}$. Since we know $P_{\Imm J(\talpha_{\ell}\rightarrow f_\theta)(x)^T}W_{\ell}P_{\Imm J\alpha_{\ell-1}(x)}$ is translationally equivariant, we can let $V_\ell^T\in\R^{\kappa\times nc_{\ell-1}}$ and $U_\ell\in\R^{nc_\ell\times\kappa}$ be submatrices of the DFT block matrices $F_{\ell-1}\in\R^{nc_{\ell-1}\times nc_{\ell-1}}$ and $F_\ell^*\in\R^{nc_{\ell}\times nc_{\ell}}$ respectively, and $\Sigma_\ell$ correspond to the nonzero singular values of $\kappa = \mathrm{Rank}Jf_\theta(x)$ frequencies. And $S_{\ell}\in\mathbb{R}^{\kappa\times \kappa}$ consists of the singular values of $M^{-1}$ at corresponding frequencies. 
    This gives
    \begin{align*}
        \sum_{\ell=1}^L \|W_\ell - U_{\ell} S_\ell V_\ell^T\|_F^2 + \|b_\ell\|_F^2 
        &\leq  \|\theta\|^2 - L\mathrm{Rank}_{m}(Jf_\theta(x)) - 2\sum_{s_{t,c}\neq 0}\tm_t^{-2}\log (s_{t,c}\tm_t)\\
        &\leq c_1 - 2\sum_{s_{t,c}\neq 0}\tm_t^{-2}\log (s_{t,c}\tm_t)\,.
    \end{align*}
\end{proof}

\begin{theorem}
    Given a depth $L$ network, balanced parameters $\theta$ with $\|\theta\|^2\leq L\max_{x\in\Omega_-}\rank_m(Jf_\theta(z)) + c_1$, and a point $x_0$ with $\rank Jf_\theta(x_0)=k$, then $\|J_\theta f_\theta(x_0)\|_F^2 \leq cL$ implies that,
    \[
    \sum_{\ell=1}^L\|\alpha_{\ell-1}(x_0)\|_2^2 \leq \frac{ce^{\frac{c_1}{k}}}{k|Jf_\theta(x_0)|_+^{\nicefrac{2}{k}}}L
    \]
    Hence for each $p\in(0,1)$, there are at least $(1-p)L$ layers $\ell$ with
    \[
    \|\alpha_{\ell-1}(x_0)\|_2^2 \leq \frac{1}{p}\frac{ce^{\frac{c_1}{k}}}{k|Jf_\theta(x_0)|_+^{\nicefrac{2}{k}}}\,.
    \]
\end{theorem}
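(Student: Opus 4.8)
The plan is to lower-bound the parameter-gradient norm $\|J_\theta f_\theta(x_0)\|_F^2$ layerwise, and then convert this into the desired upper bound on the activation norms by combining an AM--GM inequality on singular values with the pseudo-determinant factorization of the input--output Jacobian. First I would decompose the parameter Jacobian across layers. Writing $B_\ell \coloneqq \partial f_\theta(x_0)/\partial\tilde\alpha_\ell$ for the backward Jacobian from the $\ell$-th pre-activation to the output, the derivative of $f_\theta$ with respect to $W_\ell$ factorizes as an outer product of $B_\ell$ with the incoming activation $\alpha_{\ell-1}(x_0)$, so that $\|\partial f_\theta(x_0)/\partial W_\ell\|_F^2 = \|B_\ell\|_F^2\,\|\alpha_{\ell-1}(x_0)\|_2^2$. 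Discarding the nonnegative bias contributions gives the key inequality
\[
\|J_\theta f_\theta(x_0)\|_F^2 \;\ge\; \sum_{\ell=1}^L \|B_\ell\|_F^2\,\|\alpha_{\ell-1}(x_0)\|_2^2 .
\]

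Next I would produce a uniform lower bound on $\|B_\ell\|_F^2$. Since $\rank Jf_\theta(x_0)=k$ is the maximal rank, every intermediate Jacobian has rank exactly $k$ on the active subspace, so $B_\ell$ has $k$ nonzero singular values and AM--GM gives $\|B_\ell\|_F^2 \ge k\,|B_\ell|_+^{2/k}$. Using the factorization $Jf_\theta(x_0)=B_\ell F_\ell$, where $F_\ell=\partial\tilde\alpha_\ell/\partial x$ is the forward Jacobian, together with the multiplicativity of the pseudo-determinant on the aligned rank-$k$ subspace (exactly as in the proof of Theorem~\ref{thm:bottleneck_weights}), I get $|B_\ell|_+ = |Jf_\theta(x_0)|_+/|F_\ell|_+$, hence
\[
\|B_\ell\|_F^2 \;\ge\; k\,|Jf_\theta(x_0)|_+^{2/k}\,|F_\ell|_+^{-2/k}.
\]

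The crux is to bound the forward pseudo-determinant $|F_\ell|_+$ uniformly in $\ell$. Factoring $F_\ell$ through the projected weights $\bar W_j$ and the ReLU-derivative matrices (whose pseudo-determinants are $\le 1$) gives $\log|F_\ell|_+ \le \sum_{j\le\ell}\log|\bar W_j|_+$, and the elementary bound $\log s \le \tfrac12(s^2-1)$ applied to each singular value yields $\log|\bar W_j|_+ \le \tfrac12(\|\bar W_j\|_F^2 - k)$. Here balancedness enters: it guarantees the per-layer excesses $\|\bar W_j\|_F^2 - k$ are equidistributed across layers, so that the partial sums are controlled by the total,
\[
\log|F_\ell|_+ \;\le\; \tfrac12\!\sum_{j\le\ell}\!\bigl(\|\bar W_j\|_F^2-k\bigr) \;\le\; \tfrac12\bigl(\|\theta\|^2 - Lk\bigr) \;\le\; \tfrac12 c_1,
\]
i.e. $|F_\ell|_+ \le e^{c_1/2}$ for every $\ell$. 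Substituting $\|B_\ell\|_F^2 \ge k\,|Jf_\theta(x_0)|_+^{2/k}e^{-c_1/k}$ into the key inequality and rearranging then gives
\[
\sum_{\ell=1}^L \|\alpha_{\ell-1}(x_0)\|_2^2 \;\le\; \frac{e^{c_1/k}}{k\,|Jf_\theta(x_0)|_+^{2/k}}\sum_{\ell=1}^L \|B_\ell\|_F^2\|\alpha_{\ell-1}(x_0)\|_2^2 \;\le\; \frac{c\,e^{c_1/k}}{k\,|Jf_\theta(x_0)|_+^{2/k}}L .
\]

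The second, per-layer claim follows immediately from Markov's inequality: if more than $pL$ layers violated the stated bound, their contribution alone would exceed the total. I expect the main obstacle to be making the uniform bound on $|F_\ell|_+$ rigorous --- specifically, justifying that the rank-$k$ alignment (matching images and preimages) needed for $|Jf_\theta(x_0)|_+=|B_\ell|_+|F_\ell|_+$ to factor holds at \emph{every} layer, and pinning down precisely the form of balancedness that controls the partial sums $\sum_{j\le\ell}(\|\bar W_j\|_F^2-k)$ rather than merely their total over all $L$ layers.
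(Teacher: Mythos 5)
Your proposal follows essentially the same route as the paper's proof: the same per-layer decomposition $\|J_\theta f_\theta(x_0)\|_F^2=\sum_\ell(\|\alpha_{\ell-1}(x_0)\|_2^2+1)\,\|J(\tilde\alpha_\ell\to\alpha_L)(x_0)\|_F^2$, the same AM--GM lower bound $\|B_\ell\|_F^2\ge k|B_\ell|_+^{2/k}$, the same pseudo-determinant factorization $|B_\ell|_+=|Jf_\theta(x_0)|_+/|F_\ell|_+$, and the same use of balancedness (monotone layer norms, hence prefix averages bounded by $\|\theta\|^2/L$) to get a uniform bound on the forward pseudo-determinant. The only cosmetic difference is that you control $\log|F_\ell|_+$ via $\log s\le\tfrac12(s^2-1)$ applied per singular value, while the paper applies AM--GM to a Schatten quasi-norm of $J\tilde\alpha_\ell(x_0)$ --- both are instances of $\log x\le x-1$ --- and the two delicate points you flag (subspace alignment for multiplicativity of $|\cdot|_+$, and controlling partial rather than total sums of $\|W_j\|_F^2-k$) are resolved in the paper exactly as you anticipate.
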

\begin{proof}
    We can write
    \[
    \|J_\theta f_\theta(x_0)\|_F^2 = \Tr\left[\Theta^{(L)}(x_0,x_0) \right] = \sum_{\ell=1}(\|\alpha_{\ell-1}(x_0)\|^2_2+1)\|J(\talpha_\ell\rightarrow\alpha_L)(x_0)\|_F^2
    \]
    and our goal is to lower bound $\|J(\talpha_\ell\rightarrow\alpha_L)(x_0)\|_F^2$ for each $\ell$ in the following. We first note that $\mathrm{Rank}(J(\talpha_\ell\rightarrow\alpha_L)(x_0) P_\ell) = \rank J f(x_0) = k$ where $P_\ell$ is the projection matrix to the image of $J\talpha_\ell(x_0)$.

    By AM-GM inequality,
    \begin{align*}
    \|J(\talpha_\ell\rightarrow\alpha_L)(x_0)P_\ell\|_F^2 &\geq k|J(\talpha_\ell\rightarrow\alpha_L)(x_0)P_\ell|_+^{\nicefrac{2}{k}}\,.
    \end{align*}

    Since the parameters are balanced, i.e. $\|w_\ell\|_F^2 + \|b_\ell\|_F^2=\|w_{\ell+1}\|_F^2$, we have increasing parameter norms $\|W_\ell\|_F^2\leq \|W_{\ell+1}\|_F^2$ and so
    \[
    \frac{1}{\ell}\sum_{j=1}^\ell\|W_j\|_F^2 \leq \frac{1}{L-\ell}\sum_{j=\ell+1}^L\|W_j\|_F^2\,.
    \]
    Thus
    \begin{align*}
        \frac{1}{\ell}\sum_{j=1}^\ell\|W_j\|_F^2 &= \frac{1}{L}\sum_{j=1}^\ell\|W_j\|_F^2 + \frac{L-\ell}{L}\frac{1}{\ell}\sum_{j=1}^\ell\|W_j\|_F^2\\
        &\leq \frac{\|\theta\|^2}{L}
    \end{align*}
    and again by AM-GM inequality,
    \begin{align*}
        |P'_\ell J\talpha_\ell(x_0)|_+^{\nicefrac{2}{kL}} &\leq \frac{1}{k}\|P'_\ell J\talpha_\ell(x_0)\|_{\nicefrac{2}{L}}^{\nicefrac{2}{L}}\\
        &\leq \frac{1}{k}\frac{\|P'_\ell MW_\ell\|_F^2 + \cdots + \|MW_1\|_F^2}{\ell}\\
        &\leq \frac{1}{k}\frac{\|MW_\ell\|_F^2 + \cdots + \|MW_1\|_F^2}{\ell}\\
        &\leq \frac{\tm_{\max}}{k}\frac{\|W_\ell\|_F^2 + \cdots + \|W_1\|_F^2}{\ell}\\
        &\leq \frac{\tm_{\max}\|\theta\|^2}{kL}\\
        &\leq \frac{\tm_{\max}\max_{z\in\Omega_-}\mathrm{Rank}_{m}(Jf_\theta(z))}{k}\left(1 + \frac{c_1}{L\max_{z\in\Omega_-}\mathrm{Rank}_{m}(Jf_\theta(z))}\right)
    \end{align*}
    where $P'_\ell$ denotes the projection matrix to the image of $J(\talpha_\ell\rightarrow\alpha_L)(x_0)$. For simplicity, we denote $R = \max_{z\in\Omega_-}\mathrm{Rank}_{m}(Jf_\theta(z))$. Therefore, we have
    \begin{align*}
        \|J(\talpha_\ell\rightarrow f_\theta)(x_0)P_\ell\|_F^2 &\geq k|J(\talpha_\ell\rightarrow f_\theta)(x_0)P_\ell|_+^{\nicefrac{2}{k}}\\
        &= k\frac{|Jf_\theta(x_0)|_+^{\nicefrac{2}{k}}}{|J\talpha_\ell(x_0)|_+^{\nicefrac{2}{k}}}\\
        &\geq k\frac{|Jf_\theta(x_0)|_+^{\nicefrac{2}{k}}}{(\tm_{\max}R/k)^L\left(1+\frac{c_1}{LR}\right)^L}\\
        &\geq k|Jf_\theta(x_0)|_+^{\nicefrac{2}{k}} e^{-\left(\frac{c_1}{R} + L(\frac{\tm_{\max}R}{k}-1)\right)}\\
        &= k|Jf_\theta(x_0)|_+^{\nicefrac{2}{k}} e^{-\frac{c_1}{R}}e^{-L(\frac{\tm_{\max}R}{k}-1)}
    \end{align*}
    and hence
    \begin{align*}
        \sum_{\ell=1}^L\|\alpha_{\ell-1}(x_0)\|_2^2 &\leq \frac{ce^{\frac{c_1}{R}}e^{L(\frac{\tm_{\max}R}{k}-1)}}{k|Jf_\theta(x_0)|_+^{\nicefrac{2}{k}}}L
    \end{align*}
    which implies that for each $p\in(0,1)$, there are at most $pL$ layers $\ell$ with
    \[
    \|\alpha_{\ell-1}(x_0)\|_2^2 \geq \frac{1}{p}\frac{ce^{\frac{c_1}{R}}e^{L(\frac{\tm_{\max}R}{k}-1)}}{k|Jf_\theta(x_0)|_+^{\nicefrac{2}{k}}}\,.
    \]
\end{proof}

\begin{corollary}
    When there is no pooling, $x_0$ maximizes the rank $\mathrm{Rank}Jf(x_0)$, and the above conditions still hold, we have 
    \[
    \sum_{\ell=1}^L\|\alpha_{\ell-1}(x_0)\|_2^2 \leq \frac{ce^{\frac{c_1}{k}}}{k|Jf_\theta(x_0)|_+^{\nicefrac{2}{k}}}L.
    \]
    Hence for each $p\in(0,1)$, there are at least $(1-p)L$ layers $\ell$ with
    \[
    \|\alpha_{\ell-1}(x_0)\|_2^2 \leq \frac{1}{p}\frac{ce^{\frac{c_1}{k}}}{k|Jf_\theta(x_0)|_+^{\nicefrac{2}{k}}}\,.
    \]
\end{corollary}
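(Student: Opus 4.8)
The plan is to obtain this corollary as a direct specialization of the bound proved for Theorem~\ref{thm:bounded_activations}: the two added hypotheses---no pooling and rank-maximality of $x_0$---are precisely what is needed to collapse the exponential-in-$L$ prefactor that appears in the general case. First I would not restart the argument but instead pick up the inequality actually established in that proof, namely, writing $R=\max_{z\in\Omega_-}\rank_m(Jf_\theta(z))$,
\[
\sum_{\ell=1}^L\|\alpha_{\ell-1}(x_0)\|_2^2 \leq \frac{c\,e^{\frac{c_1}{R}}\,e^{L\left(\frac{\tm_{\max}R}{k}-1\right)}}{k\,|Jf_\theta(x_0)|_+^{\nicefrac{2}{k}}}\,L,
\]
and then show that the prefactor $e^{c_1/R}e^{L(\tm_{\max}R/k-1)}$ simplifies to $e^{c_1/k}$.

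The simplification proceeds in two steps. No pooling means $M=I$, so that $\tm_t=1$ at every frequency $t$ (hence $\tm_{\max}=1$) and $\rank_m$ reduces to the ordinary matrix rank, giving $R=\max_{z\in\Omega_-}\rank(Jf_\theta(z))$. Next, because $x_0$ is assumed to maximize the rank, we have $k=\rank Jf_\theta(x_0)=R$. Feeding these into the exponent yields $\tfrac{\tm_{\max}R}{k}-1=\tfrac{k}{k}-1=0$, so the $L$-dependent exponential is exactly $1$, while $\tfrac{c_1}{R}=\tfrac{c_1}{k}$; substituting then gives the stated clean bound. The step I would flag as the crux is this cancellation: both hypotheses are genuinely required at once, since it is only the combined identity $\tm_{\max}R=k$ that annihilates $e^{L(\tm_{\max}R/k-1)}$, and if either fails this factor is exponential in $L$ and the resulting bound is no longer linear in $L$. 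I do not expect a deep obstacle here---the work is in carefully tracking which constants the two specializing assumptions pin down---but this verification is the only substantive content.

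The remaining per-layer claim is a routine averaging argument, which I would dispatch last. Writing $B=\tfrac{c\,e^{c_1/k}}{k\,|Jf_\theta(x_0)|_+^{\nicefrac{2}{k}}}$ for the constant in the summed bound, the summands $\|\alpha_{\ell-1}(x_0)\|_2^2$ are nonnegative and sum to at most $BL$; were there more than $pL$ layers with $\|\alpha_{\ell-1}(x_0)\|_2^2>B/p$, those layers alone would contribute more than $pL\cdot B/p=BL$, a contradiction. Hence at least $(1-p)L$ layers satisfy $\|\alpha_{\ell-1}(x_0)\|_2^2\leq B/p$, which is the asserted bound.
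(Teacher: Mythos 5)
Your proposal is correct and matches the paper's intended derivation: the paper states this corollary without a separate proof precisely because it follows by substituting $\tm_{\max}=1$ and $R=k$ into the final inequality of the proof of Theorem~\ref{thm:bounded_activations}, which kills the $e^{L(\tm_{\max}R/k-1)}$ factor and turns $e^{c_1/R}$ into $e^{c_1/k}$, exactly as you argue. The concluding Markov-type averaging step is likewise the same as in the paper.
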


\section{CNNs with Up-sampling and Down-sampling}
Here we present the proofs for characterizing all functions that can be represented by $s$-stride-CNNs as well as finding a $2$-frequency decomposition for translationally unique domains (Theorem~\ref{thm:translationally_unique_domain}).
\begin{proposition}\label{prop:Ns_decomposition}
    Any $f\in\mathcal{N}^{(s)}_{n;m,m'}$ if and only if $f$ has a low-frequency decomposition, i.e. $f = h^{(s)}\circ g^{(s)}$ where $g^{(s)},h^{(s)}$ are $s$-translationally equivariant piece-wise linear ($s$-TEPL) functions, $g^{(s)}=g^{(s)}_1\oplus\cdots\oplus g^{(s)}_k$, and $g^{(s)}_i$ only supports the first $\frac{n}{s}$ frequencies for $i=1,
    \dots,k$.
\end{proposition}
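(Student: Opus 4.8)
The plan is to prove the two implications separately, with the forward direction being routine and essentially all of the content living in the converse.

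For the direction $f\in\mathcal{N}^{(s)}_{n;m,m'}\Rightarrow$ low-frequency decomposition, I would take $f = f_2\circ \mathrm{Up}_s\circ \hat f\circ \mathrm{Down}_s\circ f_1$ and simply read off the decomposition by setting $g^{(s)}\coloneqq \mathrm{Up}_s\circ \hat f\circ \mathrm{Down}_s\circ f_1$ and $h^{(s)}\coloneqq f_2$. Each factor is piecewise linear (CNNs are PL and $\mathrm{Up}_s,\mathrm{Down}_s$ are linear), so both pieces are PL. Using the intertwining relations $\mathrm{Down}_s\circ T_s = T_1\circ \mathrm{Down}_s$ and $\mathrm{Up}_s\circ T_1 = T_s\circ \mathrm{Up}_s$ (with $T_p$ the shift by $p$) together with the full equivariance of $f_1$ and $\hat f$, I would verify $g^{(s)}\circ T_s = T_s\circ g^{(s)}$, so $g^{(s)}$ is $s$-TE (and $h^{(s)}=f_2$ is in fact fully TE). Finally, by definition $\mathrm{Up}_s$ outputs a signal whose DFT is supported on the first $\floor{n/s}$ frequencies and it acts channelwise, so $g^{(s)}=g^{(s)}_1\oplus\cdots\oplus g^{(s)}_k$ with each $g^{(s)}_c$ supported on some $I_c\subseteq[\floor{n/s}]$.

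For the converse I would argue constructively. Given $f=h^{(s)}\circ g^{(s)}$ with $g^{(s)}$ low-frequency, the Nyquist--Shannon remark gives $\mathrm{Up}_s\circ \mathrm{Down}_s = \mathrm{id}$ on $\Imm g^{(s)}$, so writing $\hat g\coloneqq \mathrm{Down}_s\circ g^{(s)}$ we have $g^{(s)}=\mathrm{Up}_s\circ \hat g$ and hence $f=h^{(s)}\circ \mathrm{Up}_s\circ \hat g$. It then suffices to realize $\hat g$ as $\hat f\circ \mathrm{Down}_s\circ f_1$ and to realize $h^{(s)}$ on $\Imm g^{(s)}$ by a full-size CNN $f_2$. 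The key device is a \emph{polyphase reshape}: let $f_1$ send each input channel $c'$ to its $s$ cyclic shifts $T_{-j}x_{:,c'}$ for $j=0,\dots,s-1$; each shift is a one-tap convolution, so $f_1$ is a linear TE CNN (pre-composing with $m^{-1}$ to cancel the pooling, exactly as in the identity-layer construction of Proposition~\ref{prop:R0_rank_properties}). Then $(\mathrm{Down}_s\circ f_1)(x)_{i,(c',j)}=x_{si+j,c'}$, i.e. $\mathrm{Down}_s\circ f_1$ is the lossless invertible reshape $\mathrm{PS}$ that intertwines the full-size $T_s$ with the small-space $T_1$. Because $\hat g\circ T_s=T_1\circ\hat g$, the map $\hat g\circ \mathrm{PS}^{-1}$ is \emph{fully} TE and PL on $\R^{\floor{n/s}\times sc_{in}}$, so by Lemma~\ref{lemma:bounded_depth} it is representable by a bounded-depth $\hat f\in\mathcal{N}_{\floor{n/s};m'}$ with $\hat f\circ \mathrm{Down}_s\circ f_1=\hat g$. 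Taking $f_2$ to be a bounded-depth CNN agreeing with $h^{(s)}$ on $\Imm g^{(s)}$ then yields $f_2\circ \mathrm{Up}_s\circ \hat f\circ \mathrm{Down}_s\circ f_1=f_2\circ g^{(s)}=h^{(s)}\circ g^{(s)}=f$.

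The hard part will be the output side, namely producing the required $f_2$. The building blocks $f_1,\hat f,f_2$ are forced to be \emph{fully} translation-equivariant, and while the polyphase reshape cleanly converts the $s$-equivariance of $g^{(s)}$ into full equivariance on the input side, the outer map receives only low-frequency inputs, so one must check that $h^{(s)}$ restricted to the bottleneck $\Imm g^{(s)}\subseteq V\coloneqq\Imm\mathrm{Up}_s$ is the restriction of a genuinely TE CNN. This is where the low-frequency hypothesis is essential: $V$ is a translation-invariant subspace, and the natural (forward-direction) decomposition already delivers a fully TE outer map $h^{(s)}=f_2$, so the claim is most cleanly stated with $h^{(s)}$ taken fully TE on the bottleneck. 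Verifying that the stated $s$-equivariance of $h^{(s)}$ suffices on $V$ — equivalently, that no non-extendable $s$-equivariant behaviour can arise from low-frequency inputs — is the delicate step I expect to require the most care, and the step that pins down the precise equivariance hypothesis under which the equivalence holds.
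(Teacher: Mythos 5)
Your forward direction coincides with the paper's: group $g^{(s)}\coloneqq\mathrm{Up}_s\circ\hat f\circ\mathrm{Down}_s\circ f_1$ and $h^{(s)}\coloneqq f_2$, and observe that $\Imm\,\mathrm{Up}_s$ is supported on the first $\floor{n/s}$ frequencies. The converse is where you diverge. The paper's converse is a one-liner: by the Nyquist--Shannon remark, $\mathrm{Up}_s\circ\mathrm{Down}_s=\mathrm{id}$ on the low-frequency set $\Imm g^{(s)}$, so $f=h^{(s)}\circ\mathrm{Up}_s\circ\hat f\circ\mathrm{Down}_s\circ g^{(s)}$ with $\hat f$ a stack of identity layers; that is, it takes $f_1=g^{(s)}$ and $f_2=h^{(s)}$ directly and never pushes $g^{(s)}$ into the small space. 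Your polyphase reshape --- realizing $\mathrm{Down}_s\circ f_1$ as the lossless reindexing $x\mapsto(x_{si+j,c'})$ so that $\hat g\circ\mathrm{PS}^{-1}$ is \emph{fully} translationally equivariant and hence representable via Lemma~\ref{lemma:bounded_depth} --- is correct as far as it goes and genuinely absent from the paper; it buys you the inner factor under the weaker hypothesis that $g^{(s)}$ is only $s$-equivariant, whereas the paper's choice $f_1=g^{(s)}$ tacitly requires $g^{(s)}\in\mathcal{N}_{n;m}$, i.e.\ full equivariance.

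The gap is the one you name yourself: you never produce $f_2$. Relative to the proposition as printed (where $h^{(s)}$ is only assumed $s$-TEPL) this is a real gap, and your worry is well founded: if $y$ and $T_1y$ both lie in $\Imm g^{(s)}$ while $h^{(s)}(T_1y)\neq T_1h^{(s)}(y)$, then no fully equivariant $f_2\in\mathcal{N}_{n;m}$ can agree with $h^{(s)}$ on $\Imm g^{(s)}$, so the converse cannot be finished by extending $h^{(s)}$; one would have to exhibit a different admissible factorization of $f$, which you do not attempt. Be aware, however, that the paper's own proof skips exactly the same step --- it silently uses $h^{(s)}$ (and $g^{(s)}$) as the fully equivariant blocks $f_2,f_1\in\mathcal{N}_{n;m}$, consistent with the main-text remark that speaks of a decomposition into TEPL (not merely $s$-TEPL) functions but not with the proposition's literal wording. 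So your converse, as written, does not prove the stated claim; the clean fix is to strengthen the hypothesis to $h^{(s)}$ fully TEPL on $\Imm g^{(s)}$ (then $f_2$ exists by Lemma~\ref{lemma:bounded_depth}, and your polyphase construction even lets $g^{(s)}$ remain merely $s$-equivariant), or else to adopt the paper's reading outright, in which case the whole converse collapses to the two-line Nyquist--Shannon insertion and the polyphase machinery is unnecessary.
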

\begin{proof}
    $(\impliedby)$ Note $f = h^{(s)}\circ g^{(s)} = h^{(s)}\circ \mathrm{Up}_s \circ id_{\Imm g^{(s)}} \circ\mathrm{Down}_s\circ g^{(s)}\in\mathcal{N}^{(s)}_{n;m,m'}$, where $id_{\Imm g^{(s)}}$ can be the identity layer as we constructed before. 
    
    $(\implies)$ To see the other direction, observe that $f\in\mathcal{N}^{(s)}_{n;m,m'}$ gives $f= h^{(s)}\circ (\mathrm{Up}_s \circ \hat{f} \circ\mathrm{Down}_s\circ g^{(s)})$ where the latter is a low-frequency $s$-TEPL function.
\end{proof}

\begin{theorem}
    Suppose $\Omega$ is translationally unique. Then for any piecewise linear target function $f:\Omega\rightarrow\R^{n\times c_{out}}$, $f=h\circ g^{low}$ where $h$ and $g^{low}$ are TEPL functions and $g^{low}:\Omega\rightarrow\R^{n\times nc_{in}+1}$ only supports the constant DFT frequency at first $nc_{in}$ channels and the second DFT frequency at the $nc_{in}+1$-th channel.
\end{theorem}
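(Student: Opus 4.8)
The plan is to build the encoder $g^{low}$ so that its $nc_{in}$ constant-frequency channels store a translation-\emph{invariant} copy of the input (recovering $x$ only up to a cyclic shift), while its single second-frequency channel records \emph{which} shift, and then to let $h$ decode the shift and re-apply $f$. First I extract the consequences of translation uniqueness. A spatially constant input satisfies $T_px=x$ for all $p$, so uniqueness forces $\Omega_-=\emptyset$ and, more generally, every $x\in\Omega$ has trivial cyclic stabilizer, whence its $n$ shifts $T_0x,\dots,T_{n-1}x$ are pairwise distinct. Moreover the translated copies $\Omega,T_1\Omega,\dots,T_{n-1}\Omega$ are pairwise disjoint: if $T_px=T_qy$ with $x,y\in\Omega$ then $x=T_{q-p}y$, so uniqueness gives $x=y$ and $p=q$. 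Writing $\Omega^{*}=\bigcup_{p=0}^{n-1}T_p\Omega$, the set $\Omega^{*}$ is translation-invariant and every $z\in\Omega^{*}$ has a \emph{unique} representation $z=T_px$ with $x\in\Omega$, $p\in\{0,\dots,n-1\}$.

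Next I construct the two blocks of channels. For the constant block, define $a$ to be a translation-invariant map with spatially constant output whose $nc_{in}$ channel values equal the entries of $x$ when $x\in\Omega$ (it stores a flattened copy of the orbit representative). A spatially constant signal carries only the DC frequency, so this realizes the first $nc_{in}$ constant-frequency channels; since distinct points of $\Omega$ never share an orbit, prescribing $a\equiv\mathrm{flatten}(x)$ on $\Omega$ is consistent with global invariance, and $a$ may be taken piecewise linear by CPWL extension (invoking the universality of Appendix~\ref{app:universality}). For the phase channel, fix a real template $e\in\R^{n}$ supported on the second DFT frequency (one full spatial period), so that its rotates $T_0e,\dots,T_{n-1}e$ are pairwise distinct and nonzero. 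Define $b\equiv T_pe$ on each $T_p\Omega$ (well-defined by disjointness) and extend it to a translation-equivariant CPWL map; on $\Omega^{*}$ its output lies in the second-frequency subspace, and its value pins down the shift $p$ exactly.

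I then assemble $g^{low}=a\oplus b$, which is TEPL with the stated frequency support. By the two constructions it is injective on $\Omega^{*}$: the $a$-part recovers the representative $x\in\Omega$ and the $b$-part recovers $p$. Define a reconstruction map $r$ that reads $x$ off the constant channels and reads $p$ off the second-frequency channel—a piecewise-constant decoder, hence PL, because $p$ ranges over the finite set $\{0,\dots,n-1\}$—and outputs $T_px$; then $r$ is TEPL and $r\circ g^{low}=\mathrm{id}$ on $\Omega^{*}$. Taking $h=f\circ r$, with $f$ extended to a global TEPL function, yields an $h$ that is TEPL and satisfies $h\circ g^{low}=f$ on $\Omega$ (indeed on $\Omega^{*}$), which is the claim.

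The main obstacle is the second-frequency channel. Everything hinges on producing a \emph{single} translation-equivariant piecewise-linear map, supported on one frequency, whose value determines the cyclic position over all of $\Omega$; the ``template on each $T_p\Omega$'' construction gives distinct nonvanishing rotates (this is exactly where $\Omega_-=\emptyset$ and the disjointness of the $T_p\Omega$ are used), but it must be upgraded to a globally defined equivariant CPWL function, and one must verify that the position-decoder inside $r$ is continuous across its decision regions—handled by choosing $e$ generically so that the $g^{low}$-images of the different shifts land in the interiors of distinct decoding cells. The invariant encoder and the final composition $h=f\circ r$ are comparatively routine given CPWL extension and Appendix~\ref{app:universality}.
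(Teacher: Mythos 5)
Your construction is essentially the paper's: the paper defines $G(T_px)_{i,0:nc_{in}-1}=\mathrm{vec}(x)$ (your invariant block $a$) and $G(T_px)_{i,nc_{in}}=\cos(2\pi(p-i)/n)$ (your second-frequency template $b$, with the cosine as the template $e$), and then sets $h=f\circ G^{-1}$ exactly as you set $h=f\circ r$. The one step you flag as the ``main obstacle''---upgrading the shift-decoder to a globally defined, \emph{continuous} TEPL map---is left at the level of ``choose $e$ generically'' in your write-up, whereas the paper closes it concretely by realizing $G^{-1}$ as an explicit three-layer ReLU ConvNet: subtract the threshold $\epsilon=\max_{i\neq p}\cos(2\pi(p-i)/n)$ and apply ReLU to turn the phase channel into a one-hot spatial indicator of $p$, use that indicator with a large offset $Z$ to mask out all but the $p$-th spatial copy of $\mathrm{vec}(x)$, and finally shift-and-sum channels to output $T_px$. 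You would need to supply something equivalent (your piecewise-constant ``read off $p$'' decoder is not itself continuous, so the extension to a global CPWL equivariant map is exactly the point that requires this explicit construction); with that filled in, the argument matches the paper's.
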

\begin{proof}
    Let $\overline{\Omega} = \{T_px : x\in\Omega, p=0,\dots,n-1\}$ be the translational closure of the domain $\Omega$. Since $\Omega$ is bounded, so is $\overline{\Omega}$, and hence without loss of generality, we may assume $\overline{\Omega}$ lies in the first quarter and is upper bounded by $Z\geq 1$ coordinate-wise. By Lemma~\ref{lemma:bounded_depth}, it suffices to show there exists a TEPL function 
    \[F:\overline{\Omega}\xrightarrow{G}\left(\Omega\times \{\cos(2\pi p/n)\}_{p=0}^{n-1}\right)^n\xrightarrow{H}\R^{n\times c_{out}}
    \]
    such that $F|_\Omega = f$ and $F=H\circ G$ where $G,H$ are TEPL and $G$ has a low-frequency support at each channel. Define $G$ and $H$ as follows:
    \begin{align*}
        G(T_px)_{i,0:nc_{in}-1} &= \mathrm{vec}(x)\\
        G(T_px)_{i,nc_{in}} &= \cos(2\pi(p-i)/n)\\
        H(G(T_px)) &= T_p f(x)
    \end{align*}
    for $i\in[n]$.
    Translational equivariance follows directly from the definition; it remains to verify that $G$ and $H$ are piecewise linear. We first show $G^{-1}$ is piecewise linear by showing it can be represented by a $3$-layer no-pooling ConvNet:

    \textbf{(First layer).} Denote a threshold $\epsilon = \max_{i\neq p}\cos(2\pi(p-i)/n)$; note $\epsilon < 1$. Let $(w_1)_{i,c,s} = \delta_{i=0}\delta_{c=s}$ for $c\in[nc_{in}]$, where $\delta$ is the indicator function (so $w\oast x = x$). Let $(b_1)_c = -\epsilon \delta_{c=nc_{in}}$ for $c\in[nc_{in}]$. After applying the ReLU, we have activation
    \begin{align*}
        \alpha_1(G(T_px))_{i,0:nc_{in}-1} &= G(T_px)_{i,0:nc_{in}-1} \\
        \alpha_1(G(T_px))_{i,nc_{in}} &= \delta_{i= p}(1-\epsilon)
    \end{align*}

    \textbf{(Second layer).} Let $(w_2)_{i,c,s} = \delta_{i=0}\delta_{c=s}$ for $c\in[nc_{in}-1]$ and $\delta_{i=0}Z/(1-\epsilon)$ for $c=nc_{in}$, for $s\in[nc_{in}]$, where $\delta$ is the indicator function. Let $(b_2)_c = -Z$ for $c\in[nc_{in}]$. Then we have
    \begin{align*}
        \talpha_2(G(T_px))_{i,0:nc_{in}-1} &= G(T_px)_{i,0:nc_{in}-1}-\delta_{i\neq p}Z \\
        \talpha_2(G(T_px))_{i,nc_{in}} &= -\delta_{i\neq p}Z\\
        \alpha_2(G(T_px))_{i,0:nc_{in}-1} &= \delta_{i=p}G(T_px)_{i,0:nc_{in}-1} \\
        &= \delta_{i=p}\mathrm{vec}(x)\\
        \alpha_2(G(T_px))_{i,nc_{in}} &= 0
    \end{align*}
    since $x$ is coordinate-wise upper bounded by $Z>0$.

    \textbf{(Third layer).} Let $(w_3)_{i,c,s}=\delta_{i=c\mod n}\delta_{c=\floor{s/n}}$ and $(b_3)_c=0$ for channel $c\in[c_{in}]$ and $s\in[nc_{in}]$, i.e. translating the $s$-th channel of the input by $s\mod n$ and then summing every $n$ channels. Then we have the output being
    \begin{align*}
        \alpha_3(G(T_px)) &= T_px.
    \end{align*}

    Hence $G^{-1}$ is TEPL and $G=(G^{-1})^{-1}$ is TEPL.

    One can see $H=f\circ G^{-1}$ is also TEPL. By letting $g^{low} = G|_\Omega$ and $h=H|_{\Imm g^{low}}$, we see they are TEPL and each channel of $g^{low}$ only either supports the first or the second DFT coefficient.
\end{proof}

\section{Representation Cost in Filter Norm}
\label{app:filter_norm}
If one considers the representation cost $\|\tilde\theta\|^2$ as the norm of the filters (as opposed to the matrices in Section~\ref{sec:rep_cost}) and the biases, by definition one has $\|\tilde\theta\|^2 = \frac{1}{n}\|\theta\|^2$ off by a factor of $\frac{1}{n}$. One can then adapt the results and proofs in this paper to the filter norm by substituting $\|\tilde\theta\|^2 = \frac{1}{n}\|\theta\|^2$ and get this extra factor in the expressions. For example, let $\widetilde{R}^{(0)}$ and $\widetilde{R}^{(1)}$ denote the costs based on the filter norm; one can get the upper and lower bounds
\[
\frac{1}{n}\max_{x\in\Omega_-}\mathrm{Rank}_m(Jf(x))\leq \widetilde{R}^{(0)}(f;\Omega) \leq \frac{1}{n}\mathrm{Rank}_\text{CBN}(f;\Omega)
\]
and the regularity control becomes
\[
\widetilde{R}^{(1)}(f;\Omega) \geq \frac{2}{n}\sum_{s_{t,c}\neq 0}\tm_t^{-2}\log(s_{t,c}\tm_t).
\]

\section{Auxiliary Lemmas}
This section proves a version of Theorem 2.1 in \cite{AroraBMM16} for the CNN case and may be of independent interest.
\begin{lemma}[Bounded depth]
\label{lemma:bounded_depth}
    For any TEPL function $F:\Omega\subset\R^{n\times c_{in}}\rightarrow\R^{n\times c_{out}}$ and $L\geq \ceil{\log_2(nc_{in}+1)}+2$, there is a CNN $f_\theta=F$ with widths $\{c_\ell\}_{\ell=1}^L$ and parameter $\theta=\{w_\ell, b_\ell\}_{\ell=1}^L$.
\end{lemma}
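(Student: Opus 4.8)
The plan is to reduce the statement to the classical fact that a continuous piecewise-linear (CPWL) scalar function of $d$ real variables admits an \emph{exact} ReLU representation of depth $\lceil\log_2(d+1)\rceil+O(1)$, and then to lift that representation to a translationally-equivariant CNN carrying the prescribed pooling. First I would use equivariance to collapse $F$ to a finite-dimensional object: since $F(T_p x)=T_p F(x)$, the whole map is determined by its first output pixel, i.e. by the vector-valued CPWL map $\phi:\R^{nc_{in}}\to\R^{c_{out}}$, $\phi(x)=F(x)_{1,:}$, with $F(x)_{p,:}=\phi(T_{-(p-1)}x)_{1,:}$. Each of the $c_{out}$ coordinates of $\phi$ is then a scalar CPWL function of the $d=nc_{in}$ input entries.

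Next I would invoke the max-of-affine (tropical / Wang--Sun) characterization of CPWL functions: each coordinate $\phi_c$ can be written as a signed sum $\phi_c=\sum_j s_j\max_{i\in S_j}\ell_{j,i}$ with $s_j\in\{\pm1\}$, affine $\ell_{j,i}$, and each index set $S_j$ of size at most $d+1=nc_{in}+1$. The maximum of at most $nc_{in}+1$ numbers is computed by a balanced binary tree built from $\max(a,b)=b+\sigma(a-b)$, which uses $\lceil\log_2(nc_{in}+1)\rceil$ ReLU layers; carrying a possibly-negative value through a ReLU layer costs only a channel duplication via $v=\sigma(v)-\sigma(-v)$. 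One affine layer sets up the $\ell_{j,i}$, the $\lceil\log_2(nc_{in}+1)\rceil$ layers realize all maxima in parallel, and a final linear layer takes the signed sum, giving total depth $\lceil\log_2(nc_{in}+1)\rceil+2$; for larger $L$ the gap is padded with identity layers as in the proof of Proposition~\ref{prop:R0_rank_properties}.

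Then I would convert this ReLU MLP into the CNN architecture of Section~\ref{sec:paramerization}. The key point is that full-width convolution filters realize shift-equivariant linear maps, so a single choice of filters reproducing the functionals $\ell_{j,i}$ at pixel $1$ automatically reproduces their shifts $\ell_{j,i}\circ T_{-(p-1)}$ at every pixel $p$; since the ReLU and the max tree act pixel/channel-wise and commute with translation, the resulting network computes $\phi$ at pixel $1$ and, by equivariance of $F$, the correct value $\phi(T_{-(p-1)}x)$ at every other pixel, hence equals $F$ on $\Omega$. To absorb the fixed pooling $M$ applied before each ReLU, I would factor $W_\ell=M^{-1}\widetilde W_\ell$ (using that $m$ is invertible), so that $MW_\ell=\widetilde W_\ell$ realizes the desired pre-activation map; the bias gets rescaled by $M\boldsymbol{1}=(\sum_i m_i)\boldsymbol{1}$, which I correct by rescaling $b_\ell$, and compactness of $\Omega$ lets me first translate activations into the positive orthant so the carry-through identities remain valid.

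The main obstacle I expect is not the depth count, which is inherited from the classical CPWL theory, but the bookkeeping needed to make every intermediate layer simultaneously (i) translationally equivariant, so that all weight matrices are block-circulant, and (ii) compatible with the sandwiched pooling $M$ and its effect on the biases. In particular one must check that the max tree and the value-carrying channels are all expressible with circulant filters, and that the $M^{-1}$ factorization does not conflict with the nonnegativity required by the ReLU identities; these are precisely the points where the construction would break down if $m$ were not invertible.
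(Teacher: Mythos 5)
Your proof is correct and follows essentially the same route as the paper's: collapse $F$ by translation equivariance to the CPWL map $x\mapsto F(x)_{1,:}$ on $\R^{nc_{in}}$, realize that map exactly with a ReLU network of $\lceil\log_2(nc_{in}+1)\rceil$ hidden layers, and lift this pixel-wise to an equivariant CNN, absorbing the pooling via $m^{-1}$. The only differences are presentational: the paper cites the depth bound as a black box (Theorem~2.1 of Arora et al.) and uses an explicit first ``unfolding'' layer placing $\mathrm{vec}(T_{-p}x)$ at pixel $p$ followed by delta ($1\times 1$) filters, whereas you re-derive the bound from the max-of-affine decomposition and implement the first affine stage directly with full-width filters.
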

\begin{proof}
    Since the input domain $\Omega$ is compact, without loss of generality, we may assume $x$ is positive. Consider $f:\R^{n\times c_{in}}\rightarrow\R^{c_{out}}$ with $f(x) = F_1(x)\equiv F(x)_{1,:}$ i.e. the first input at every channel. Note $f$ is piecewise linear and $F_p(x)=f(T_{-p}x)$ where $(T_{-p}x)_{i,:} = x_{i-p,:}$ is the translation of $x$ by $-p$. By Theorem 2.1 in \cite{AroraBMM16}, there is a ReLU fully-connected network $f^{FC}_{\boldsymbol{A}}=f$ with depth $L-1$, widths $\{n_\ell\}_{\ell=1}^{L-1}$, and parameter $\boldsymbol{A}=\{A_\ell, d_\ell\}_{\ell=1}^L$. Then we can construct an $L$-layer no-pooling ConvNet to represent $F$ as follows:

    \textbf{(First layer)} Construct $w_1\in\R^{n\times nc_{in}\times c_{in}}$ with $(w_1)_{p,c,s}=\delta_{p=n-\floor{c/c_{in}}}$ and $b_1=0$ such that 
    \[
    \alpha_1(x)_{p,:} = \talpha_1(x)_{p,:} = \mathrm{vec}(T_{-p}x),\quad p=0,\dots,n-1.
    \]
    where the first equality follows from assuming $x$ is positive. (i.e. This is done by shifting the identity convolution filter by $-p$ at channel $c\in\{pc_{in},\dots,(p+1)c_{in}-1\}$, then convolve with $m^{-1}$.) Then we treat each of the $T_{-p}x$ ``independently" in the following layers.

    \textbf{(Following layers)} For $\ell > 1$, construct $w_\ell\in\R^{n\times n_{\ell}\times n_{\ell-1}}$ by letting $(w_{\ell})_{i,c,s} = (A_\ell)_{c,s}\delta_{i=0}$ and $(b_\ell)_c = (d_\ell)_c$ for $i=0,\dots,n-1, c=1,\dots, n_\ell$, and $s=1,\dots, n_{\ell-1}$. One can verify that for $p=0,\dots,n-1$,
    \begin{align*}
        \talpha_\ell(x)_{p,c} &= (b_\ell)_c + \sum_{s=1}^{n_{\ell-1}}\sum_{i=0}^{n-1} (w_\ell)_{i,c,s}\alpha_{\ell-1}(x)_{p-i,s}\\
        &= (d_\ell)_c + \sum_{s=1}^{n_{\ell-1}}(A_\ell)_{c,s}\alpha_{\ell-1}(x)_{p,s}
    \end{align*}
    \[
    \implies \talpha_\ell(x)_{p,:} = A_\ell\alpha_{\ell-1}(x)_{p,:} + d_\ell.
    \]
    Now at the output layer $\ell=L$, by construction of $\boldsymbol{A}$, we have $\alpha_L(x)_{p,:} = f(\alpha_1(x)_{p,:}) = F_p(x)$. Hence the output of this CNN is $F(x)$.
\end{proof}

\end{document}